\documentclass{article} 
\usepackage{url}
\usepackage{amsmath,amssymb,bm,amsthm}
\usepackage{graphicx} 
\usepackage{subfigure}
\usepackage{algorithm}
\usepackage{algorithmic}
\usepackage{hyperref}
\usepackage[numbers]{natbib}

\def\a{{\bf a}}
\def\b{{\bf b}}
\def\c{{\bf c}}

\def\p{{\bf p}}

\def\s{{\bf s}}

\def\u{{\bf u}}
\def\v{{\bf v}}
\def\w{{\bf w}}
\def\x{{\bf x}}
\def\y{{\bf y}}
\def\z{{\bf z}}

\def\A{{\bf A}}
\def\B{{\bf B}}

\def\G{{\bf G}}
\def\H{{\bf H}}
\def\I{{\bf I}}

\def\0{{\bf 0}}
\def\1{{\bf 1}}
\def\2{{\bf 2}}
\def\3{{\bf 3}}
\def\4{{\bf 4}}
\def\5{{\bf 5}}
\def\6{{\bf 6}}
\def\7{{\bf 7}}
\def\8{{\bf 8}}
\def\9{{\bf 9}}

\def\LM{{\mathcal L}}

\def\XM{{\mathcal X}}

\def\EB{{\mathbb E}}

\def\RB{{\mathbb R}}

\def\balpha { {\bm \alpha}}
\def\bbeta { {\bm \beta}}

\newtheorem{theorem}{Theorem}
\newtheorem{lemma}{Lemma}


\begin{document}
\title{Scalable Stochastic Alternating Direction Method of Multipliers}

\author{Shen-Yi Zhao 
Wu-Jun Li 
Zhi-Hua Zhou \\
Department of Computer Science \\
National Key Laboratory for Novel Software Technology \\
Nanjing University, China
}

\maketitle

\begin{abstract}
Most stochastic ADMM~(alternating direction method of multipliers) methods can only achieve a convergence rate which is slower than $O(1/T)$ on general convex problems, where $T$ is the number of iterations. Hence, these methods are not scalable in terms of convergence rate~(computation cost). There exists only one stochastic method, called \mbox{SA-ADMM}, which can achieve a convergence rate of $O(1/T)$ on general convex problems. However, an extra memory is needed for SA-ADMM to store the historic gradients on all samples, and thus it is not scalable in terms of storage cost. In this paper, we propose a novel method, called \underline{sca}lable \underline{s}tochastic \mbox{ADMM}~(\mbox{SCAS-ADMM}), for large-scale optimization and learning problems. Without the need to store the historic gradients on all samples, \mbox{SCAS-ADMM} can achieve the same convergence rate of $O(1/T)$ as the best stochastic method \mbox{SA-ADMM} and batch ADMM on general convex problems. Experiments on graph-guided fused lasso show that SCAS-ADMM can achieve state-of-the-art performance in real applications.
\end{abstract}

\section{Introduction}
The alternating direction method of multipliers~(\mbox{ADMM})~\cite{DBLP:journals/ftml/BoydPCPE11} is proposed to solve the problems which can be formulated as follows:
\begin{align} \label{eq:admm}
\min_{\x,\y}~&P(\x,\y)=f(\x) + g(\y) \\
s.t.\quad~&\A\x + \B\y = \c, \nonumber
\end{align}
where $f(\cdot)$ and $g(\cdot)$ are convex functions, $\A\in\RB^{l\times p}$ and $\B\in\RB^{l\times q}$ are matrices, $\c\in\RB^{l}$ is a vector, $\x\in\RB^{p}$ and $\y\in\RB^{q}$ are variables to be optimized~(learned). By splitting the objective function $P(\cdot)$ into two parts $f(\cdot)$ and $g(\cdot)$, ADMM provides a flexible framework to handle many optimization problems. For example, by taking $f(\x)$ to be the square loss or logistic loss on the training set, $g(\y)$ to be the \mbox{$L_1$-norm} and the constraint to be $\x - \y = \0$, we can get the well-known lasso formulation~\cite{DBLP:journals/jrssb/TibshiraniRLasso96}. Similarly, we can take more complex constraints than that in lasso to get more complex regularization problems such as the structured sparse regularization problems~\cite{DBLP:conf/icml/Suzuki13,DBLP:conf/icml/ZhongK14}. Compared with other optimization methods such as gradient decent, ADMM has demonstrated better performance in many complex regularization problems~\cite{DBLP:conf/icml/Suzuki13,DBLP:conf/icml/ZhongK14}. Furthermore, ADMM can be easily adapted to solve large-scale distributed problems~\cite{DBLP:journals/ftml/BoydPCPE11}. Hence, \mbox{ADMM} has been widely used in a large variety of areas~\cite{DBLP:journals/ftml/BoydPCPE11}.

Deterministic~(batch) ADMM needs to visit all the samples in each iteration. Existing works have shown that batch \mbox{ADMM} is not efficient enough for big data applications with a large amount of training samples~\cite{DBLP:conf/icml/WangB12,DBLP:conf/icml/OuyangHTG13}. Stochastic~(online) ADMM, which visits only one sample or a mini-batch of samples each time, has recently been proved to achieve better performance than batch ADMM~\cite{DBLP:conf/icml/WangB12,DBLP:conf/icml/OuyangHTG13,DBLP:conf/icml/Suzuki13,DBLP:conf/icml/AzadiS14,DBLP:conf/icml/ZhongK14}.
Hence, stochastic ADMM has become a hot research topic and attracted much attention~\cite{DBLP:conf/icml/AzadiS14,DBLP:conf/icml/ZhongK14}.

Online alternating direction method~(OADM)~\cite{DBLP:conf/icml/WangB12} is the first online ADMM method. There is only regret analysis in OADM, based on which we can find that if OADM is adapted for stochastic settings with finite samples, the convergence rate of \mbox{OADM} is $O(1/\sqrt{T})$ for general convex problems where $f(\x)$ and $g(\y)$ are convex but not necessarily to be strongly convex. Here, $T$ is the number of iterations. Besides OADM, several stochastic ADMM methods have been proposed, including stochastic ADMM~(\mbox{STOC-ADMM})~\cite{DBLP:conf/icml/OuyangHTG13}, regularized dual averaging ADMM~(\mbox{RDA-ADMM})~\cite{DBLP:conf/icml/Suzuki13}, online proximal gradient descent based ADMM~(OPG-ADMM)~\cite{DBLP:conf/icml/Suzuki13}, optimal stochastic ADMM~(\mbox{OS-ADMM})~\cite{DBLP:conf/icml/AzadiS14}, and stochastic average ADMM~(SA-ADMM)~\cite{DBLP:conf/icml/ZhongK14}. STOC-ADMM, RDA-ADMM, OPG-ADMM and OS-ADMM achieve a convergence rate of $O(1/\sqrt{T})$ for general convex problems, worse than batch ADMM that has a convergence rate of $O(1/T)$~\cite{DBLP:journals/siamnum/HeY12}. Different from STOC-ADMM, RDA-ADMM, OPG-ADMM and OS-ADMM, SA-ADMM~\cite{DBLP:conf/icml/ZhongK14} can achieve a convergence rate of $O(1/T)$ for general convex problems by using historic gradients to approximate the full gradients in each iteration. Thus, SA-ADMM is the only one which is scalable in terms of convergence rate~(computation cost). However, SA-ADMM requires an extra memory which is typically very large to store the historic gradients on all samples, making it not scalable in terms of storage cost.

In this paper, we propose a novel method, called \underline{sca}lable \underline{s}tochastic \mbox{ADMM}~(\mbox{SCAS-ADMM}), for large-scale optimization and learning problems. The main contributions of \mbox{SCAS-ADMM} are outlined as follows:
\begin{itemize}
\item SCAS-ADMM achieves the same convergence rate of $O(1/T)$ for general convex problems as the best existing stochastic ADMM method~(SA-ADMM) and batch ADMM. Therefore, SCAS-ADMM is scalable in terms of convergence rate~(computation cost).
\item  Different from SA-ADMM, SCAS-ADMM does not need an extra memory to store the historic gradients on all samples. Therefore, SCAS-ADMM is scalable in terms of memory~(storage) cost.
\item Experimental results on graph-guided fused lasso~\cite{DBLP:journals/bioinformatics/KimSX09} show that SCAS-ADMM can achieve state-of-the-art performance in real applications.
\end{itemize}

\vspace{-0.5cm}
\section{Background}\label{sec:background}
\subsection{Convex and Smooth Functions}
We use $\left\| \a \right\|$ to denote the Euclidean~($L_2$) norm of $\a$. A function $h(\cdot)$ is called $\nu'_h$-Lipschitz continuous if: $\exists \nu'_h > 0, ~\forall \a,\b, \hspace{0.2cm} \left\| h(\b) - h(\a) \right\| \leq \nu'_h \left\| \b - \a \right\|$. Assume $h(\cdot)$ is differentiable, and let $\nabla h(\a)$ denote the gradient of $h(\cdot)$ at $\a$. A function $h(\cdot)$ is called \emph{convex} if: $\forall \a,\b, \hspace{0.2cm} h(\b) \geq h(\a) + [\nabla h(\a)]^T (\b-\a)$. Assume $h(\cdot)$ is convex and differentiable.  $h(\cdot)$ is called $\nu_h$-smooth if: $\exists \nu_h > 0, \forall \a,\b,\hspace{0.2cm} h(\b) \leq h(\a) + [\nabla h(\a)]^T (\b-\a) + \frac{\nu_h}{2} \left\| \b - \a \right\|^2$. This is equivalent to say that $\nabla h(\cdot)$ is $\nu_h$-Lipschitz continuous. Here, $\nu_h$ is called the \emph{Lipschits constant} of $h(\cdot)$. A function $h(\cdot)$ is called \emph{strongly convex} if: $\exists \mu_h >0,~\forall \a,\b, \hspace{0.2cm}  h(\b) \geq h(\a) + [\nabla h(\a)]^T (\b-\a) + \frac{\mu_h}{2} \left\| \b - \a \right\|^2$. A function $h(\cdot)$ is called \emph{general convex} if $h(\cdot)$ is convex but not necessarily to be strongly convex.
%
%

%
\subsection{ADMM}
ADMM solves~(\ref{eq:admm}) based on the augmented Lagrangian function:
\begin{align}\label{eq:lagrangian}
L (\x,\y,\bbeta) = &f(\x) + g(\y) + \bbeta^T(\A\x+\B\y-\c) + \frac{\rho}{2} \left\| \A\x + \B\y - \c \right\|^2,
\end{align}
where $\bbeta$ is a vector of Lagrangian multipliers, and $\rho > 0$ is a penalty parameter.


Just like the Gauss-Seidel method, ADMM iteratively updates the variables in an alternating manner as follows~\cite{DBLP:journals/ftml/BoydPCPE11}:
\begin{align}
\x_{t+1} &= \arg\min_\x L(\x,\y_t,\bbeta_t), \label{eq:updataX} \\
\y_{t+1} &= \arg\min_\y L(\x_{t+1},\y,\bbeta_t), \label{eq:updataY} \\
\bbeta_{t+1} &= \bbeta_t + \rho(\A\x_{t+1} + \B\y_{t+1} - \c), \label{eq:updataBeta}
\end{align}
where $\x_t$, $\y_t$ and $\bbeta_t$ denote the values of $\x$, $\y$ and $\bbeta$ at the $t$th iteration, respectively.

In the regularized risk minimization problem which this paper will focus on, the function $f(\x)$ usually has the following structure:
\begin{align}\label{eq:empiricalLoss}
f(\x) = \frac{1}{n} \sum_{i=1}^n f_i(\x),
\end{align}
where $\x$ denotes the model parameter, $n$ is the number of training samples, and each $f_i(\cdot)$ is the empirical loss caused by the $i$th sample. The function $g(\y)$ is usually a regularization term. For example, $f_i(\x) = \log (1+\exp^{-b_i \a_i^T \x})$ in logistic regression~(\mbox{LR}), and $f_i(\x) = (b_i - \a_i^T \x)^2$ in least square, where $(\a_i,b_i)$ is the $i$th training sample with the class label $b_i$. Taking $g(\y) = \left\| \y \right\|_1$ and the constraint $\y = \x$, we can get the lasso formulation~\cite{DBLP:journals/jrssb/TibshiraniRLasso96}. Similarly, we can get more complex regularization problems by taking more complex constraints like $\y=\A\x$.

Unless otherwise stated, $f(\x)$ of the problem we are trying to solve in this paper is defined in~(\ref{eq:empiricalLoss}). Then~(\ref{eq:updataX}) becomes:
\begin{align}\label{eq:updataXwithBatch}
\x_{t+1} = \arg\min_\x& \{\frac{1}{n} \sum_{i=1}^n f_i(\x) + (\bbeta_t)^T(\A\x+\B\y_t-\c) +\frac{\rho}{2}\left\| \A\x+\B\y_t-\c\right\|^2 \}.
\end{align}
From~(\ref{eq:updataXwithBatch}), it is easy to see that ADMM needs to visit all the $n$ samples in each iteration. Hence, this version of ADMM is also called batch ADMM or deterministic ADMM. Some works~\cite{DBLP:conf/icml/WangB12,DBLP:journals/siamnum/HeY12} have proved that the above batch ADMM has a convergence rate $O(1/T)$ for general convex problems where $f(\x)$ and $g(\y)$ are convex but not necessarily to be strongly convex, where $T$ is the number of iterations.

%
Different from batch ADMM, stochastic~(online) ADMM visits only one sample or a mini-batch of samples in each iteration. Recent works have shown that stochastic ADMM can achieve better performance than batch ADMM to handle large-scale datasets in terms of computation complexity and accuracy~\cite{DBLP:conf/icml/WangB12,DBLP:conf/icml/OuyangHTG13}. The computation of~(\ref{eq:updataY}) and~(\ref{eq:updataBeta}) for both batch ADMM and stochastic ADMM are the same, which can typically be easily completed. Hence, different stochastic ADMM methods mainly focus on proposing different solutions for~(\ref{eq:updataXwithBatch}).

\section{Scalable Stochastic ADMM}\label{sec:SCAS-ADMM}

In this section, we present the details of our SCAS-ADMM, which is scalable in terms of both convergence rate and storage cost. Similar to most existing stochastic ADMM methods which adapt stochastic gradient descent~(SGD) or its variants~\cite{DBLP:conf/nips/Xiao09,DBLP:journals/jmlr/DuchiS09,DBLP:conf/nips/RouxSB12,DBLP:conf/icml/Mairal13,DBLP:journals/jmlr/Shalev-Shwartz013} to solve the problem in~(\ref{eq:updataXwithBatch}), SCAS-ADMM is also inspired by an existing SGD method called stochastic variance reduced gradient~(SVRG)~\cite{DBLP:conf/nips/Johnson013}. But different from SVRG, our SCAS-ADMM can be used to model more complex problems with equality constraints.

In this paper, we assume that $f(\cdot)$ and all the $\{f_i(\cdot)\}$ are $v_f$-smooth. For $g(\cdot)$, we only assume it to be convex, but not necessarily to be smooth or Lipschitz continuous. This is a reasonable assumption for many machine learning problems, such as the lasso with logistic loss or square loss. The proof of the theorems of this paper can be found from the Appendix in the supplementary materials.

 %


%
\subsection{General Convex Problems}
In the general convex problems, $f(\cdot)$ is $v_f$-smooth and general convex but not necessarily to be strongly convex.
\subsubsection{Algorithm}
As in existing stochastic ADMM methods~\cite{DBLP:conf/icml/OuyangHTG13,DBLP:conf/icml/ZhongK14}, the update rules for $\y$ and $\bbeta$ are still the same as those in~(\ref{eq:updataY}) and~(\ref{eq:updataBeta}). We only need to design a new strategy to update $\x$. The algorithm for our SCAS-ADMM is briefly presented in Algorithm~\ref{alg:SCAS-ADMM}. It changes~(\ref{eq:updataXwithBatch}) to be:
\begin{align}\label{eq:updateXscas}
\x_{t+1} & = \frac{\sum_{m=0}^{M_t-1} \w_{m}}{M_t},
\end{align}
where $M_t$ is a parameter denoting the number of iterations in the inner loop, and
\begin{align}\label{eq:updateXscasInner}
\w_0 &= \x_t, \nonumber \\
\w_{m+1} &= \pi_\XM(\w_m - \eta_t[\nabla f_{i_m}(\w_m) - \nabla f_{i_m}(\w_0) + \z_t + \A^T\bbeta_t + \rho \A^T(\A\w_m + \B\y_t -\c)]),
\end{align}
with $i_m$ being an index randomly sampled from $\left\{1,2,\cdots,n\right\}$, $\z_t = \nabla f(\x_t)=\frac{1}{n}\sum_{i=1}^n \nabla f_i(\x_t)$ being the full gradient at $\x_t$, $\XM$ being the domain of $\x$, and $\pi_\XM(\cdot)$ denoting the projection operation onto the domain $\XM$.

\begin{algorithm}[htb]
\caption{SCAS-ADMM for general convex problems}
\label{alg:SCAS-ADMM}
\small
\begin{algorithmic}
   \STATE {\bfseries Initialize:} $(\x_0,\y_0,\bbeta_0)$, a convex set $\XM$
   \FOR{$t=0$ {\bfseries to} $T-1$}
   \STATE Compute $\z_t = \nabla f(\x_t)=\frac{1}{n}\sum_{i=1}^n \nabla f_i(\x_t)$;
   \STATE $\w_0 = \x_t$;
   \STATE $\s = \w_0$;
   \FOR{$m=0$ to $M_t-2$}
   \STATE Randomly select an $i_m$ from $\left\{1,2,\cdots,n\right\}$;
   \STATE $\w_{m+1} = \pi_\XM(\w_m - \eta_t[\nabla f_{i_m}(\w_m) - \nabla f_{i_m}(\w_0) + \z_t+\A^T\bbeta_t+\rho \A^T(\A\w_m+\B\y_t - \c)])$;
   \STATE $\s = \s + \w_{m+1}$;
   \ENDFOR
   \STATE $\x_{t+1} = \frac{1}{M_t} \s$;
   \STATE $\y_{t+1} = \arg\min_{\y} L(\x_{t+1},\y,\bbeta_t)$;
   \STATE $\bbeta_{t+1} = \bbeta_t + \rho(\A\x_{t+1}+\B\y_{t+1}-\c)$;
   \ENDFOR
   \STATE {\bfseries Output: }$\bar{\x}_T = \frac{1}{T}\sum_{t=1}^T \x_t$, $\bar{\y}_T = \frac{1}{T}\sum_{t=1}^T \y_t$
\end{algorithmic}
\end{algorithm}

Compared with SVRG~\cite{DBLP:conf/nips/Johnson013}, the update rule in~(\ref{eq:updateXscasInner}) has an extra vector $\A^T\bbeta_t + \rho \A^T(\A\w_m + \B\y_t -\c)=\rho \A^T\A\w_m + \A^T(\bbeta_t + \rho(\B\y_t-\c))$. If matrix $\A = \0$, which means $\B\y=\c$, $\x$ and $\y$ are independent. Then Algorithm~\ref{alg:SCAS-ADMM} will degenerate to SVRG since we only need to solve the minimization problem about $f(\x)$ and $g(\y)$ separately. We can find that SCAS-ADMM is more general than SVRG since it can solve the minimization problem with more complex equality constraints.


Besides the memory to store $\A$ and $\B$, the memory to store $\z_t$, $\w_0$, $\s$, and $\w_{m+1}$ is only $O(p)$, where $p$ is the number of parameters, i.e., the length of vector $\x$. Furthermore, it only needs some other memory to store $\{\x_t|t=0,1,\cdots,T\}$ and $\{\y_t|t=0,1,\cdots,T\}$. This memory cost is typically small because $T$ is not too large in practice. For example, $T = 15$ is enough for SCAS-ADMM to achieve satisfactory accuracy in our experiments which will be presented in Section~\ref{sec:experiments}. Furthermore, we can also find that SCAS-ADMM does not need to store the historic gradients for all samples which are used in SA-ADMM. Hence, SCAS-ADMM is scalable in terms of storage cost.

\subsubsection{Convergence Analysis}

We call a set $\XM$ is bounded by $D$ if it satisfies: $\underset{\x,\x' \in \XM}{sup} \left\| \x-\x' \right\| \leq D$, where $D$ is a constant.

 Assume we have got $(\x_t,\y_t,\bbeta_t)$, and we define:
\begin{align}
\LM(\x) = &L(\x,\y_t,\bbeta_t).\label{eq:Lx}
\end{align}

We can get the following convergence theorem.
\begin{theorem}\label{theorem:convergenceGeneralConvex}
Assume the optimal solution of~(\ref{eq:lagrangian}) is $(\x_*,\y_*,\bbeta_*)$, $\XM$ is bounded by $D$ and contains $\x_*$, $f(\x)$ and all the functions $\{f_i(\x)\}$ are general convex and $\nu_f$-smooth, and the function $g(\y)$ is convex. We have the following convergence result for Algorithm~\ref{alg:SCAS-ADMM}:
\begin{align}
&\EB \left[f(\bar{\x}_T)+g(\bar{\y}_T)-f(\x_*)-g(\y_*)+\gamma \left\|\A\bar{\x}_T+\B\bar{\y}_T-\c\right\|\right]  \nonumber \\
\leq &\frac{1}{T}\sum_{t=0}^{T-1} \left[\frac{D^2}{2M_t\eta_t} + \eta_t (\nu_\LM^2 D^2 + G_t^2)\right] + \frac{\rho}{2T} \left\| \y_0-\y_* \right\|_\H^2 + \frac{1}{\rho T}(\left\| \bbeta_0 \right\|^2 + \gamma^2),
\end{align}
where $\H = \B^T\B$, $\left\| \x \right\|_\H^2 = \x^T \H \x$, $\gamma > 0$ is a constant, $\nu_\LM$ is the Lipschitz constant of $\LM(\x)$, and $G_t = \left\| \nabla \LM(\x_t)\right\|$.
\end{theorem}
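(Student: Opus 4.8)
The plan is to analyze the algorithm in two nested layers: first control one outer iteration of SCAS-ADMM by treating the inner SVRG loop as an approximate minimizer of $\LM(\x)$, then telescope the outer ADMM recursion in the standard variational-inequality style. For the inner loop, I would fix $t$ and work with the function $\LM(\x) = L(\x,\y_t,\bbeta_t)$, which is convex, $\nu_\LM$-smooth, and whose full gradient at $\w_m$ is exactly the bracketed quantity in~(\ref{eq:updateXscasInner}) \emph{minus} the variance-reduction correction; the update $\w_{m+1} = \pi_\XM(\w_m - \eta_t \tilde\g_m)$ uses the SVRG estimator $\tilde\g_m = \nabla f_{i_m}(\w_m) - \nabla f_{i_m}(\w_0) + \z_t + \A^T\bbeta_t + \rho\A^T(\A\w_m+\B\y_t-\c)$, which is an unbiased estimator of $\nabla\LM(\w_m)$ since $\EB[\nabla f_{i_m}(\w_0)] = \z_t = \nabla f(\x_t)$. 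The key inner estimate I would prove is a bound of the form
\begin{align}
\EB\left[\LM(\x_{t+1}) - \LM(\x_*)\right] \leq \frac{D^2}{2M_t\eta_t} + \eta_t\left(\nu_\LM^2 D^2 + G_t^2\right), \nonumber
\end{align}
obtained by the usual projected-SGD one-step inequality $\|\w_{m+1}-\x_*\|^2 \leq \|\w_m-\x_*\|^2 - 2\eta_t\langle\tilde\g_m,\w_m-\x_*\rangle + \eta_t^2\|\tilde\g_m\|^2$, taking conditional expectations, using convexity of $\LM$ to convert $\langle\nabla\LM(\w_m),\w_m-\x_*\rangle$ into $\LM(\w_m)-\LM(\x_*)$, summing over $m=0,\dots,M_t-1$, telescoping the distance terms (bounded by $D^2$ via the diameter assumption), dividing by $M_t$, and applying Jensen to pull $\frac{1}{M_t}\sum_m \LM(\w_m)$ down to $\LM(\x_{t+1})$.

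The main obstacle is the term $\EB\|\tilde\g_m\|^2$: I need to show it is controlled by $\nu_\LM^2 D^2 + G_t^2$ up to constants. Here the variance-reduction structure is essential. I would split $\tilde\g_m = \bigl(\nabla f_{i_m}(\w_m) - \nabla f_{i_m}(\w_0)\bigr) + \bigl(\z_t + \A^T\bbeta_t + \rho\A^T(\A\w_m+\B\y_t-\c)\bigr)$ and note the second bracket equals $\nabla\LM(\w_m) - \nabla f(\w_m) + \nabla f(\x_t)$; combining, $\tilde\g_m = \nabla\LM(\w_m) + \bigl[\nabla f_{i_m}(\w_m) - \nabla f_{i_m}(\w_0)\bigr] - \bigl[\nabla f(\w_m) - \nabla f(\x_t)\bigr]$. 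The $\nu_f$-smoothness of each $f_i$ and of $f$ bounds the two difference brackets by $\nu_f\|\w_m - \w_0\| \leq \nu_f D$ (since both lie in $\XM$), while $\|\nabla\LM(\w_m)\| \leq \|\nabla\LM(\x_t)\| + \nu_\LM\|\w_m - \x_t\| \leq G_t + \nu_\LM D$ by $\nu_\LM$-smoothness. Squaring, using $\|a+b+c\|^2 \le 3(\|a\|^2+\|b\|^2+\|c\|^2)$ and absorbing constants, yields $\EB\|\tilde\g_m\|^2 = O(\nu_\LM^2 D^2 + G_t^2)$, which is exactly the shape needed (the paper's statement has hidden the absolute constants into the displayed bound, so I would match their normalization). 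Note $\nu_f \le \nu_\LM$ since $\LM$ differs from $f$ by the smooth quadratic $\rho\A^T\A$ term plus linear terms, so the $\nu_f D$ pieces are dominated.

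Finally, for the outer telescoping, I would use the standard ADMM argument: from the $\y$-update optimality and the $\bbeta$-update, derive the inequality relating $L(\x_{t+1},\y_{t+1},\bbeta) - L(\x_*,\y_*,\bbeta)$ to differences of $\frac{\rho}{2}\|\y_t-\y_*\|_\H^2$ and $\frac{1}{2\rho}\|\bbeta_t - \bbeta\|^2$, plus the inner-loop error $\LM(\x_{t+1}) - \LM(\x_*)$ bounded above. Summing over $t = 0,\dots,T-1$, the $\y$ and $\bbeta$ distance terms telescope to give $\frac{\rho}{2T}\|\y_0-\y_*\|_\H^2 + \frac{1}{\rho T}\|\bbeta_0\|^2$ after dividing by $T$, and choosing $\bbeta$ adversarially in a ball of radius $\gamma$ (as in Ouyang et al.) converts the Lagrangian gap into the objective gap plus $\gamma\|\A\bar\x_T + \B\bar\y_T - \c\|$ via Jensen on the convex functions $f, g$ and the norm; the extra $\gamma^2/(\rho T)$ is the price of the adversarial choice. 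Assembling the pieces and taking total expectation gives the claimed bound. I expect the inner-loop variance bound to be the only genuinely delicate step; the outer part is routine once the per-iteration inexactness is quantified.
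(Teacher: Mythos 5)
Your overall architecture matches the paper's: an SVRG-style analysis of the inner loop feeding a per-iteration error $\frac{D^2}{2M_t\eta_t}+\eta_t(\nu_\LM^2D^2+G_t^2)$ into a standard ADMM telescoping argument, followed by the adversarial choice of the dual variable on a ball of radius $\gamma$. The variance bound is also essentially right, though the paper gets the clean constant $2\nu_\LM^2D^2+2G_t^2$ by the slicker observation that $\p_{m,t}=\nabla\LM_{i_m}(\w_m)-\nabla\LM_{i_m}(\w_0)+\nabla\LM(\w_0)$, i.e.\ the SVRG estimator applied to the per-sample augmented Lagrangians $\LM_i$, so a two-term split with $\nu_\LM$-Lipschitzness of each $\nabla\LM_i$ suffices; your three-term split works but loses constants you would have to chase.

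There is, however, a genuine gap in the hand-off between your inner and outer arguments. You propose to deliver $\EB[\LM(\x_{t+1})-\LM(\x_*)]\le\epsilon_t$ by lower-bounding $\langle\nabla\LM(\w_m),\w_m-\x_*\rangle$ via convexity of $\LM$. But the outer recursion (the counterpart of the paper's Lemmas~\ref{lemma:y} and~\ref{lemma:alpha}) needs the $\x$-block contribution in the variational-inequality form
\begin{align}
f(\x_{t+1})-f(\x_*)+(\A^T\balpha_{t+1})^T(\x_{t+1}-\x_*),\qquad \balpha_{t+1}=\bbeta_t+\rho(\A\x_{t+1}+\B\y_t-\c),
\end{align}
because it is exactly the terms $(\A^T\balpha_{t+1})^T(\x_{t+1}-\x)$, $(\B^T\balpha_{t+1})^T(\y_{t+1}-\y)$ and $-(\A\x_{t+1}+\B\y_{t+1}-\c)^T(\balpha_{t+1}-\balpha)$ that cancel to produce $\balpha^T(\A\bar\x_T+\B\bar\y_T-\c)$ and hence the $\gamma\|\A\bar\x_T+\B\bar\y_T-\c\|$ term. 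By the identity in the paper's Lemma~\ref{lemma3.10}, this VI quantity equals $\LM(\x_{t+1})-\LM(\x_*)+\frac{\rho}{2}\|\A(\x_{t+1}-\x_*)\|^2$, so converting your $\LM$-gap bound into the needed form costs an additive $\frac{\rho}{2}\|\A(\x_{t+1}-\x_*)\|^2\le\frac{\rho\lambda_1}{2}D^2$ per outer iteration — an $O(1)$ quantity that does not telescope and whose Ces\`aro average does not vanish, destroying the $O(1/T)$ rate. The paper's Lemma~\ref{lemma:convergenceX} avoids this precisely by applying convexity only to $f$ in the one-step inequality, leaving $(\A^T\b_{m,t})^T(\w_m-\x)$ untouched and linear, so that the stronger bound $\langle\nabla\LM(\w_m),\w_m-\x\rangle\ge f(\w_m)-f(\x)+(\A^T\b_{m,t})^T(\w_m-\x)$ is retained and Jensen can be applied to this (convex in $\w_m$) expression. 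You should restate your inner-loop deliverable in that form; the rest of your plan then goes through.
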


Let $\epsilon_t = \frac{D^2}{2M_t\eta_t} + \eta_t (\nu_\LM^2 D^2 + G_t^2)$. To make $f(\bar{\x}_T)+g(\bar{\y}_T)$ converge to $f(\x_*)+g(\y_*)$, we need to make sure that $\sum_{t=0}^{T-1} \epsilon_t$ is bounded or not too large. By taking $\eta_t = \frac{1}{(\nu_\LM^2D^2 + G_t^2)(t+1)^\delta}$, $M_t = (\nu_\LM^2D^2 + G_t^2)(t+1)^{2\delta}$, we have:
\begin{itemize}
\item If $\delta> 1$, then $\sum_{t=0}^\infty \epsilon_t$ is a constant, which means that $f(\bar{\x}_T)+g(\bar{\y}_T)$ converges to $f(\x_*)+g(\y_*)$ with a convergence rate of $O(\frac{1}{T})$.
\item If $\delta = 1$, then $\sum_{t=0}^{T-1} \epsilon_t = O(\log T)$, which means that $f(\bar{\x}_T)+g(\bar{\y}_T)$ converges to $f(\x_*)+g(\y_*)$ with a convergence rate of $O(\frac{\log T}{T})$.
\end{itemize}
Hence, by choosing $\delta> 1$, we can get a convergence rate $O(\frac{1}{T})$ for our SCAS-ADMM on general convex problems, which is the same as the best convergence rate achieved by existing stochastic ADMM method~(SA-ADMM).



\subsection{Strongly Convex Problems}
In Algorithm~\ref{alg:SCAS-ADMM}, with the increase of $t$, the iteration number of the inner loop $M_t$ needs to be increased and the step size $\eta_t$ needs to be decreased. This might cause large computation when $T$ gets large. We can get a better algorithm when $f(\x)$ in~(\ref{eq:admm}) is strongly convex.

\subsubsection{Algorithm}
When $f(\x)$ is strongly convex, our SCAS-ADMM is briefly presented in Algorithm~\ref{alg:SCAS-ADMM2}. We can find that Algorithm~\ref{alg:SCAS-ADMM2} is similar to Algorithm~\ref{alg:SCAS-ADMM}, but with constant values for $M_t$ and $\eta_t$.
\begin{algorithm}[htb]
\caption{SCAS-ADMM for strongly convex problems}
\label{alg:SCAS-ADMM2}
\small
\begin{algorithmic}
   \STATE {\bfseries Initialize:} $(\x_0,\y_0,\bbeta_0)$, $r = 2\eta - \frac{\eta}{1-\frac{\nu_\LM\eta}{2}}$, $s = \frac{\eta}{1-\frac{\nu_\LM\eta}{2}}$, a convex set $\XM$;
   \FOR{$t=0$ {\bfseries to} $T-1$}
   \STATE Compute $\z_t = \nabla f(\x_t)=\frac{1}{n}\sum_{i=1}^n \nabla f_i(\x_t)$;
   \STATE $\w_0 = \x_t$;
   \STATE $\s = \0$;
   \FOR{$m=0$ to $M-1$}
   \STATE Randomly select an $i_m$ from $\left\{1,2,\cdots,n\right\}$;
   \STATE $\w_{m+1} = \pi_\XM(\w_m - \eta [\nabla f_{i_m}(\w_m) - \nabla f_{i_m}(\w_0) + \z_t+\A^T\bbeta_t+\rho \A^T(\A\w_m+\B\y_t - \c)])$;
   \STATE $\widetilde{\w}_{m+1} = \frac{1}{2\eta}(r \w_m + s \w_{m+1})$;
   \STATE $\s = \s + \widetilde{\w}_{m+1}$;
   \ENDFOR
   \STATE $\x_{t+1} = \frac{1}{M} \s$;
   \STATE $\y_{t+1} = \arg\min_{\y} L(\x_{t+1},\y,\bbeta_t)$;
   \STATE $\bbeta_{t+1} = \bbeta_t + \rho(\A\x_{t+1}+\B\y_{t+1}-\c)$;
   \ENDFOR
   \STATE {\bfseries Output: }$\bar{\x}_T = \frac{1}{T}\sum_{t=1}^T \x_t$, $\bar{\y}_T = \frac{1}{T}\sum_{t=1}^T \y_t$
\end{algorithmic}
\end{algorithm}
\vspace{-0.5cm}

\subsubsection{Convergence Analysis}

\begin{theorem}\label{theorem:convergenceStronglyConvex}
Assume the optimal solution of~(\ref{eq:lagrangian}) is $(\x_*,\y_*,\bbeta_*)$, all the functions $\{f_i(\x)\}$ are general convex and $\nu_f$-smooth, $f(\x)$ is strongly convex and $\nu_f$-smooth, and $g(\y)$ is convex.  We have the following result:
\begin{align}
&\EB \left[f(\bar{\x}_T)+g(\bar{\y}_T)-f(\x_*)-g(\y_*)+\gamma \left\|\A\bar{\x}_T+\B\bar{\y}_T-\c\right\|\right]  \nonumber \\
\leq &\frac{\mu_f}{4T} \left\| \x_0 - \x_* \right\|^2 + \frac{\rho}{2T} \left\| \y_0-\y_* \right\|_\H^2 + \frac{1}{\rho T}(\left\| \bbeta_0 \right\|^2 + \gamma^2),
\end{align}
where $\H = \B^T\B$, and $\gamma > 0$ is a constant.
\end{theorem}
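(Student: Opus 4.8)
The plan is to mirror the analysis of Theorem~\ref{theorem:convergenceGeneralConvex}, but exploit strong convexity of $f$ (hence of $\LM(\x) = L(\x,\y_t,\bbeta_t)$, since adding the linear and quadratic augmentation terms preserves the modulus $\mu_f$) to replace the decaying schedule $\{\eta_t, M_t\}$ by constants $\eta$ and $M$. First I would isolate the inner-loop analysis: treat the $t$-th outer iteration as one run of an SVRG-type solver applied to $\LM(\cdot)$ over the convex set $\XM$, with snapshot $\w_0 = \x_t$. The update in Algorithm~\ref{alg:SCAS-ADMM2} uses the variance-reduced gradient estimator $\nabla f_{i_m}(\w_m) - \nabla f_{i_m}(\w_0) + \z_t + \A^T\bbeta_t + \rho\A^T(\A\w_m + \B\y_t - \c)$, whose expectation is exactly $\nabla\LM(\w_m)$ and whose second moment is controlled, as in the SVRG argument, by $\nu_f$-smoothness of the $f_i$'s together with the bound $\LM(\w_m) - \LM(\x_*^{(t)}) $, where $\x_*^{(t)} = \arg\min_{\x\in\XM}\LM(\x)$. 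The weighting $\widetilde{\w}_{m+1} = \frac{1}{2\eta}(r\w_m + s\w_{m+1})$ with $r = 2\eta - s$, $s = \frac{\eta}{1 - \nu_\LM\eta/2}$ is designed precisely so that $\x_{t+1} = \frac{1}{M}\sum_m \widetilde{\w}_{m+1}$ is a convex combination of the iterates for which the telescoping of $\|\w_{m+1} - \x_*^{(t)}\|^2$ and the smoothness inequality combine cleanly; carrying this out should yield, after taking expectations over the inner randomness, an inequality of the form
\begin{align}
\EB\!\left[\LM(\x_{t+1}) - \LM(\x_*^{(t)})\right] \leq \frac{c}{M}\,\EB\!\left[\|\x_t - \x_*^{(t)}\|^2\right] \leq \frac{2c}{\mu_f M}\,\EB\!\left[\LM(\x_t) - \LM(\x_*^{(t)})\right], \nonumber
\end{align}
where the second step is strong convexity and $c$ depends on $\eta$ and $\nu_\LM$. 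Choosing $M$ and $\eta$ so that $\frac{2c}{\mu_f M} \le \frac{1}{2}$ (say) gives geometric contraction within each outer step, and in particular lets me bound $\EB[\LM(\x_{t+1}) - \LM(\x_*^{(t)})]$ — and hence, via strong convexity again, $\EB\|\x_{t+1} - \x_*^{(t)}\|^2$ — by something like $\frac{\mu_f}{4}\|\x_t - \x_*\|^2$ scaled appropriately, which is the source of the leading $\frac{\mu_f}{4T}\|\x_0 - \x_*\|^2$ term in the statement.

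Next I would assemble the outer loop exactly as in the general-convex proof. Using the optimality of $\x_{t+1}$ as (an approximate) minimizer of $\LM$, the update $\y_{t+1} = \arg\min_\y L(\x_{t+1},\y,\bbeta_t)$, and the dual update $\bbeta_{t+1} = \bbeta_t + \rho(\A\x_{t+1} + \B\y_{t+1} - \c)$, one derives the standard ADMM one-step inequality: for any feasible $(\x,\y)$ with $\A\x + \B\y = \c$,
\begin{align}
P(\x_{t+1},\y_{t+1}) - P(\x,\y) + \bbeta^T(\A\x_{t+1}+\B\y_{t+1}-\c) \leq \tfrac{\rho}{2}\big(\|\y_t - \y\|_\H^2 - \|\y_{t+1}-\y\|_\H^2\big) + \tfrac{1}{2\rho}\big(\|\bbeta_t\|^2 - \|\bbeta_{t+1}\|^2\big) + \Delta_t, \nonumber
\end{align}
where $\Delta_t$ is the inexactness term coming from the fact that $\x_{t+1}$ only approximately minimizes $\LM$; $\Delta_t$ is controlled by $\EB[\LM(\x_{t+1}) - \LM(\x_*^{(t)})]$ and therefore, by the inner-loop contraction above, by a geometrically decreasing multiple of $\|\x_0 - \x_*\|^2$ whose sum over $t$ telescopes into the $\frac{\mu_f}{4T}\|\x_0-\x_*\|^2$ term. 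Summing over $t = 0,\dots,T-1$, telescoping the $\|\y\|_\H^2$ and $\|\bbeta\|^2$ differences, applying Jensen's inequality to pull out $\bar\x_T$ and $\bar\y_T$, and then handling the constraint-violation penalty $\gamma\|\A\bar\x_T + \B\bar\y_T - \c\|$ by the usual trick of choosing $\bbeta$ of norm $\le\gamma$ adversarially (which contributes the $\frac{\gamma^2}{\rho T}$ term and the $\frac{\|\bbeta_0\|^2}{\rho T}$ from $\bbeta_0 = 0$ or a fixed start), delivers the claimed bound. Taking expectations throughout and using the tower property over the inner-loop and outer-loop randomness finishes it.

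The main obstacle I anticipate is the inner-loop contraction estimate: getting the SVRG-style variance-reduction bound to produce a genuinely $M$-independent-of-$t$ contraction factor requires that the Lipschitz constant $\nu_\LM$ of $\nabla\LM$ not grow with $t$, which is fine here since $\nabla\LM(\x) = \nabla f(\x) + \A^T\bbeta_t + \rho\A^T(\A\x + \B\y_t - \c)$ has Hessian $\nabla^2 f + \rho\A^T\A$ independent of $t$; but one must still verify that the reference point $\x_*^{(t)}$, the constrained minimizer of $\LM$ over $\XM$, can be related back to the global optimum $\x_*$ cleanly enough that the telescoping across outer iterations does not accumulate error. This is where the precise algebra of the $r,s$ weights matters, and where I would spend the most care — everything else is bookkeeping analogous to the proof of Theorem~\ref{theorem:convergenceGeneralConvex}.
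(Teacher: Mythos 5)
Your overall architecture (inner-loop variance-reduced analysis, then the standard ADMM one-step inequalities for $\y$ and $\bbeta$, Jensen's inequality, and the adversarial choice of $\balpha$ with norm $\gamma$) matches the paper's, but the core inner-loop estimate you propose is organized around the wrong reference point, and that is exactly where the argument breaks. You propose a geometric contraction $\EB[\LM(\x_{t+1}) - \LM(\x_*^{(t)})] \le \theta\,\EB[\LM(\x_t) - \LM(\x_*^{(t)})]$ toward the per-iteration minimizer $\x_*^{(t)} = \arg\min_{\x\in\XM}\LM(\x)$, and then treat $\x_{t+1}$ as an inexact minimizer whose error $\Delta_t$ ``telescopes into $\frac{\mu_f}{4T}\|\x_0-\x_*\|^2$.'' But $\LM(\cdot)=L(\cdot,\y_t,\bbeta_t)$, and hence $\x_*^{(t)}$, changes at every outer iteration, so the quantities $\EB[\LM(\x_t)-\LM(\x_*^{(t)})]$ do not contract across $t$: each outer step restarts the inner solver against a \emph{new} objective, and nothing relates $\LM$ at iteration $t+1$ back to $\LM$ at iteration $t$. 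Consequently the claim that $\sum_t \Delta_t$ collapses to a single $\frac{\mu_f}{4T}\|\x_0-\x_*\|^2$ term is unsupported. You flag this yourself as the ``main obstacle,'' but it is not residual bookkeeping --- it is the substance of the theorem.

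The paper's proof avoids the moving target entirely. Its Lemma~\ref{lemma16} establishes, for every \emph{fixed} $\x$, the one-step bound
\begin{align*}
\EB\left[f(\x_{t+1}) - f(\x) + (\A^T\balpha_{t+1})^T(\x_{t+1}-\x)\right] \le \frac{\mu_f}{4}\left(\left\|\x_t-\x\right\|^2 - \EB\left\|\x_{t+1}-\x\right\|^2\right),
\end{align*}
obtained by running the per-step inequality of Lemma~\ref{lemma3.9} against that same fixed $\x$, injecting strong convexity of $f$ and of $\LM$, and using the weights $r,s$ so that $r D_m + s D_{m+1}$ combines by convexity into $2\eta \widetilde{D}_{m+1}$ with $\widetilde{\w}_{m+1}=\frac{1}{2\eta}(r\w_m+s\w_{m+1})$ --- this, not a telescoping of distances to $\x_*^{(t)}$, is what the $r,s$ weighting is for. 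Because the reference point is fixed, taking $\x=\x_*$ makes the right-hand sides telescope across $t=0,\dots,T-1$ into $\frac{\mu_f}{4}\|\x_0-\x_*\|^2$, which after averaging gives the leading term of the theorem directly, with no inexactness accounting needed. To salvage your route you would need an explicit error-propagation lemma quantifying how the drift of $\x_*^{(t)}$ interacts with the $\y$ and $\bbeta$ updates; the paper's direct bound on the variational-inequality term against a fixed comparator is the cleaner path.
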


In this case, we can set $M$ and $\eta$ to be constants. Please note that in the proof of Theorem~\ref{theorem:convergenceStronglyConvex}, $M$ and $\eta$ need to satisfy the following conditions: $\eta - \frac{\nu_\LM \eta^2}{2} > 0, (4\nu_\LM^2 + \frac{\mu_f \nu_{\LM}}{2}) \eta + \rho \lambda_1 \leq \mu_{\LM},  \frac{\alpha}{2M\eta}+\frac{2\nu_\LM^2\eta}{2-v_L\eta} \leq \frac{\mu_f}{4}$,
where $\lambda_1$ denotes the maximum eigenvalue of $\A^T\A$, and $\alpha = 1 - \frac{\rho \lambda_1 s}{2} - \frac{\mu_f s}{4}$.
Different from Algorithm~\ref{alg:SCAS-ADMM}, we do not need the convex set $\XM$ in Algorithm~\ref{alg:SCAS-ADMM2} to be bounded or we do not even need such a set for unconstrained problems.

\subsection{Comparison to Related Methods}
We compare our SCAS-ADMM to other stochastic ADMM methods in terms of three key factors: penalty term linearization, convergence rate on general convex problems and memory cost. The matrix inversion $(\frac{1}{\eta_t}\I + \rho\A^T\A)^{-1}$ can be avoided by linearizing the penalty term $\frac{\rho}{2} \left\| \A\x + \B\y - \c \right\|^2$~\cite{DBLP:conf/icml/ZhongK14}. Hence, penalty term linearization can be used to decrease computation cost. The comparison results are summarized in Table~\ref{table:relatedWork}, where SA-IU-ADMM is a variant of SA-ADMM with penalty term linearization. Please note that $\A\in\RB^{l\times p}$, $\B\in\RB^{l\times q}$, $\x\in\RB^{p}$, $\y\in\RB^{q}$, $\c\in\RB^{l}$, $p$ is the number of parameters to learn, and $n$ is the number of training samples.

It is easy to see that only SCAS-ADMM can achieve the best performance in terms of both convergence rate and memory cost. Other methods either achieve only sub-optimal convergence rate, or need more memory than SCAS-ADMM. In particular, SA-ADMM and SA-IU-ADMM need an extra memory as large as $O(np)$ to store the historic gradients for all samples. Typically, $n$ is very large in big data applications. Furthermore, SCAS-ADMM can also avoid the matrix inversion by linearizing the penalty term. Hence, SCAS-ADMM does be salable in terms of both computation cost and memory cost.
\vspace{-0.5cm}
\begin{table*}[htb]
\caption{Comparison to related methods} \label{table:relatedWork}
\small
\begin{center}
\begin{tabular}{|c|c|c|c|}
  \hline
 Method & Penalty term linearization? &Convergence rate& Memory cost\\ \hline \hline
  OADM~\cite{DBLP:conf/icml/WangB12} & NO & $O(1/\sqrt{T})$ & $O(lp+lq)$ \\ \hline
  STOC-ADMM~\cite{DBLP:conf/icml/OuyangHTG13} & NO & $O(1/\sqrt{T})$ & $O(lp+lq)$ \\ \hline
  OPG-ADMM~\cite{DBLP:conf/icml/Suzuki13} & YES &$O(1/\sqrt{T})$ & $O(lp+lq)$ \\ \hline
  RDA-ADMM~\cite{DBLP:conf/icml/Suzuki13} & YES & $O(1/\sqrt{T})$& $O(lp+lq)$ \\ \hline
  OS-ADMM~\cite{DBLP:conf/icml/AzadiS14} & YES & $O(1/\sqrt{T})$& $O(lp+lq)$ \\ \hline
  SA-ADMM~\cite{DBLP:conf/icml/ZhongK14} & NO & $O(1/T)$& $O(np+lp+lq)$ \\ \hline
  SA-IU-ADMM~\cite{DBLP:conf/icml/ZhongK14} & YES & $O(1/T)$& $O(np+lp+lq)$ \\ \hline
  SCAS-ADMM & YES & $O(1/T)$& $O(lp+lq)$ \\ \hline
\end{tabular}
\end{center}
\end{table*}
\vspace{-0.8cm}

\section{Experiments}\label{sec:experiments}
As in~\cite{DBLP:conf/icml/OuyangHTG13,DBLP:conf/icml/AzadiS14,DBLP:conf/icml/ZhongK14}, we evaluate our method on the generalized lasso model~\cite{DBLP:journals/annals/TibshiraniTaylor11} which can be formulated as follows:
\begin{align}\label{eq:gfLasso}
\min_\x \frac{1}{n} \sum_{i=1}^n f_i(\x) + \lambda \left\| \A\x\right\|_1,
\end{align}
where $f_i(\x)$ is the logistic loss, $\A$ is a matrix to specify the desired structured sparsity pattern for $\x$, and $\lambda$ is the regularization hyper-parameter. We can get different models like fused lasso and wavelet smoothing by specifying different $\A$. In this paper, we focus on the graph-guided fused lasso~\cite{DBLP:journals/bioinformatics/KimSX09} which is also used in~\cite{DBLP:conf/icml/ZhongK14}. As in~\cite{DBLP:conf/icml/OuyangHTG13,DBLP:conf/icml/ZhongK14}, we use sparse inverse covariance selection method~\cite{DBLP:journals/jmlr/BanerjeeGd08} to get a graph matrix~(sparsity pattern) $\G$, based on which we can get $\A = [\G;\I]$. In general, both $\G$ and $\A$ are sparse.


We can formulate~(\ref{eq:gfLasso}) with the ADMM framework:
\begin{align} \label{eq:gfLassoADMM}
\min_{\x,\y}~&P(\x,\y) = \frac{1}{n} \sum_{i=1}^n f_i(\x) + g(\y),  \\
s.t.\quad~&\A\x - \y = 0, \nonumber
\end{align}
where $g(\y) = \lambda \left\| \y \right\|_1$.

\subsection{Baselines and Datasets}

Three representative ADMM methods are adopted as baselines for comparison. They are:
\begin{itemize}
\item \emph{Batch-ADMM}~\cite{DBLP:journals/ftml/BoydPCPE11}: The deterministic~(batch) variant of ADMM which uses~(\ref{eq:updataXwithBatch}) to directly update $\x$ by visiting all training samples in each iteration.
\item \emph{STOC-ADMM}~\cite{DBLP:conf/icml/OuyangHTG13}: The stochastic ADMM variant without using historic gradient for optimization, which has a convergence rate of $O(1/\sqrt{T})$ for general convex problems and $O(\log T/T)$ for strongly convex problems.
\item \emph{SA-ADMM}~\cite{DBLP:conf/icml/ZhongK14}: The stochastic ADMM variant by using historic gradient to approximate the full gradient, which has a convergence rate of $O(1/T)$ for general convex problems.
\end{itemize}
Please note that other methods, such as OPG-ADMM, RDA-ADMM and OS-ADMM, are not adopted for comparison because they have similar convergence rate as STOC-ADMM. Furthermore, both theoretical and empirical results have shown that SA-ADMM can outperform other methods like RDA-ADMM and OPG-ADMM~\cite{DBLP:conf/icml/ZhongK14}. The variant of SA-ADMM, SA-IU-ADMM, is also not adopted for comparison because it has similar performance as \mbox{SA-ADMM}~\cite{DBLP:conf/icml/ZhongK14}.

Although the $M_t$ in Algorithm~\ref{alg:SCAS-ADMM} should be increased as $t$ increases, we simply set $M_t = n$ in our experiments because SCAS-ADMM can also achieve good performance with this fixed value for $M_t$. Similarly, we set $M = n$ in Algorithm~~\ref{alg:SCAS-ADMM2}.

As in~\cite{DBLP:conf/icml/ZhongK14}, four widely used datasets are adopted to evaluate our method and other baselines. They are \emph{a9a}, \emph{covertype}, \mbox{\emph{rcv1}} and \emph{sido}. All of them are for binary classification tasks. The detailed information about these datasets can be found in Table~\ref{table:datasets}.
\vspace{-0.5cm}
\begin{table}[ht]
\caption{Information about the datasets} \label{table:datasets}
\begin{center}
\small
\begin{tabular}{|c|c|c|c|}
\hline
Dataset & \#Samples & \#Features  & $\lambda$ \\ \hline \hline
a9a & 32561 & 123  & $10^{-5}$ \\ \hline
covertype & 581012 & 54 & $10^{-5}$ \\ \hline
rcv1 & 20242 & 47236 & $10^{-4}$ \\ \hline
sido & 12678 & 4932 & $ 10^{-4}$ \\ \hline
\end{tabular}
\end{center}
\end{table}
\vspace{-0.5cm}

As in~\cite{DBLP:conf/icml/ZhongK14}, for each dataset we randomly choose half of the samples for training and use the rest for testing. This random partition is repeated for 10 times and the average values are reported. The hyper-parameter $\lambda$ in~(\ref{eq:gfLassoADMM}) is set by using the same values in~\cite{DBLP:conf/icml/ZhongK14}, which are also listed in
Table~\ref{table:datasets}. We adopt the same strategy as that in~\cite{DBLP:conf/icml/ZhongK14} to set the hyper-parameters $\rho$ in~(\ref{eq:lagrangian}) and the stepsize. More specifically, we randomly choose a small subset of 500 samples from the training set, and then choose the hyper-parameters which can achieve the smallest objective value after running 5 data passes for stochastic methods or 100 data passes~(iterations) for batch methods. As in~\cite{DBLP:conf/icml/ZhongK14}, we use $\y(\bar{\x}_T) = \A\bar{\x}_T$ to replace $\bar{\y}_T$ since the methods cannot necessarily guarantee that $\A\bar{\x}_T = \bar{\y}_T$.


All the experiments are conducted on a workstation with 12 Intel Xeon CPU cores and 64G RAM.

\subsection{Convergence Results}


As in~\cite{DBLP:conf/icml/ZhongK14}, we study the variation of the objective value on training set and the testing loss versus the number of \emph{effective passes} over the data. For all methods, one effective pass over the data means $n$ samples are visited. More specifically, one effective pass refers to one iteration in batch ADMM. For stochastic ADMM methods which visit one sample in each iteration, one effective pass refers to $n$ iterations. For SCAS-ADMM, we set $M_t = n$ and each iteration of the outer loop needs to visit $2n$ training samples. Hence, each iteration of the outer loop will contribute two effective passes. Although different methods will visit different numbers of samples in each iteration, we can see that the number of effective passes over the data is a good metric for fair comparison because it measures the computation costs of different methods in a unified way.

Figure~\ref{fig:generalConvex} shows the results for general convex problems with $f_i(\x)$ being the logistic loss. Please note that the number of recorded points on the curve of SCAS-ADMM is half of those for other methods because each iteration of the outer loop of SCAS-ADMM will contribute two effective passes. As stated above, it is still fair to compare different methods with respect to the number of effective passes. In Figure~\ref{fig:generalConvex}, all the points with the same \mbox{x-axis} value from different curves have the same number of effective passes. Hence, for two points with the same \mbox{x-axis} value from any two different curves, the point with smaller y-axis value  is better than the other one. We can find that all the stochastic methods outperform the Batch-ADMM in terms of both training speed and testing accuracy. SCAS-ADMM and SA-ADMM outperform STOC-ADMM, which is consistent with the theoretical analysis about convergence rate. Our SCAS-ADMM can achieve comparable performance as SA-ADMM, which empirically verifies our theoretical result that SCAS-ADMM has the same convergence rate of $O(1/T)$ as SA-ADMM.
\vspace{-0.0cm}
\begin{figure}[htb]
\begin{center}
\subfigure[a9a]{\includegraphics[width=2in]{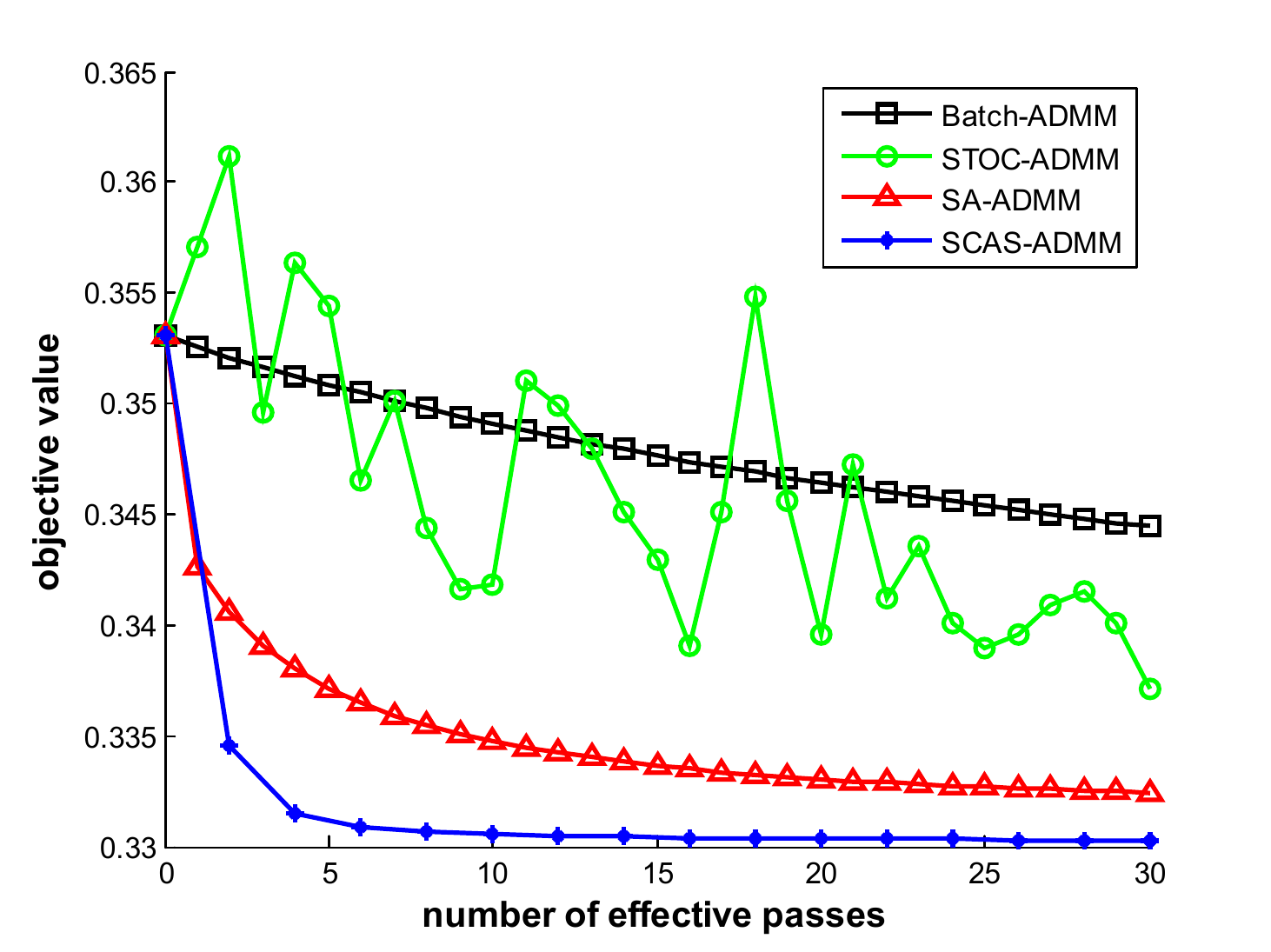}}\hspace{0.0cm}
\subfigure[covertype]{\includegraphics[width=2in]{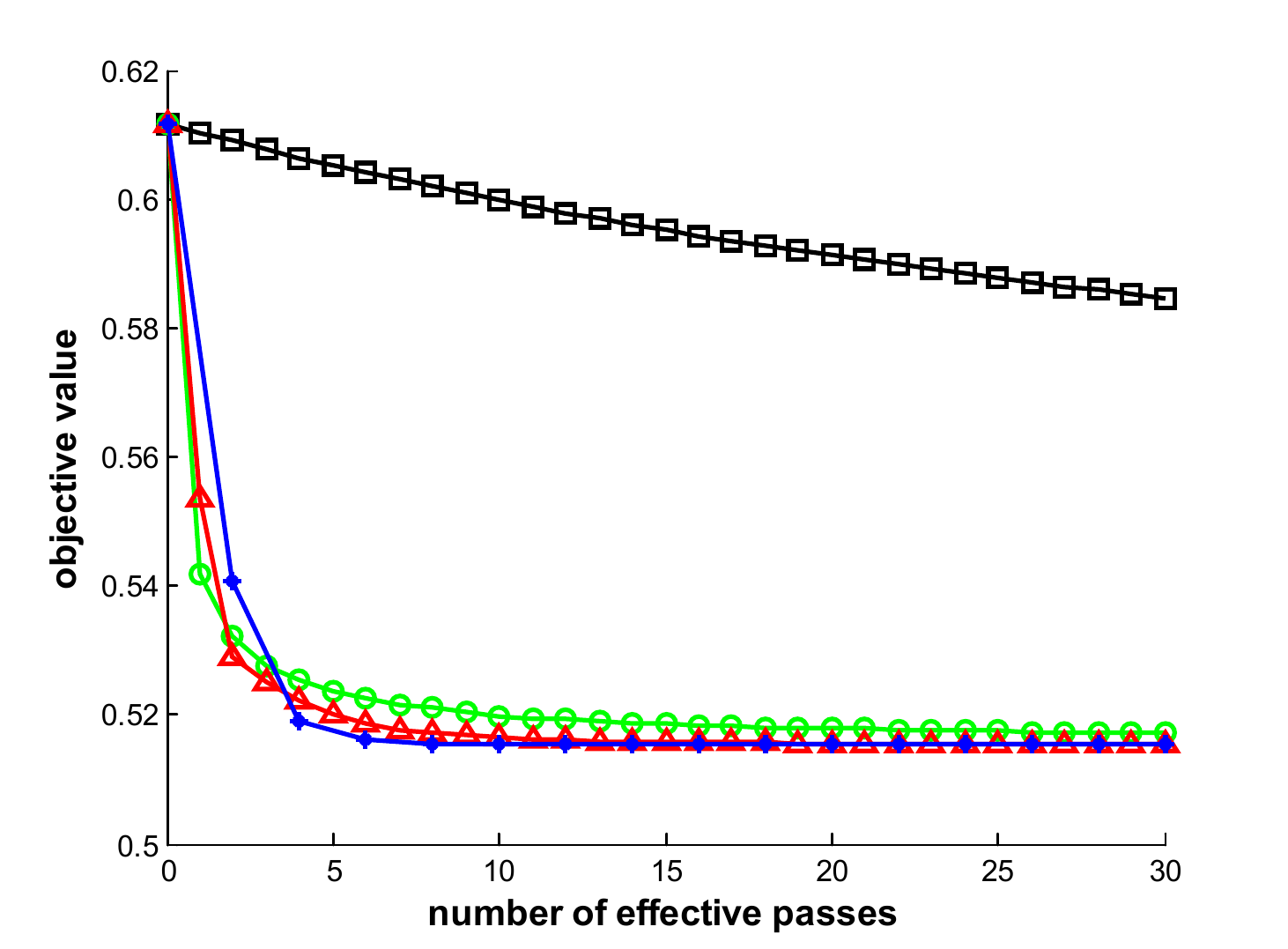}}\hspace{0cm}
\subfigure[rcv1]{\includegraphics[width=2in]{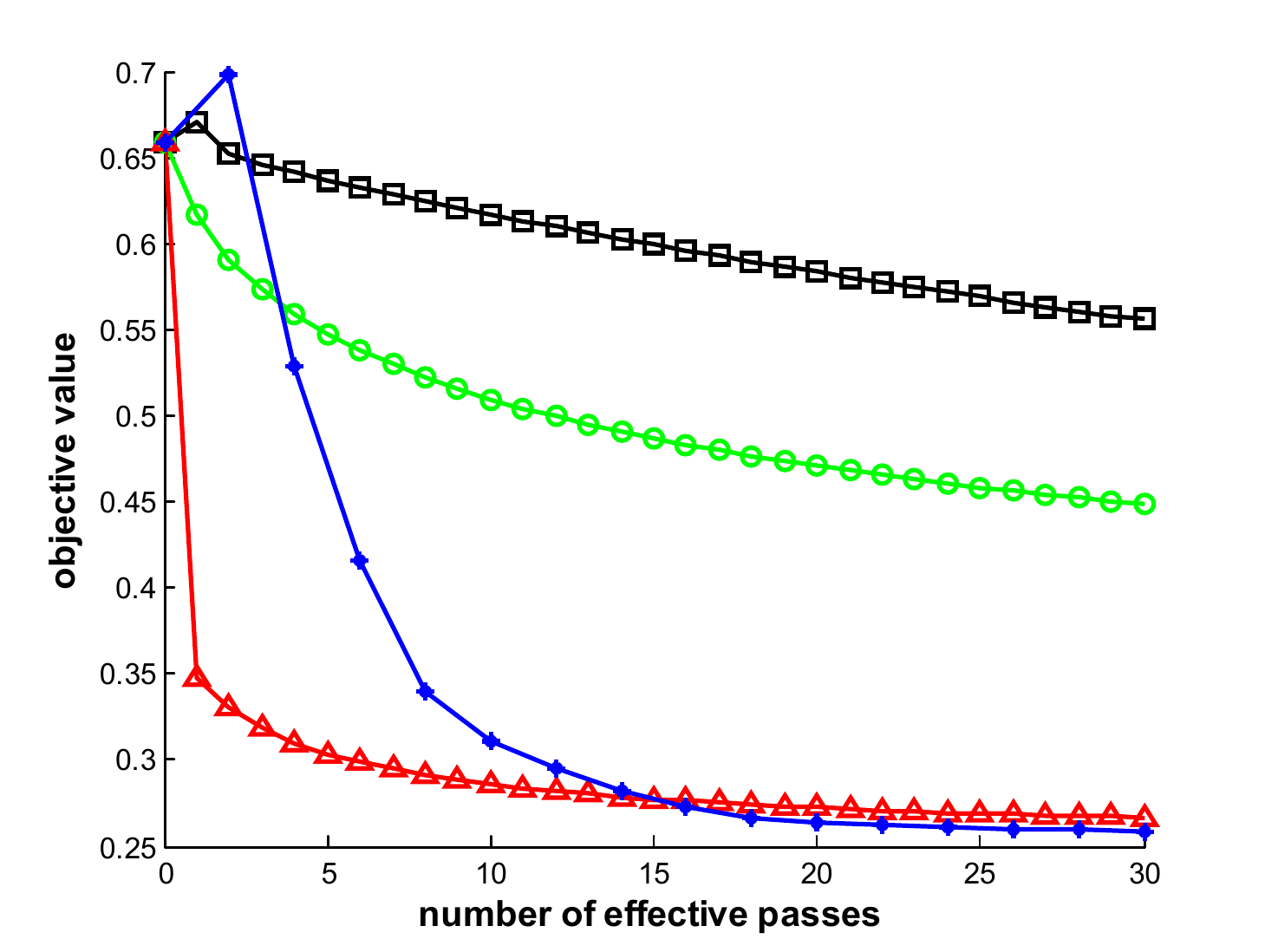}}\hspace{0cm}
\subfigure[sido]{\includegraphics[width=2in]{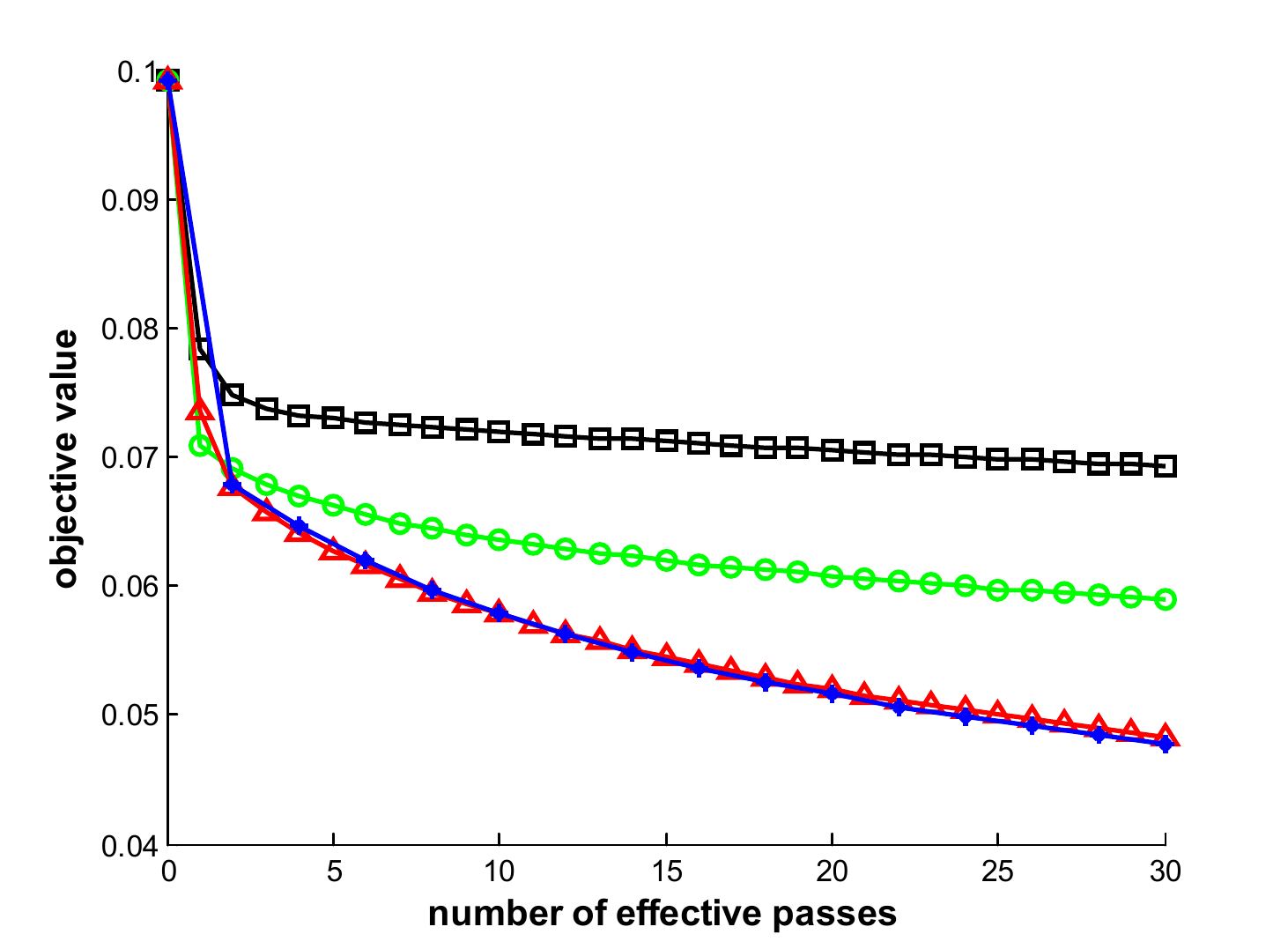}}\hspace{0cm}
\subfigure[a9a]{\includegraphics[width=2in]{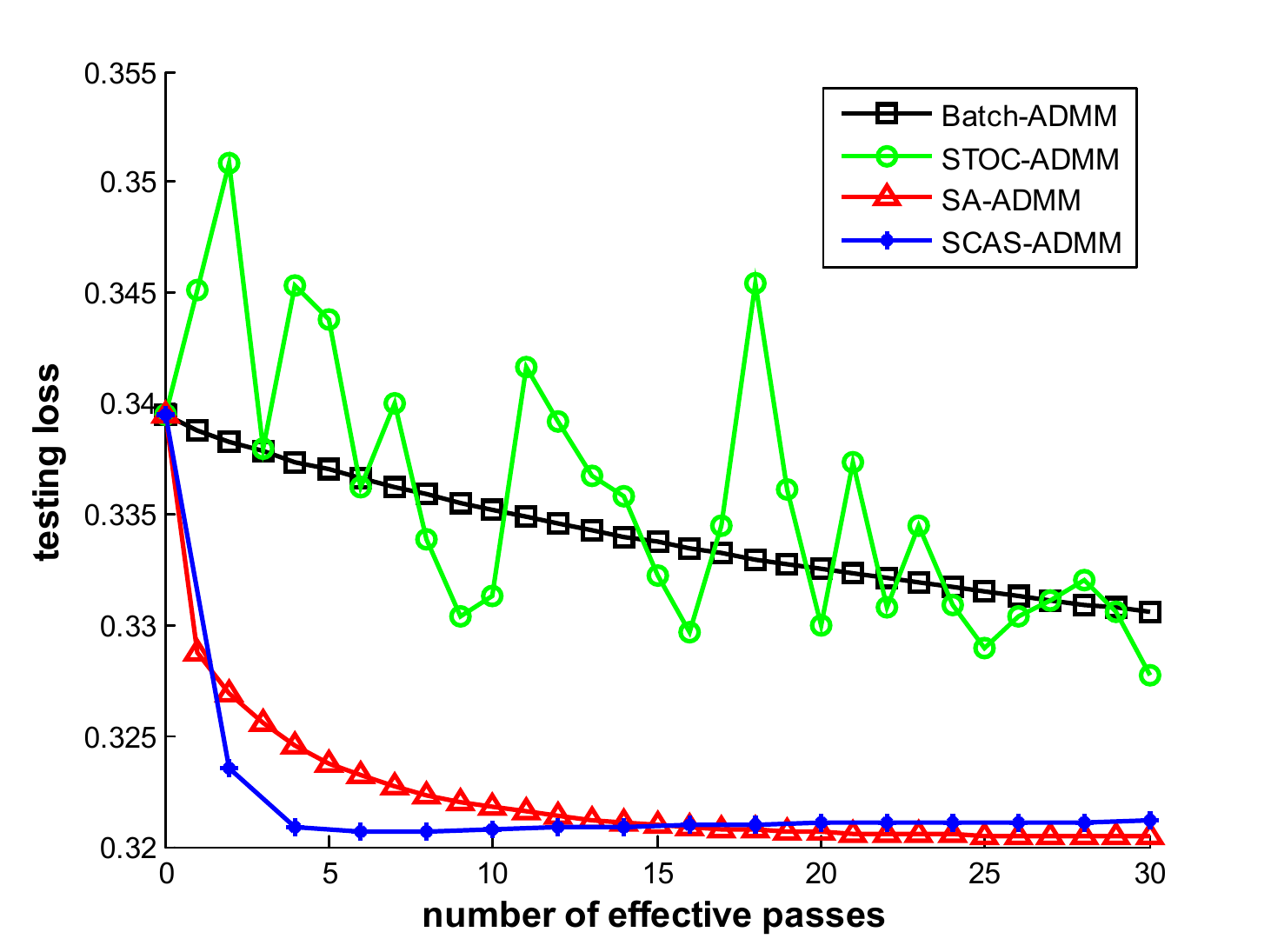}}\hspace{0.0cm}
\subfigure[covertype]{\includegraphics[width=2in]{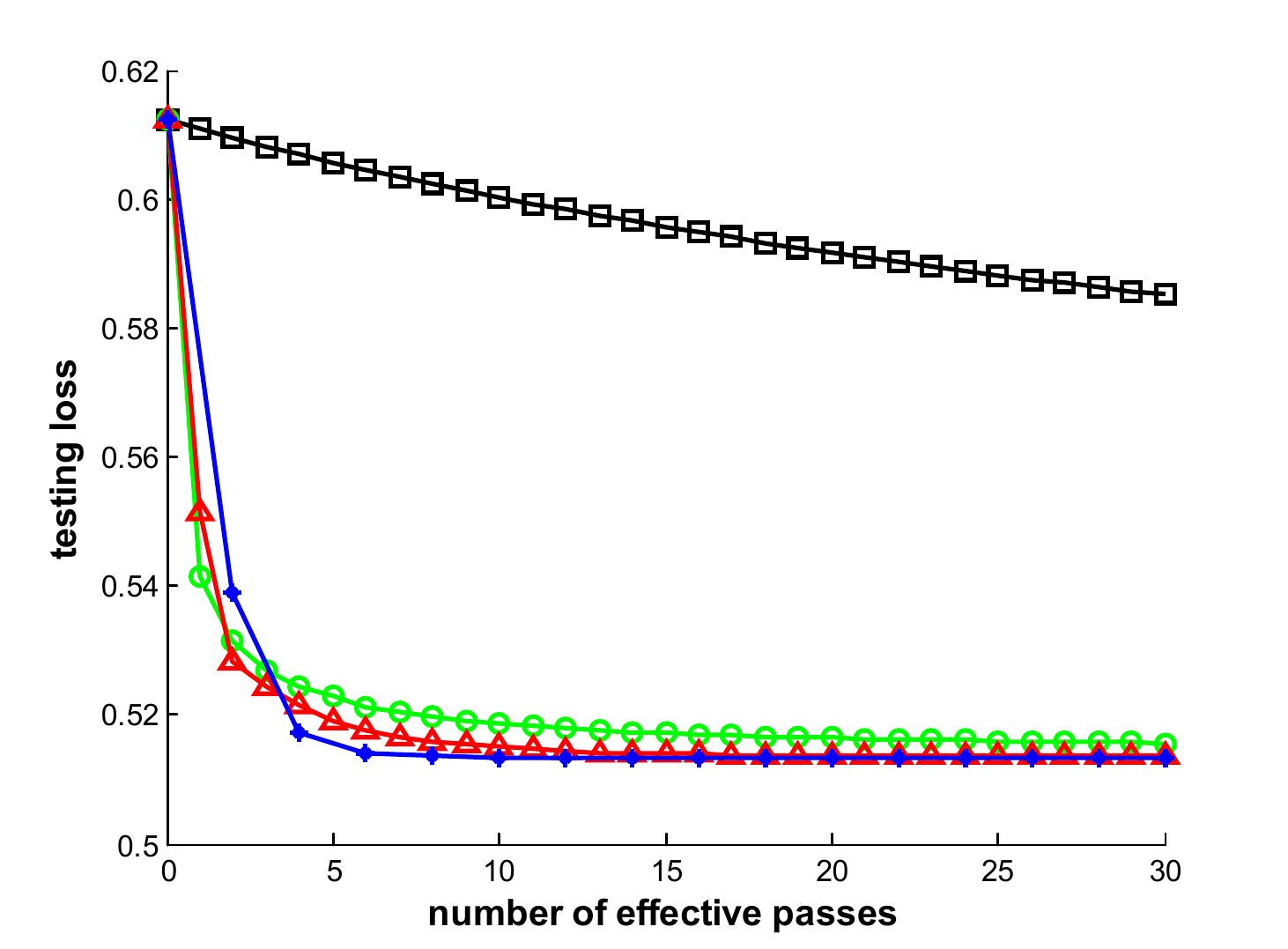}}\hspace{0cm}
\subfigure[rcv1]{\includegraphics[width=2in]{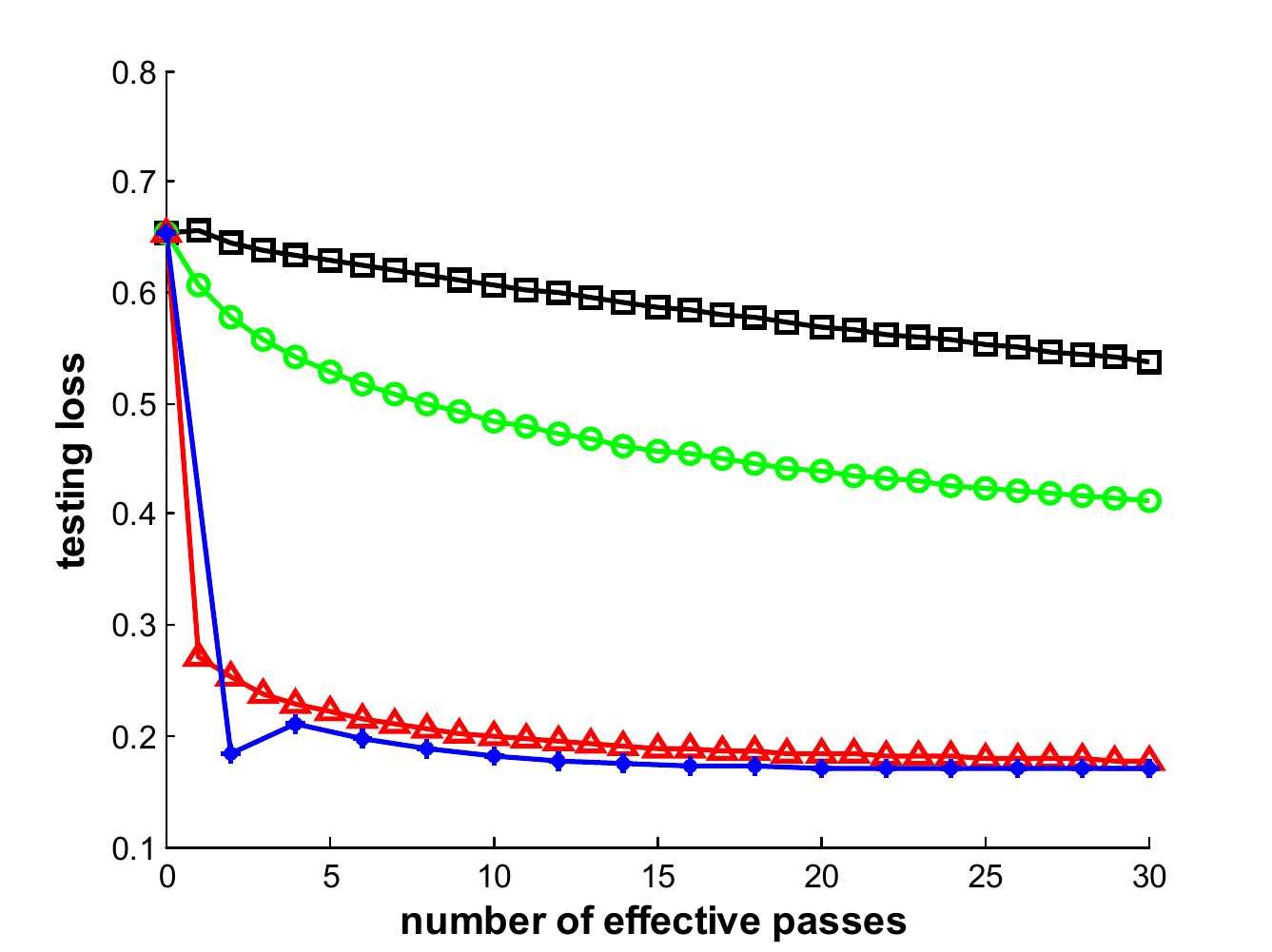}}\hspace{0cm}
\subfigure[sido]{\includegraphics[width=2in]{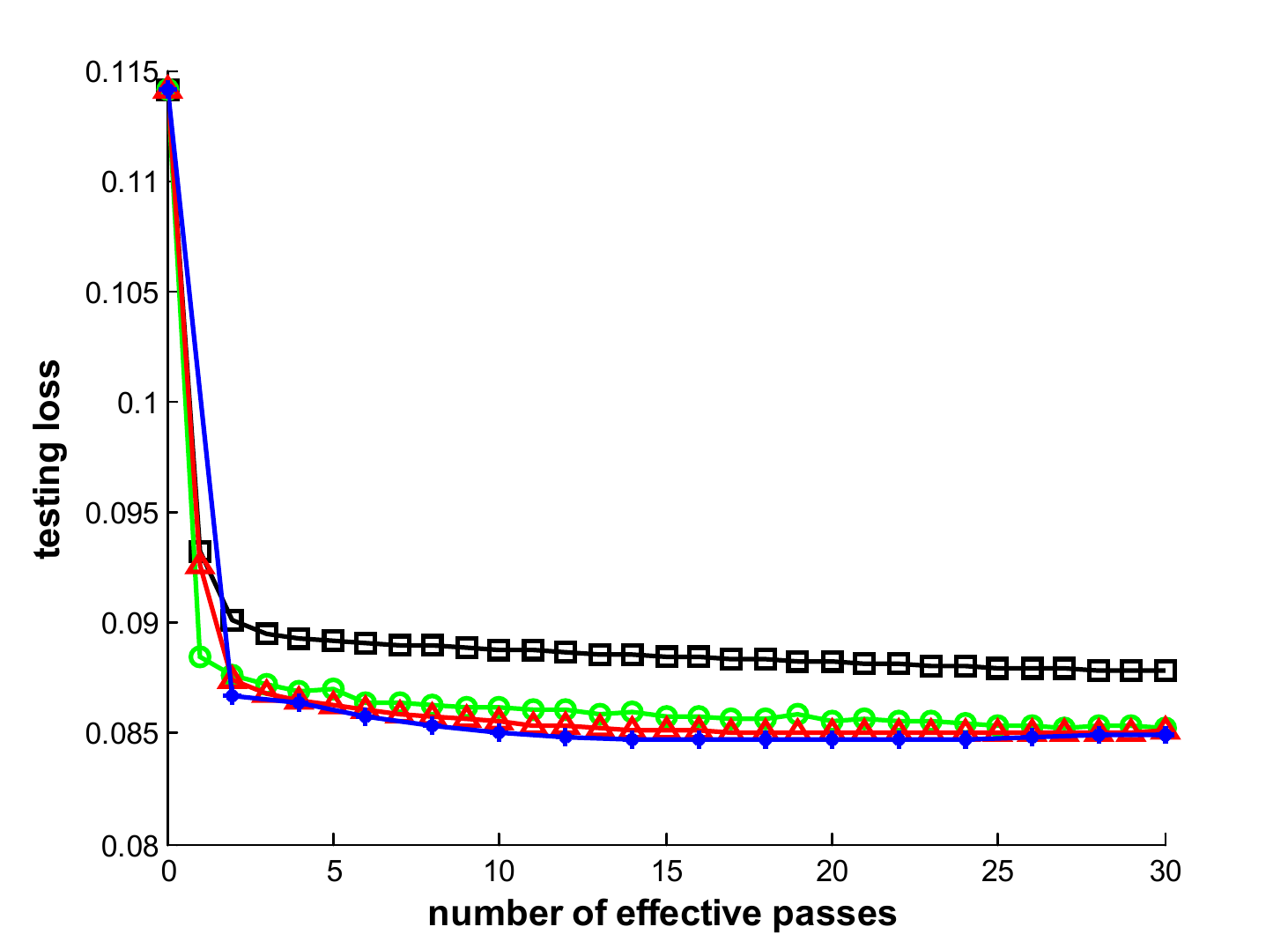}}
\end{center}
\vspace{-0.5cm}
\caption{\small Experiments on four datasets for general convex problems. Top: objective value on training set; Bottom: testing loss.}
\label{fig:generalConvex}
\end{figure}


By adding a small $L_2$ regularization term to the logistic loss, we can get strongly convex problems. Figure~\ref{fig:stronglyConvex} shows the results for strongly convex problems. Once again, we can observe similar phenomenon as that in Figure~\ref{fig:generalConvex}. In particular, our SCAS-ADMM can achieve comparable convergence rate as SA-ADMM.

As for the memory~(storage) cost, it is obvious that SCAS-ADMM needs much less memory than SA-ADMM from the theoretical analysis in Table~\ref{table:relatedWork}. Hence, we do not empirically compare between them.

\vspace{-0.5cm}
\begin{figure}[htb]
\begin{center}
\subfigure[a9a]{\includegraphics[width=2in]{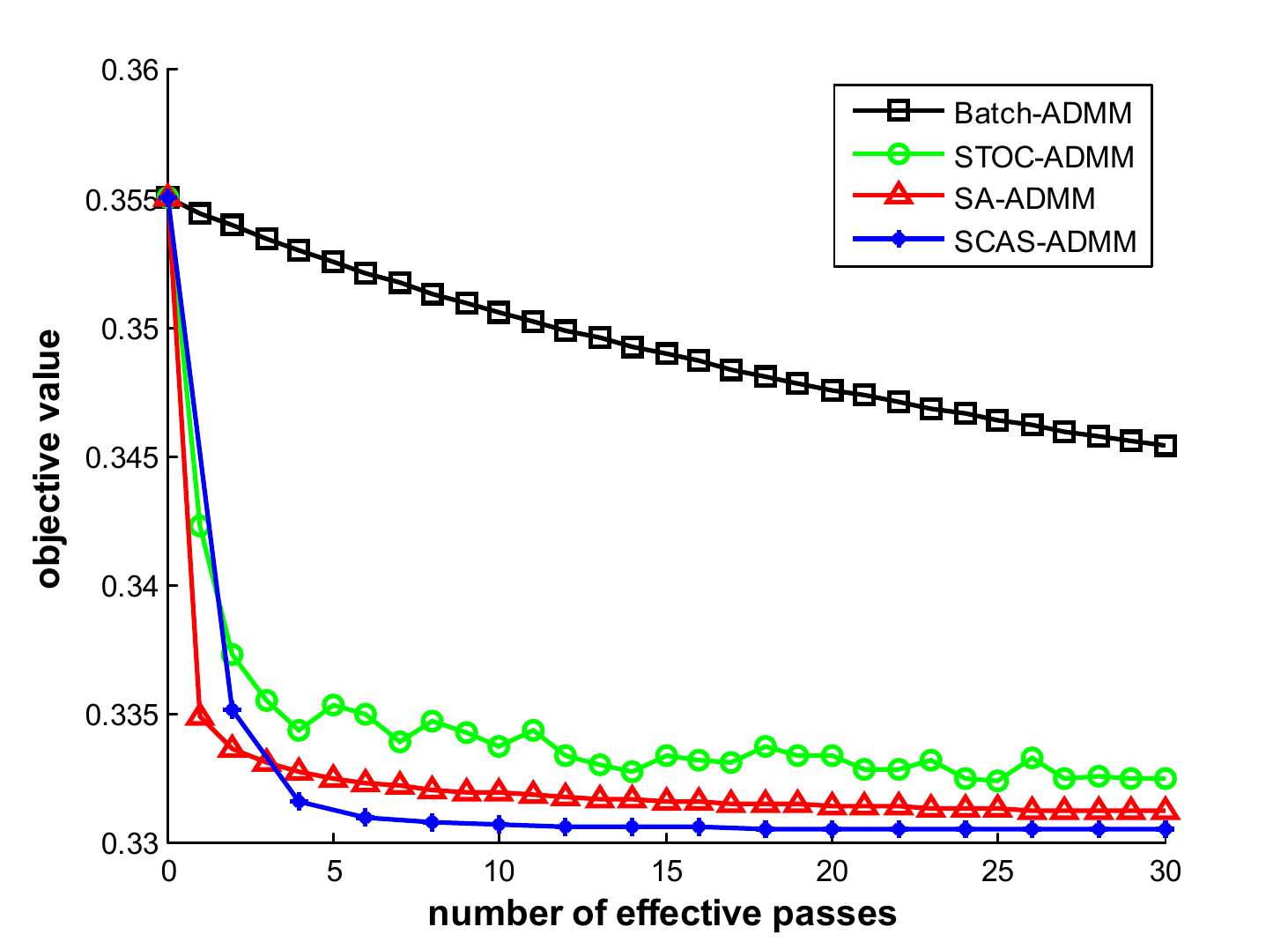}}\hspace{0cm}
\subfigure[covertype]{\includegraphics[width=2in]{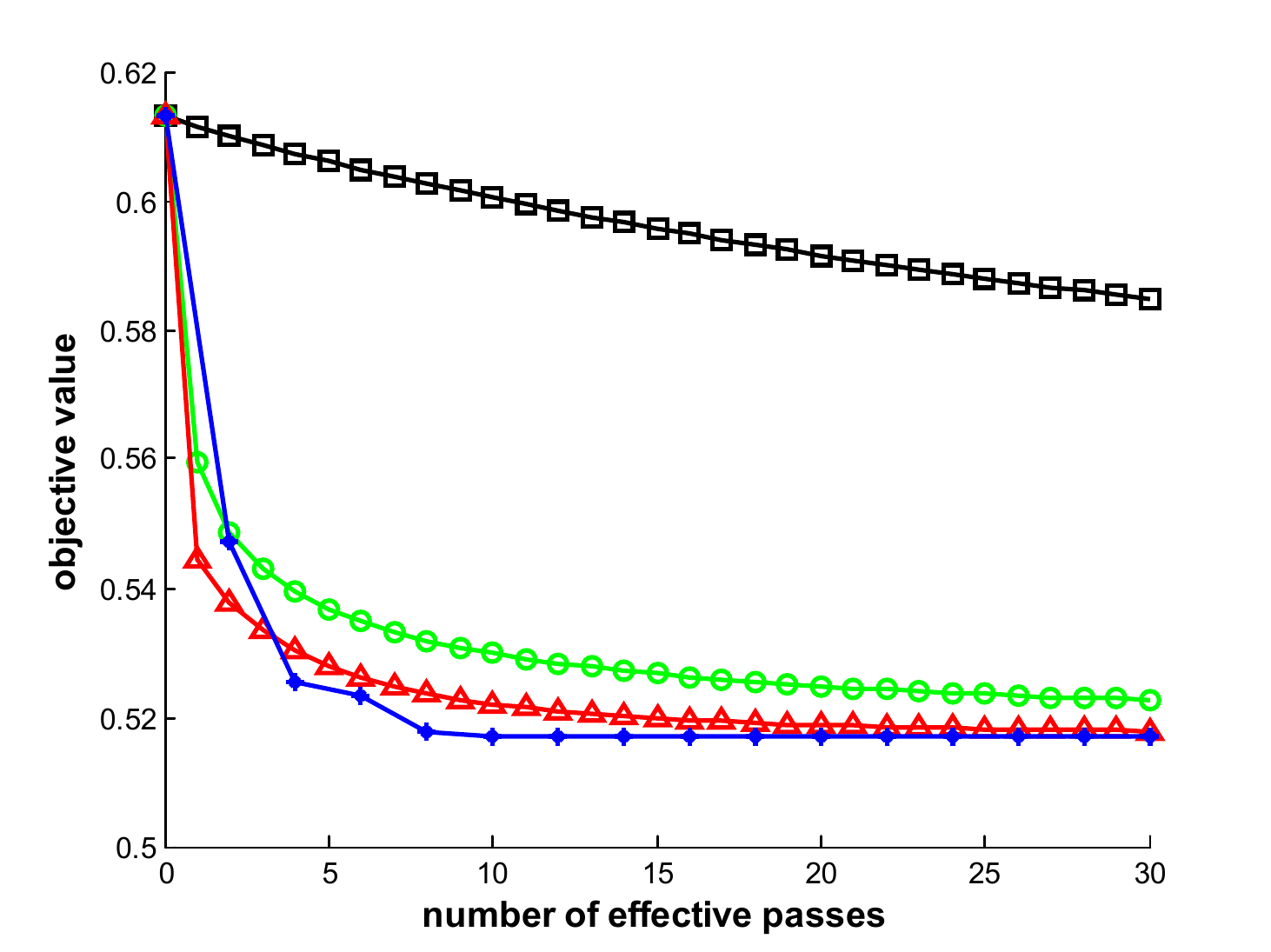}}\hspace{0cm}
\subfigure[rcv1]{\includegraphics[width=2in]{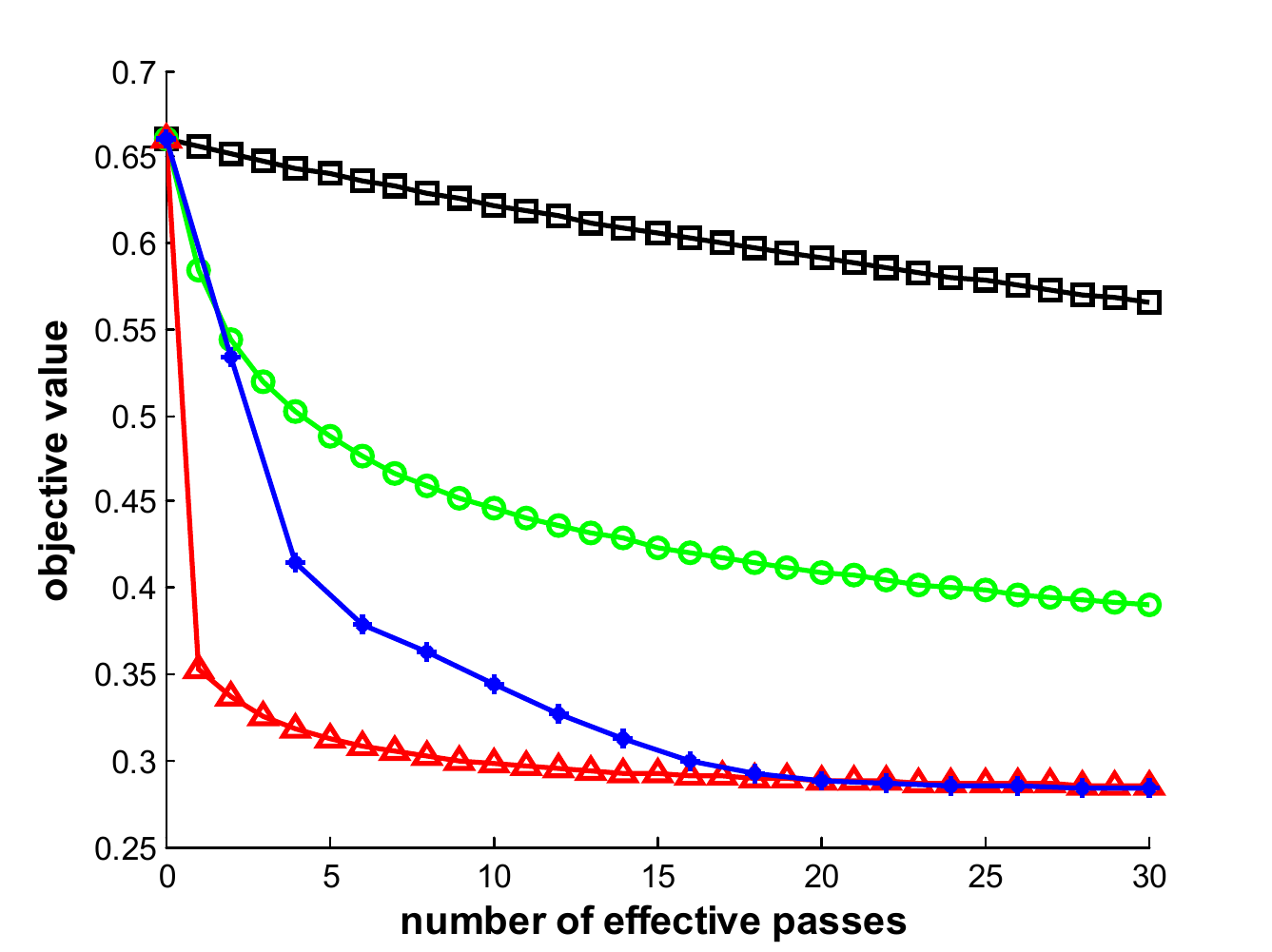}}\hspace{0cm}
\subfigure[sido]{\includegraphics[width=2in]{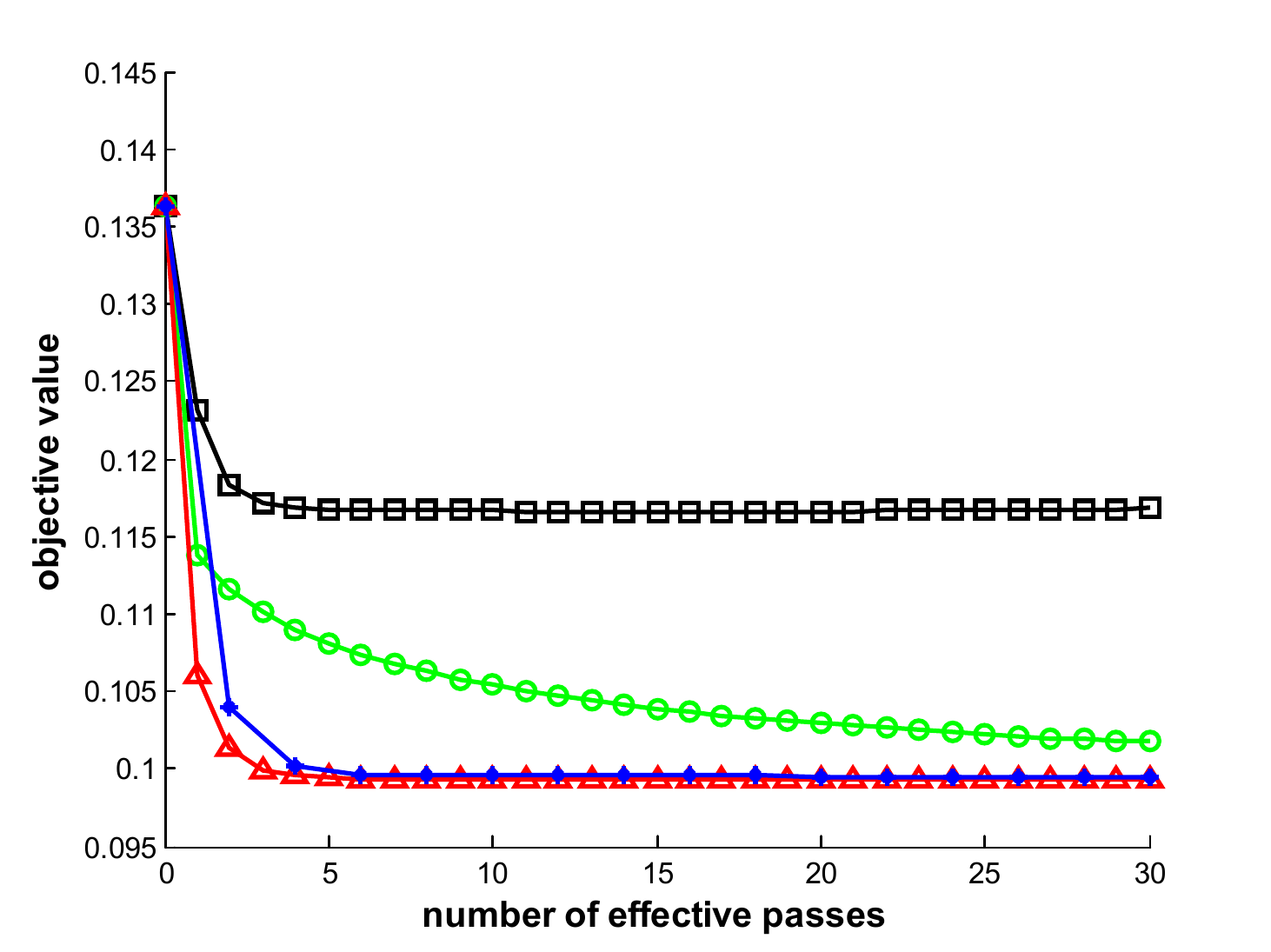}}\hspace{0cm}
\subfigure[a9a]{\includegraphics[width=2in]{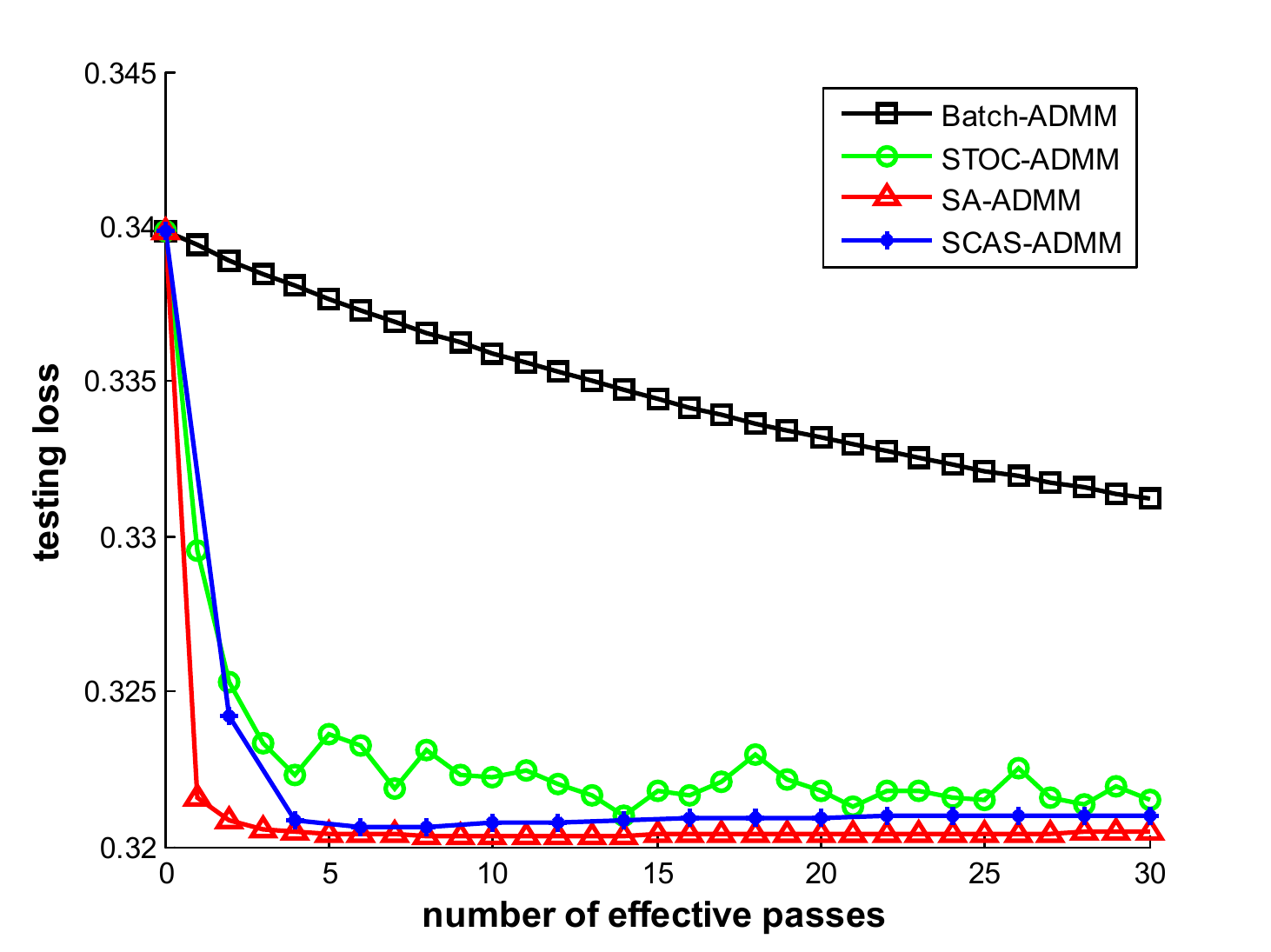}}\hspace{0cm}
\subfigure[covertype]{\includegraphics[width=2in]{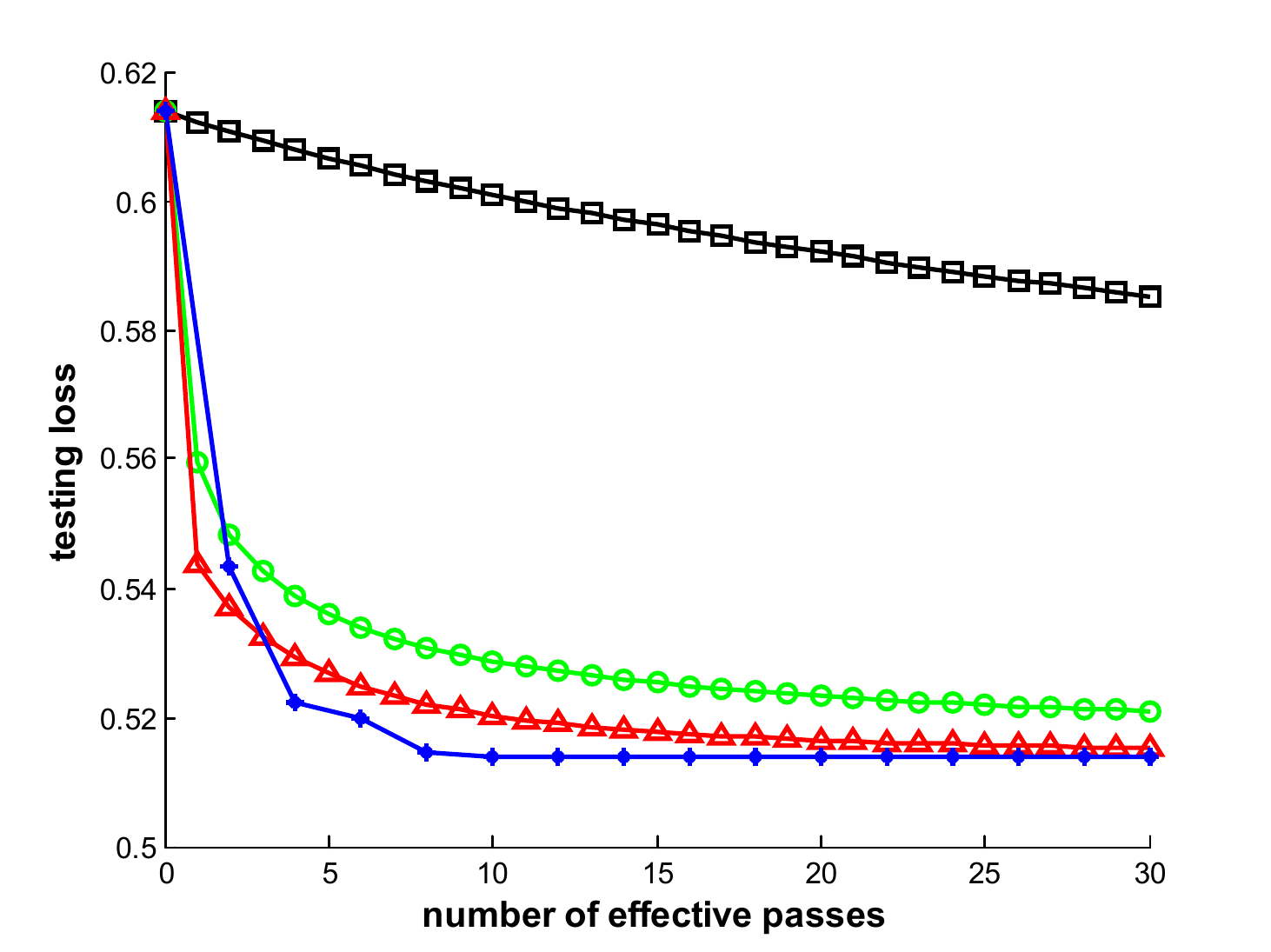}}\hspace{0cm}
\subfigure[rcv1]{\includegraphics[width=2in]{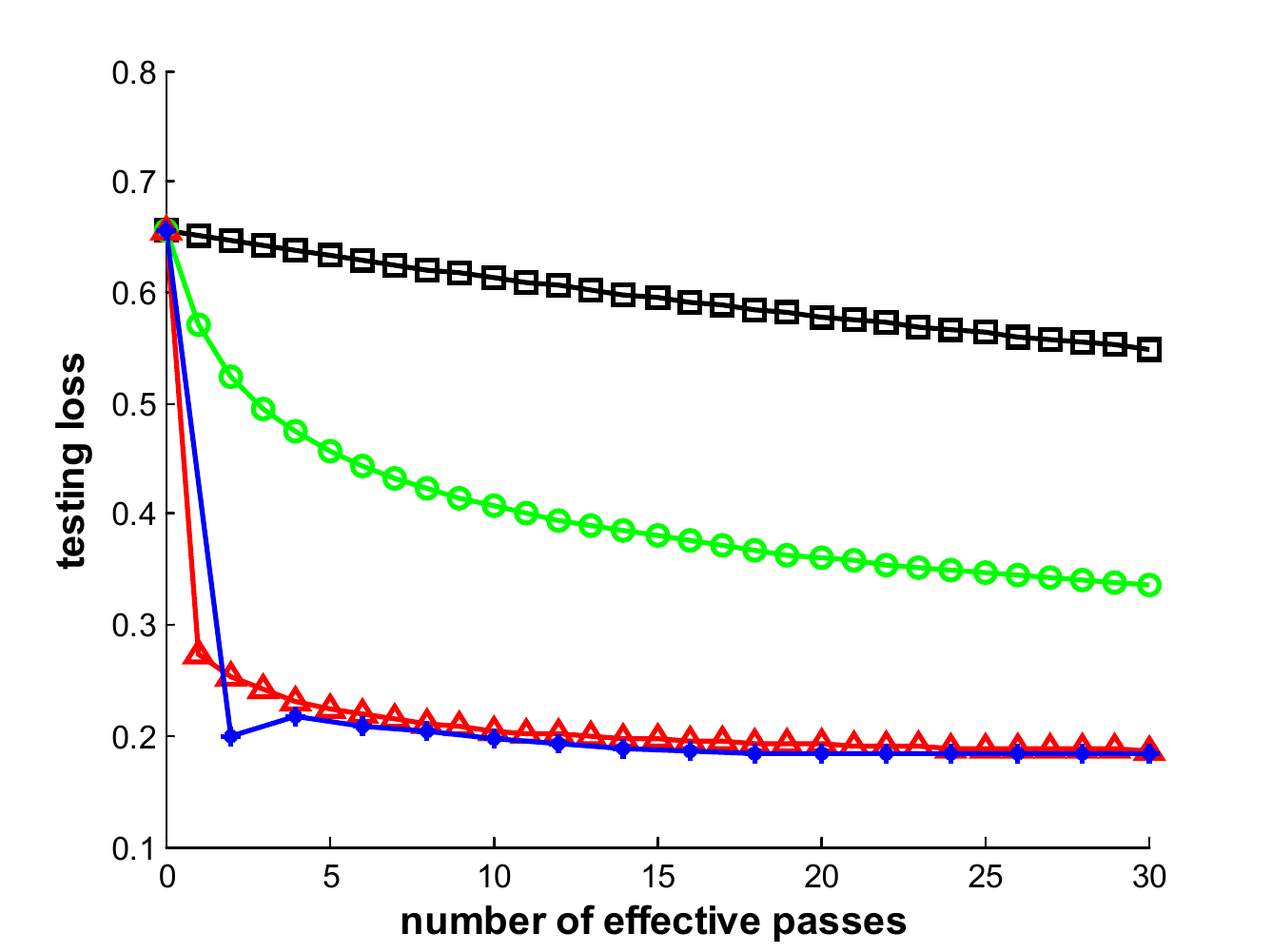}}\hspace{0cm}
\subfigure[sido]{\includegraphics[width=2in]{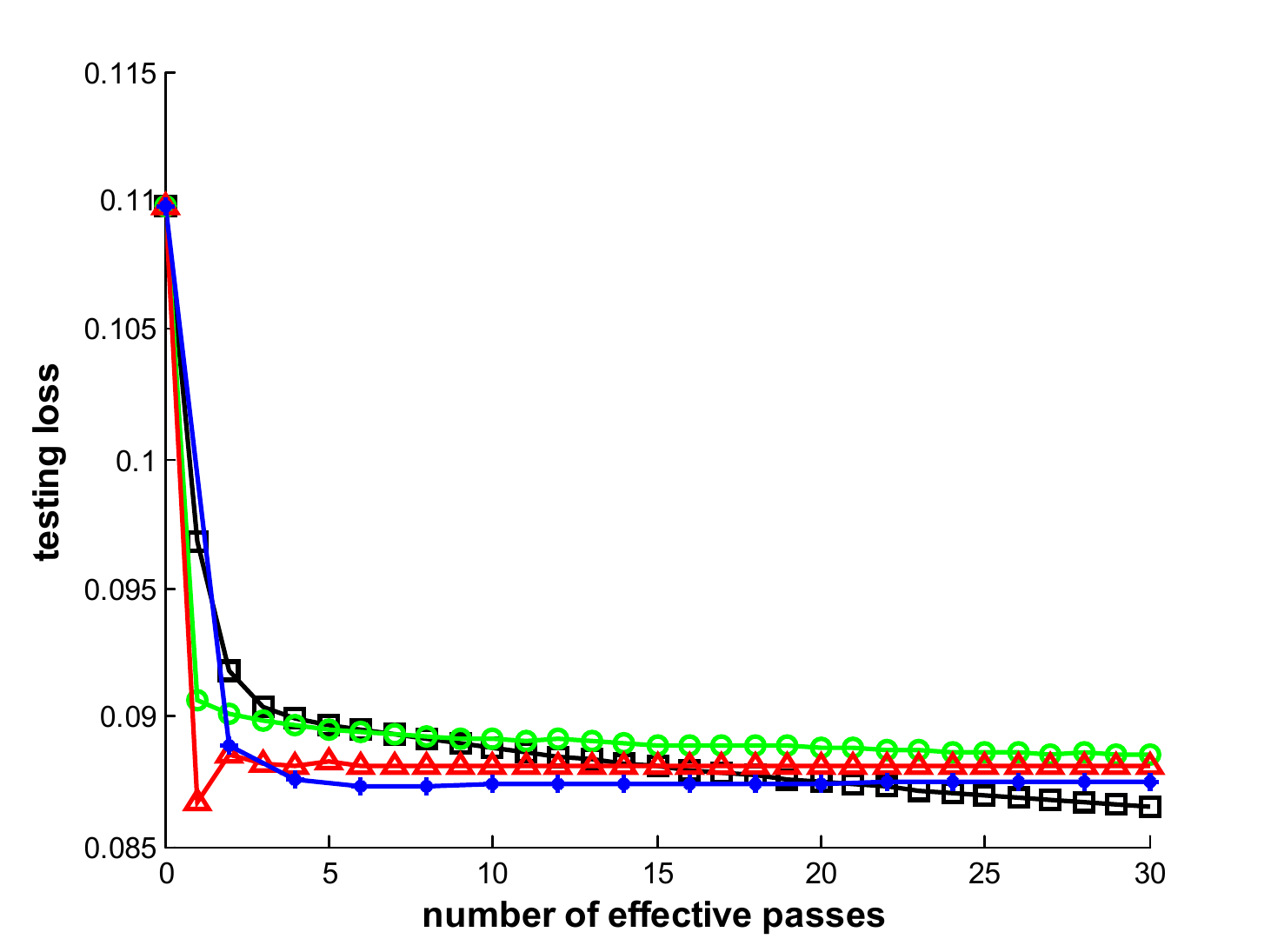}}
\end{center}
\vspace{-0.5cm}
\caption{\small Experiments on four datasets for strongly convex problems. Top: objective value on training set; Bottom: testing loss.}
\label{fig:stronglyConvex}
\end{figure}

\section{Conclusion}\label{sec:conclusion}
In this paper, we have proposed a new stochastic ADMM method called SCAS-ADMM, which can achieve the same convergence rate as the best existing stochastic ADMM method SA-ADMM on general convex problems. Furthermore, it costs much less memory than SA-ADMM. Hence, SCAS-ADMM is scalable in terms of both convergence rate and storage cost.





\bibliography{reference}
\bibliographystyle{unsrtnat}

\newpage
\appendix

\section{Notations for Proof}

We let
\begin{align}
\v_{m,t} &=  \nabla f_{i_m}(\w_m) - \nabla f_{i_m}(\w_0) + \z_t , \label{eq:v_mt}\\
\b_{m,t} &= \bbeta_t + \rho(\A\w_{m} + \B\y_t - \c), \label{eq:b_mt}\\
\p_{m,t} &= \v_{m,t} + \A^T \b_{m,t}.\label{eq:p_mt}
\end{align}

Then the update rule for $\w_{m+1}$ in the inner loop of Algorithm~\ref{alg:SCAS-ADMM} can be rewritten as
\begin{align}\label{eq:w_m}
\w_{m+1} &= \pi_\XM(\w_m - \eta_t(\v_{m,t} + \A^T\b_{m,t})) \nonumber \\
               &= \pi_\XM(\w_m - \eta_t \p_{m,t}).
\end{align}

Assume we have got $(\x_t,\y_t,\bbeta_t)$, and we define:
\begin{align}
\LM(\x) = &L(\x,\y_t,\bbeta_t),\label{eq:Lx} \\
\LM_i(\x) = &f_i(\x)+g(\y_t)+\bbeta_t^T(\A\x+\B\y_t-\c)+\frac{\rho}{2}\left\|\A\x+\B\y_t-\c\right\|^2.
\end{align}

\section{Lemmas for the Proof of Theorem 1}\label{sec:lemmaForTheorem1}

\begin{lemma}\label{lemma:LipschitzLMx}
If $f(\x)$ is $\nu_f$-smooth, then $\exists \nu_\LM > 0$ that makes $\LM(\x)$ be $\nu_\LM$-smooth.
\end{lemma}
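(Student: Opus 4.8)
The plan is to treat $\LM(\x)$ as a function of $\x$ alone, with $\y_t$ and $\bbeta_t$ held fixed, and to exhibit an explicit Lipschitz constant for its gradient; since $\LM$ is also convex, Lipschitz continuity of $\nabla\LM$ is equivalent to smoothness by the characterization recalled in Section~\ref{sec:background}.

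First I would write out $\LM(\x) = f(\x) + g(\y_t) + \bbeta_t^T(\A\x+\B\y_t-\c) + \frac{\rho}{2}\left\|\A\x+\B\y_t-\c\right\|^2$ and note that $g(\y_t)$ is constant in $\x$, the inner-product term is affine in $\x$, and the quadratic term is a convex quadratic in $\x$. Hence $\LM$ is convex and differentiable, with $\nabla\LM(\x) = \nabla f(\x) + \A^T\bbeta_t + \rho\A^T(\A\x+\B\y_t-\c)$, so for any $\a,\b$ the affine parts cancel and
\[
\nabla\LM(\a)-\nabla\LM(\b) = \big(\nabla f(\a)-\nabla f(\b)\big) + \rho\,\A^T\A(\a-\b).
\]

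Then I would bound the two summands separately: the first by $\nu_f\left\|\a-\b\right\|$ using $\nu_f$-smoothness of $f$ (equivalently, $\nu_f$-Lipschitzness of $\nabla f$), and the second by $\rho\lambda_1\left\|\a-\b\right\|$, where $\lambda_1$ denotes the largest eigenvalue (equivalently the spectral norm) of $\A^T\A$. The triangle inequality then gives $\left\|\nabla\LM(\a)-\nabla\LM(\b)\right\| \leq (\nu_f+\rho\lambda_1)\left\|\a-\b\right\|$, so we may take $\nu_\LM = \nu_f + \rho\lambda_1 > 0$, and since $\LM$ is convex this Lipschitz bound is precisely $\nu_\LM$-smoothness.

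There is essentially no obstacle here: the argument is a one-line application of the triangle inequality together with the operator-norm bound for $\A^T\A$. The only point worth stating carefully is that the paper's notion of smoothness presupposes convexity, so one should remark that $\LM(\cdot)$ is convex (affine term plus a positive-semidefinite quadratic form plus the convex $f$) before invoking the equivalence between Lipschitz gradient and smoothness.
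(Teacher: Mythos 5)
Your proof is correct and follows essentially the same route as the paper's: write $\nabla\LM(\b)-\nabla\LM(\a)=\nabla f(\b)-\nabla f(\a)+\rho\A^T\A(\b-\a)$, apply the triangle inequality, and bound the quadratic part by the spectral norm of $\A^T\A$ (the paper writes this constant as $\rho\sqrt{\lambda_\A}$ with $\lambda_\A$ the largest eigenvalue of $\A^T\A\A^T\A$, which equals your $\rho\lambda_1$). Your added remark that $\LM$ is convex, so that the Lipschitz-gradient bound is equivalent to smoothness in the paper's sense, is a small but worthwhile point the paper leaves implicit.
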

\begin{proof}
According to the definition  about $\nu_f$-smooth, $\forall \a,\b$, we have
\begin{align}
\left\| \nabla \LM(\b) - \nabla \LM(\a) \right\| &=\left\| \nabla f(\b) - \nabla f(\a) + \rho \A^T\A(\b-\a)\right\| \nonumber \\
&\leq \left\| \nabla f(\b) - \nabla f(\a)\right\|  + \left\| \rho \A^T\A(\b-\a)\right\| \nonumber \\
& \leq \nu_f \left\| \b - \a \right\| + \left\| \rho \A^T\A(\b-\a)\right\| \nonumber \\
&\leq \nu_f \left\| \b - \a \right\| + \rho \sqrt{\lambda_\A} \left\| \b - \a \right\| \nonumber \\
& = (\nu_f +\rho \sqrt{\lambda_\A}  ) \left\| \b - \a \right\| , \nonumber
\end{align}
where $\lambda_\A \geq 0$ is the largest eigenvalue of $\A^T\A\A^T\A$.

Hence, for any value of $\nu_\LM \geq (\nu_f +\rho \sqrt{\lambda_\A} )$, we can see that $\LM(\x)$ is $\nu_\LM$-smooth.
\end{proof}

We can find that $\nu_\LM$ is only determined by $f(\x)$, matrix $\A$ and the penalty parameter $\rho$, but has nothing to do with $g(\y_t)$, $\y_t$, $\B$ and $\bbeta_t$.

Then, we have the following lemma about the variance of $\p_{m,t}$.
\begin{lemma}\label{lemma:variancePmt}
The variance of $\p_{m,t}$ satisfies:
\begin{align}
\EB ( \left\| \p_{m,t} \right\|^2 ) \leq 2\nu^2_\LM D^2+2G_t^2,
\end{align}
where $D$ is the bound of the domain of $\x$, $\nu_\LM$ is the Lipschitz constant of the function $\LM(\x)$ defined in~(\ref{eq:Lx}), and $G_t = \left\| \nabla \LM(\x_t)\right\|$.
\end{lemma}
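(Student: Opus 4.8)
The plan is to bound $\EB(\|\p_{m,t}\|^2)$ by splitting the stochastic gradient estimate into its mean and a zero-mean fluctuation. First I would observe that, by the definition~(\ref{eq:p_mt}) and the fact that $i_m$ is sampled uniformly, the conditional expectation of $\p_{m,t}$ given $\w_m$ equals $\nabla\LM(\w_m)$: indeed $\EB[\nabla f_{i_m}(\w_m)]=\nabla f(\w_m)$ and $\EB[\nabla f_{i_m}(\w_0)]=\nabla f(\w_0)=\z_t$, so $\EB[\v_{m,t}]=\nabla f(\w_m)$, and adding the deterministic term $\A^T\b_{m,t}=\A^T\bbeta_t+\rho\A^T(\A\w_m+\B\y_t-\c)$ gives exactly $\nabla\LM(\w_m)$. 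Hence I would write $\p_{m,t}=\nabla\LM(\w_m)+\bigl(\p_{m,t}-\nabla\LM(\w_m)\bigr)$ and use $\EB\|\p_{m,t}\|^2 = \|\nabla\LM(\w_m)\|^2 + \EB\|\p_{m,t}-\nabla\LM(\w_m)\|^2$ if I want an equality, or simply $\|u+v\|^2\le 2\|u\|^2+2\|v\|^2$ to get the factor of $2$ that appears in the statement.

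Next I would bound each of the two pieces. For the ``bias-free noise'' part, note $\p_{m,t}-\nabla\LM(\w_m)=\v_{m,t}-\nabla f(\w_m)=\bigl(\nabla f_{i_m}(\w_m)-\nabla f_{i_m}(\w_0)\bigr)-\bigl(\nabla f(\w_m)-\nabla f(\w_0)\bigr)$, which is a centered random variable, so its second moment is at most $\EB\|\nabla f_{i_m}(\w_m)-\nabla f_{i_m}(\w_0)\|^2$. Using that each $f_i$ is $\nu_f$-smooth, this is $\le \nu_f^2\|\w_m-\w_0\|^2\le \nu_f^2 D^2$ since both $\w_m$ and $\w_0=\x_t$ lie in $\XM$, which has diameter $D$. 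For the other part, $\|\nabla\LM(\w_m)\|^2$, I would use $\nu_\LM$-smoothness of $\LM$ (Lemma~\ref{lemma:LipschitzLMx}): $\|\nabla\LM(\w_m)-\nabla\LM(\x_t)\|\le \nu_\LM\|\w_m-\x_t\|\le \nu_\LM D$, hence $\|\nabla\LM(\w_m)\|\le \nu_\LM D + \|\nabla\LM(\x_t)\| = \nu_\LM D + G_t$. Squaring and combining via the $2a^2+2b^2$ inequality yields $\|\nabla\LM(\w_m)\|^2\le 2\nu_\LM^2 D^2 + 2G_t^2$.

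Finally I would assemble the pieces. Since $\nu_\LM\ge\nu_f$ (the constant in Lemma~\ref{lemma:LipschitzLMx} is $\nu_f+\rho\sqrt{\lambda_\A}\ge\nu_f$), the noise term $\nu_f^2 D^2$ is absorbed, and one arrives at a bound of the stated form $2\nu_\LM^2 D^2 + 2G_t^2$ — though I would need to be slightly careful about how the constants accumulate (e.g. using the exact decomposition $\EB\|\p_{m,t}\|^2=\|\nabla\LM(\w_m)\|^2+\EB\|\text{noise}\|^2$ rather than a lossy $2a^2+2b^2$ split on the outer level, to avoid picking up an extra factor of $2$). The main obstacle is just this constant bookkeeping: making sure that the crude triangle-inequality steps (diameter bound on $\|\w_m-\x_t\|$, smoothness bound, and the two applications of $\|u+v\|^2\le 2\|u\|^2+2\|v\|^2$) are arranged so the final coefficients come out as exactly $2\nu_\LM^2 D^2$ and $2G_t^2$ and not something larger. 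Everything else is a routine application of uniform sampling unbiasedness and the smoothness/boundedness hypotheses already in force.
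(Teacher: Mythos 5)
Your overall strategy (unbiasedness of the SVRG-style estimator plus a smoothness/diameter bound) is the right kind of argument, but the decomposition you chose does not deliver the stated constants, and the ``absorption'' step at the end is exactly where it breaks. Centering at $\nabla\LM(\w_m)$, your exact bias--variance identity gives $\EB\|\p_{m,t}\|^2=\|\nabla\LM(\w_m)\|^2+\EB\|\p_{m,t}-\nabla\LM(\w_m)\|^2\le(\nu_\LM D+G_t)^2+\nu_f^2D^2$. The first term alone already consumes the entire budget: $(\nu_\LM D+G_t)^2\le 2\nu_\LM^2D^2+2G_t^2$ with equality when $\nu_\LM D=G_t$, so the extra $\nu_f^2D^2>0$ cannot be absorbed into $2\nu_\LM^2D^2+2G_t^2$; it can only be absorbed into a strictly larger coefficient such as $3\nu_\LM^2D^2+2G_t^2$. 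Your argument therefore proves a bound of the right form but weaker than the lemma as stated (harmless for the downstream rate, where only the order of $\epsilon_t$ matters, but not the claimed inequality).

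The paper sidesteps this by anchoring the decomposition at $\w_0=\x_t$ rather than at $\w_m$. Writing $\nabla\LM_i(\x)=\nabla f_i(\x)+\A^T\bbeta_t+\rho\A^T(\A\x+\B\y_t-\c)$, one checks that $\p_{m,t}=\nabla\LM_{i_m}(\w_m)-\nabla\LM_{i_m}(\w_0)+\nabla\LM(\w_0)$ exactly, so the deterministic part of the estimator is $\nabla\LM(\w_0)=\nabla\LM(\x_t)$, whose norm is exactly $G_t$ rather than your $\nu_\LM D+G_t$. A single application of $\|u+v\|^2\le2\|u\|^2+2\|v\|^2$ together with $\|\nabla\LM_i(\w_m)-\nabla\LM_i(\w_0)\|\le\nu_\LM\|\w_m-\w_0\|\le\nu_\LM D$ (each $\LM_i$ is $\nu_\LM$-smooth by the same computation as Lemma~\ref{lemma:LipschitzLMx}, since the $\rho\A^T\A$ part is common to all $i$) then yields $2\nu_\LM^2D^2+2G_t^2$ on the nose. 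The fix to your write-up is simply to switch the anchor of the splitting from $\w_m$ to $\w_0$ and bound the random difference term pointwise by $\nu_\LM D$ instead of separating out a centered-noise variance.
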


\begin{proof}
According to~(\ref{eq:p_mt}), we have
\begin{align}
\p_{m,t} = &\v_{m,t} + \A^T\b_{m,t} \nonumber \\
        = &\nabla f_{i_m}(\w_m) + \A^T\b_{m,t} - f_{i_m}(\w_0) - \A^T\b_{0,t} + \z_t+ \A^T\b_{0,t} \nonumber \\
        = &\nabla \LM_{i_m}(\w_m)-\nabla \LM_{i_m}(\w_0)+\nabla \LM(\w_0) . \nonumber
\end{align}
Then we have:
\begin{align}
\EB ( \left\| \p_{m,t} \right\|^2 ) =  &\frac{1}{n} \sum_{i=1}^n \left\| \nabla \LM_i(\w_m)-\nabla \LM_i(\w_0)+\nabla \LM(\w_0) \right\|^2 \nonumber \\
\leq  &\frac{2}{n} \sum_{i=1}^n \{\left\| \nabla \LM_i(\w_m)-\nabla \LM_i(\w_0)\right\|^2 + \left\| \nabla \LM(\w_0) \right\|^2 \} \nonumber \\
\leq &\{\frac{2}{n} \sum_{i=1}^n \left\|\nabla \LM_i(\w_m)-\nabla \LM_i(\w_0) \right\|^2\} +2\left\| \nabla \LM(\w_0) \right\|^2 \nonumber \\
\leq &2\nu_\LM^2D^2 + 2G_t^2 .\nonumber
\end{align}
Please note that $\w_0 = \x_t$, and we use the Lipschitz definition to get the result:
\begin{align}
&\left\|\nabla \LM_i(\w_m)-\nabla \LM_i(\w_0) \right\|^2  \leq \nu^2_\LM \left\| \w_m - \w_0 \right\|^2  \leq \nu^2_\LM D^2 . \nonumber
\end{align}
\end{proof}

\begin{lemma}\label{lemma:convergenceX}
For the estimation of $\x_{t+1}$, we have the following result:
\begin{align}\label{eq:convergenceX}
\EB \left[f(\x_{t+1})-f(\x)+(\A^T\balpha_{t+1})^T  (\x_{t+1}-\x)\right] \leq \frac{D^2}{2M_t\eta_t} + \eta_t (\nu_\LM^2D^2 + G_t^2),
\end{align}
where $\balpha_{t+1} = \bbeta_t + \rho(\A\x_{t+1}+\B\y_t-\c)$.
\end{lemma}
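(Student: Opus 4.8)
The plan is to follow the standard variance-reduced SGD analysis (SVRG-style), adapted to the inner loop of Algorithm~\ref{alg:SCAS-ADMM}, but run against the function $\LM(\x)$ instead of $f(\x)$ alone. The key observation (already recorded in the notation section and Lemma~\ref{lemma:variancePmt}) is that the inner update direction $\p_{m,t} = \nabla\LM_{i_m}(\w_m) - \nabla\LM_{i_m}(\w_0) + \nabla\LM(\w_0)$ is an unbiased estimator of $\nabla\LM(\w_m)$, and that $\x_{t+1} = \frac{1}{M_t}\sum_{m=0}^{M_t-1}\w_m$ is the average of the inner iterates. So I would first analyze one inner step: since $\w_{m+1} = \pi_\XM(\w_m - \eta_t\p_{m,t})$ and $\x_* \in \XM$, non-expansiveness of the projection gives $\|\w_{m+1}-\x_*\|^2 \le \|\w_m - \x_* - \eta_t\p_{m,t}\|^2 = \|\w_m-\x_*\|^2 - 2\eta_t\p_{m,t}^T(\w_m-\x_*) + \eta_t^2\|\p_{m,t}\|^2$. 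Taking conditional expectation over $i_m$, using $\EB[\p_{m,t}] = \nabla\LM(\w_m)$ and Lemma~\ref{lemma:variancePmt} for the last term, I get
\begin{align}
\EB\|\w_{m+1}-\x_*\|^2 \le \|\w_m-\x_*\|^2 - 2\eta_t [\nabla\LM(\w_m)]^T(\w_m-\x_*) + 2\eta_t^2(\nu_\LM^2 D^2 + G_t^2). \nonumber
\end{align}

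Next I would convert the inner-product term into a function-value gap. Here I want a bound of the form $[\nabla\LM(\w_m)]^T(\w_m-\x) \ge \LM(\w_m) - \LM(\x)$, which is just convexity of $\LM$. But the statement of the lemma is phrased in terms of $f$ and the linear term $(\A^T\balpha_{t+1})^T(\x_{t+1}-\x)$, not in terms of $\LM$ evaluated at the average. The bridge is the identity $\LM(\x) = f(\x) + g(\y_t) + \bbeta_t^T(\A\x + \B\y_t - \c) + \frac{\rho}{2}\|\A\x+\B\y_t-\c\|^2$, so $\LM(\w_m) - \LM(\x) = f(\w_m) - f(\x) + [\text{affine} + \text{quadratic in } \A\cdot - \text{terms}]$. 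After summing over $m$ and dividing by $M_t$, Jensen/convexity lets me pull the average inside $f$ (giving $f(\x_{t+1})$) and inside the quadratic; the telescoping $\sum_m(\|\w_m-\x_*\|^2 - \|\w_{m+1}-\x_*\|^2) = \|\w_0-\x_*\|^2 - \|\w_{M_t}-\x_*\|^2 \le D^2$ produces the $\frac{D^2}{2M_t\eta_t}$ term. So after dividing by $2M_t\eta_t$ I obtain
\begin{align}
\EB\Big[f(\x_{t+1}) - f(\x) + \bbeta_t^T\A(\x_{t+1}-\x) + \tfrac{\rho}{2}\|\A\x_{t+1}+\B\y_t-\c\|^2 - \tfrac{\rho}{2}\|\A\x+\B\y_t-\c\|^2\Big] \le \tfrac{D^2}{2M_t\eta_t} + \eta_t(\nu_\LM^2 D^2 + G_t^2). \nonumber
\end{align}
It remains to recognize that $\bbeta_t^T\A(\x_{t+1}-\x) + \frac{\rho}{2}\|\A\x_{t+1}+\B\y_t-\c\|^2 - \frac{\rho}{2}\|\A\x+\B\y_t-\c\|^2 \ge (\A^T\balpha_{t+1})^T(\x_{t+1}-\x)$, where $\balpha_{t+1} = \bbeta_t + \rho(\A\x_{t+1}+\B\y_t-\c)$; this follows because the quadratic $q(\x) := \frac{\rho}{2}\|\A\x+\B\y_t-\c\|^2 + \bbeta_t^T\A\x$ is convex with $\nabla q(\x_{t+1}) = \A^T\balpha_{t+1}$, so $q(\x) \ge q(\x_{t+1}) + (\A^T\balpha_{t+1})^T(\x - \x_{t+1})$, i.e. $q(\x_{t+1}) - q(\x) \le (\A^T\balpha_{t+1})^T(\x_{t+1}-\x)$ — wait, the inequality goes the wrong way, so I must instead keep the quadratic terms and use convexity of $q$ to \emph{lower-bound} $q(\x_{t+1}) - q(\x)$ by the gradient at $\x$, or better, directly expand $\frac{\rho}{2}\|\A\x_{t+1}+\B\y_t-\c\|^2 - \frac{\rho}{2}\|\A\x+\B\y_t-\c\|^2$ using the parallelogram-type identity $\frac{1}{2}\|a\|^2 - \frac{1}{2}\|b\|^2 = b^T(a-b) + \frac{1}{2}\|a-b\|^2$ with $a = \A\x_{t+1}+\B\y_t-\c$, $b = \A\x+\B\y_t-\c$, which yields exactly $(\A^T\balpha_{t+1})^T(\x_{t+1}-\x) + \frac{\rho}{2}\|\A(\x_{t+1}-\x)\|^2$, and then drop the nonnegative $\frac{\rho}{2}\|\A(\x_{t+1}-\x)\|^2$. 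That closes the gap.

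\textbf{Main obstacle.} The delicate point is handling the average $\x_{t+1} = \frac{1}{M_t}\sum_m \w_m$ correctly: convexity of $f$ and of the quadratic penalty lets me push the average inside (Jensen), but I must be careful that the per-step inequality above involves $\nabla\LM(\w_m)$ at the \emph{individual} iterate while the variance bound involves $D$ and $G_t$ uniformly, and that the cross terms $2\eta_t^2(\nu_\LM^2 D^2 + G_t^2)$ sum to $2M_t\eta_t^2(\nu_\LM^2D^2+G_t^2)$, which after division by $2M_t\eta_t$ gives precisely $\eta_t(\nu_\LM^2D^2+G_t^2)$. A secondary subtlety is that the conditional expectations are nested (over $i_0,\dots,i_{M_t-1}$ given $(\x_t,\y_t,\bbeta_t)$), so I should take the telescoping sum first and then the total expectation, using the tower property, rather than trying to bound each step in isolation. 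Aside from these bookkeeping issues the argument is routine.
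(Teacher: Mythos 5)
Your setup (projection non-expansiveness, unbiasedness of $\p_{m,t}$, the variance bound from Lemma~\ref{lemma:variancePmt}, telescoping to get $\frac{D^2}{2M_t\eta_t}$, and Jensen on the average) matches the paper's proof. The gap is in your final bridging step, and it is a real one: you lower-bound $[\nabla\LM(\w_m)]^T(\w_m-\x)$ by $\LM(\w_m)-\LM(\x)$ and then try to convert the resulting quadratic difference back into the linear term $(\A^T\balpha_{t+1})^T(\x_{t+1}-\x)$. Writing $a=\A\x_{t+1}+\B\y_t-\c$ and $b=\A\x+\B\y_t-\c$, the expansion around $b$ gives
\begin{align}
\bbeta_t^T\A(\x_{t+1}-\x)+\tfrac{\rho}{2}\|a\|^2-\tfrac{\rho}{2}\|b\|^2
=\big(\A^T(\bbeta_t+\rho b)\big)^T(\x_{t+1}-\x)+\tfrac{\rho}{2}\|\A(\x_{t+1}-\x)\|^2, \nonumber
\end{align}
i.e.\ the multiplier comes out evaluated at $\x$, not at $\x_{t+1}$; the expansion around $a$ does produce $\balpha_{t+1}=\bbeta_t+\rho a$, but with a \emph{subtracted} remainder: the left-hand side equals $(\A^T\balpha_{t+1})^T(\x_{t+1}-\x)-\tfrac{\rho}{2}\|\A(\x_{t+1}-\x)\|^2$. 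So what you have actually bounded is the target minus a nonnegative quantity, and "dropping" that quantity moves the inequality in the wrong direction. This is precisely the content of the paper's Lemma~\ref{lemma3.10} ($\LM(\w)-\LM(\x)\ge f(\w)-f(\x)+(\A^T\b)^T(\w-\x)-\tfrac{\rho\lambda_1}{2}\|\w-\x\|^2$), and the paper only uses that route in the strongly convex case, where the leftover $-\tfrac{\rho\lambda_1}{2}\|\cdot\|^2$ can be absorbed by $\mu_f$. In the general convex setting of this lemma there is nothing to absorb it.

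The fix, which is what the paper does, is to never pass through $\LM(\w_m)-\LM(\x)$: split $\EB[\p_{m,t}]=\nabla f(\w_m)+\A^T\b_{m,t}$, apply convexity only to the $f$ part ($\nabla f(\w_m)^T(\w_m-\x)\ge f(\w_m)-f(\x)$), and carry the term $(\A^T\b_{m,t})^T(\w_m-\x)$ through \emph{as an identity}. The map $\w\mapsto f(\w)-f(\x)+(\A^T\b(\w))^T(\w-\x)$ with $\b(\w)=\bbeta_t+\rho(\A\w+\B\y_t-\c)$ has Hessian $\nabla^2 f+2\rho\A^T\A\succeq 0$, so Jensen applies to it directly and evaluating it at $\x_{t+1}=\frac{1}{M_t}\sum_m\w_m$ yields exactly $f(\x_{t+1})-f(\x)+(\A^T\balpha_{t+1})^T(\x_{t+1}-\x)$ with no leftover quadratic. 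With that substitution the rest of your argument (telescoping, the $2M_t\eta_t^2(\nu_\LM^2D^2+G_t^2)$ accounting, and the tower property over $i_0,\dots,i_{M_t-1}$) goes through and recovers the stated bound.
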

\begin{proof}
Since $\XM$ is convex, we have:

$\forall \x \in \XM$,
\begin{align}\label{eq:wm_minus_x}
     \left\|\w_{m+1}-\x\right\|^2 &\leq \left\|\w_m-\eta_t \p_{m,t} -\x  \right\|^2  \\
   &= \left\|\w_m-\x\right\|^2 - 2\eta_t \p_{m,t}^T (\w_m-\x) + \eta_t^2\left\| \p_{m,t} \right\|^2 \nonumber.
\end{align}
Furthermore, it is easy to prove that $\EB\left[\v_{m,t}\right]=\nabla f(\w_m)$. Based on the results in Lemma~\ref{lemma:variancePmt}, we can get the expectation on~(\ref{eq:wm_minus_x}):
\begin{align}\label{eq:Exp_wm_minus_x}
     &\EB \left[\left\|\w_{m+1}-\x\right\|^2 \right]\\
\leq &\EB \left[\left\|\w_m-\x\right\|^2\right] - 2\eta_t \EB[(\nabla f(\w_m)+\A^T\b_{m,t})^T (\w_m-\x)] + \eta_t^2 \EB(\left\| \p_{m,t} \right\|^2) \nonumber\\
\leq &\EB \left[\left\|\w_m-\x\right\|^2\right] +\eta_t^2(2\nu_\LM^2D^2 + 2G_t^2) - 2\eta_t \EB[f(\w_m)-f(\x)+(\A^T\b_{m,t})^T (\w_m-\x)]. \nonumber
\end{align}
Summing up~(\ref{eq:Exp_wm_minus_x}) from $m=0$ to $M_t-1$, we can get:
\begin{align}
&2\eta_t\sum_{m=0}^{M_t - 1}\EB[f(\w_m)-f(\x)+(\A^T\b_{m,t})^T (\w_m-\x)] \nonumber \\
\leq &\left\|\w_0-\x\right\|^2 + M_t \eta_t^2(2\nu_L^2D^2 + 2G_t^2) - \EB \left[\left\|\w_{M_t}-\x\right\|^2 \right]\nonumber \\
\leq &\left\|\w_0-\x\right\|^2 + M_t \eta_t^2(2\nu_L^2D^2 + 2G_t^2) \nonumber \\
\leq &D^2 + M_t \eta_t^2(2\nu_\LM^2D^2 + 2G_t^2). \nonumber
\end{align}

We can prove that $f(\w_m)-f(\x)+(\A^T\b_{m,t})^T (\w_m-\x)$ is convex in $\w_m$. Furthermore, we have $\x_{t+1} = \frac{1}{M_t} \sum_{m=0}^{M_t-1} \w_m$. By using the Jensen's inequality, we have:
\begin{align}
&2\eta_t  M_t\EB \left[f(\x_{t+1})-f(\x)+(\A^T\balpha_{t+1})^T (\x_{t+1}-\x)\right] \nonumber \\
\leq &2\eta_t\sum_{m=0}^{M_t - 1}\EB[f(\w_m)-f(\x)+(\A^T\b_{m,t})^T (\w_m-\x)] \nonumber \\
\leq &D^2 + M_t \eta_t^2(2\nu_\LM^2D^2 + 2G_t^2), \nonumber
\end{align}
where $\balpha_{t+1} = \bbeta_t + \rho(\A\x_{t+1}+\B\y_t-\c)$.

Then, we can get:
\begin{align}
&\EB\left[f(\x_{t+1})-f(\x)+(\A^T\balpha_{t+1})^T (\x_{t+1}-\x)\right] \leq \frac{D^2}{2M_t\eta_t} + \eta_t (\nu_\LM^2D^2 + G_t^2).
\end{align}
\end{proof}

According to the results in~\cite{DBLP:conf/icml/ZhongK14}, we have the following Lemma~\ref{lemma:y} and Lemma~\ref{lemma:alpha} about the estimation of $\y_{t+1}$ and $\balpha_{t+1}$.
\begin{lemma}\label{lemma:y}
For the estimation of $\y_{t+1}$, we have:
\begin{align}
&\EB\left[g(\y_{t+1})-g(\y)+(\B^T\balpha_{t+1})^T(\y_{t+1}-\y)\right]  \label{eq:convergenceY}\\
&\leq \frac{\rho}{2}\EB\left[ \left\| \y_t-\y \right\|_\H^2-\left\| \y_{t+1}-\y \right\|_\H^2-\left\| \y_t-\y_{t+1} \right\|_\H^2 \right],\nonumber
\end{align}
where $\balpha_{t+1} = \bbeta_t + \rho(\A\x_{t+1}+\B\y_t-\c)$, $\H = \B^T\B$, and $\left\| \y \right\|_\H^2 = \y^T \H \y $.
\end{lemma}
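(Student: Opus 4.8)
The plan is to mimic the standard ADMM convergence analysis for the $\y$-update, specializing the update $\y_{t+1}=\arg\min_\y L(\x_{t+1},\y,\bbeta_t)$ and extracting a telescoping term in the $\H$-seminorm. First I would write down the first-order optimality condition for the minimization defining $\y_{t+1}$. Since $g(\y)$ is convex but possibly nonsmooth, this condition is a subgradient inclusion: there exists $\partial g(\y_{t+1})\ni -\B^T\bbeta_t - \rho\B^T(\A\x_{t+1}+\B\y_{t+1}-\c)$. Rewriting the penalty term around $\y_t$ rather than $\y_{t+1}$, and using the definition $\balpha_{t+1}=\bbeta_t+\rho(\A\x_{t+1}+\B\y_t-\c)$, the inclusion becomes $\partial g(\y_{t+1})\ni -\B^T\balpha_{t+1} - \rho\H(\y_{t+1}-\y_t)$, where $\H=\B^T\B$.

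Next I would invoke the subgradient (convexity) inequality for $g$ at $\y_{t+1}$ tested against an arbitrary $\y$: $g(\y)\ge g(\y_{t+1}) + \xi^T(\y-\y_{t+1})$ where $\xi$ is the subgradient above. Substituting $\xi = -\B^T\balpha_{t+1} - \rho\H(\y_{t+1}-\y_t)$ and rearranging gives
\begin{align}
g(\y_{t+1})-g(\y)+(\B^T\balpha_{t+1})^T(\y_{t+1}-\y) \le \rho(\y_t-\y_{t+1})^T\H(\y_{t+1}-\y). \nonumber
\end{align}
The final step is the standard three-term identity for inner products in the $\H$-seminorm: $2(\y_t-\y_{t+1})^T\H(\y_{t+1}-\y) = \|\y_t-\y\|_\H^2 - \|\y_{t+1}-\y\|_\H^2 - \|\y_t-\y_{t+1}\|_\H^2$. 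Plugging this in yields exactly the claimed bound; taking expectations (the inequality holds deterministically, so $\EB[\cdot]$ is cosmetic here, inserted only for consistency with the rest of the proof chain) finishes it.

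The main subtlety — rather than a genuine obstacle — is handling the nonsmoothness of $g$ cleanly: one must phrase everything through subgradients and the variational inequality characterization of the minimizer, not through gradients, and be careful that the minimizer $\y_{t+1}$ exists and the optimality condition is valid (which follows from convexity of $L(\x_{t+1},\cdot,\bbeta_t)$ in $\y$). A secondary bookkeeping point is correctly absorbing the quadratic penalty term when re-centering from $\y_{t+1}$ to $\y_t$, since $\|\A\x_{t+1}+\B\y_{t+1}-\c\|^2$ must be rewritten in terms of $\balpha_{t+1}$ and $\H(\y_{t+1}-\y_t)$; this is precisely where the $\rho\H(\y_{t+1}-\y_t)$ correction term in the subgradient arises. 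Since this lemma is quoted from \cite{DBLP:conf/icml/ZhongK14}, I would keep the argument terse and refer to that reference for the fully spelled-out computation.
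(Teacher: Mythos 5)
Your proposal is correct, and in fact it supplies more than the paper does: the paper omits the proof of this lemma entirely, stating only that it ``can be directly derived from the results in \cite{DBLP:conf/icml/ZhongK14}.'' Your derivation --- the subgradient optimality condition for the $\y$-update, the re-centering of the penalty term that turns $\bbeta_t + \rho(\A\x_{t+1}+\B\y_{t+1}-\c)$ into $\balpha_{t+1} + \rho\H(\y_{t+1}-\y_t)$, and the three-point identity in the $\H$-seminorm --- is exactly the standard argument the cited reference relies on, and each step checks out.
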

\begin{lemma}\label{lemma:alpha}
For the estimation of $\balpha_{t+1}$, we have:
\begin{align}
&\EB\left[-(\A\x_{t+1}+\B\y_{t+1}-\c)^T(\balpha_{t+1}-\balpha)\right]  \label{eq:convergenceAlpha}\\
&\leq \frac{1}{2\rho}\EB\left[ \left\| \bbeta_t -\balpha \right\|^2 -\left\| \bbeta_{t+1} - \balpha \right\|^2 \right] + \frac{\rho}{2}\EB\left[ \left\| \y_t - \y_{t+1} \right\|_\H^2 \right], \nonumber
\end{align}
where $\balpha_{t+1} = \bbeta_t + \rho(\A\x_{t+1}+\B\y_t-\c)$, $\H = \B^T\B$, and $\left\| \y \right\|_\H^2 = \y^T \H \y $.
\end{lemma}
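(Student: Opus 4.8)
The plan is to establish the inequality pointwise—i.e.\ for every realization of the random indices drawn in the inner loop—and then take expectations at the very end, since taking $\EB$ of a deterministic inequality preserves it. First I would eliminate the primal residual using the multiplier update $\bbeta_{t+1} = \bbeta_t + \rho(\A\x_{t+1}+\B\y_{t+1}-\c)$, which gives $\A\x_{t+1}+\B\y_{t+1}-\c = \frac{1}{\rho}(\bbeta_{t+1}-\bbeta_t)$. The left-hand side of~(\ref{eq:convergenceAlpha}) then becomes $-\frac{1}{\rho}(\bbeta_{t+1}-\bbeta_t)^T(\balpha_{t+1}-\balpha)$, turning the claim into a purely dual-variable statement.

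Next I would relate the intermediate multiplier $\balpha_{t+1}$ to the true multiplier $\bbeta_{t+1}$. Since $\balpha_{t+1} = \bbeta_t + \rho(\A\x_{t+1}+\B\y_t-\c)$ differs from $\bbeta_{t+1}$ only in using $\y_t$ instead of $\y_{t+1}$, subtracting the two update formulas yields $\balpha_{t+1} = \bbeta_{t+1} - \rho\B(\y_{t+1}-\y_t)$. Substituting $\balpha_{t+1}-\balpha = (\bbeta_{t+1}-\balpha) - \rho\B(\y_{t+1}-\y_t)$ splits the left-hand side into two pieces: a ``pure'' term $-\frac{1}{\rho}(\bbeta_{t+1}-\bbeta_t)^T(\bbeta_{t+1}-\balpha)$ and a cross term $(\bbeta_{t+1}-\bbeta_t)^T\B(\y_{t+1}-\y_t)$. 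For the pure term I would apply the standard three-point polarization identity $(a-b)^T(a-c)=\frac{1}{2}(\left\| a-b\right\|^2+\left\| a-c\right\|^2-\left\| b-c\right\|^2)$ with $a=\bbeta_{t+1}$, $b=\bbeta_t$, $c=\balpha$; this produces exactly the telescoping quantity $\frac{1}{2\rho}(\left\|\bbeta_t-\balpha\right\|^2-\left\|\bbeta_{t+1}-\balpha\right\|^2)$ together with a leftover $-\frac{1}{2\rho}\left\|\bbeta_{t+1}-\bbeta_t\right\|^2$.

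The only genuine step is combining this leftover negative square with the cross term. Applying Young's inequality $u^T v \leq \frac{1}{2\rho}\left\| u\right\|^2+\frac{\rho}{2}\left\| v\right\|^2$ with $u=\bbeta_{t+1}-\bbeta_t$ and $v=\B(\y_{t+1}-\y_t)$ bounds the cross term by $\frac{1}{2\rho}\left\|\bbeta_{t+1}-\bbeta_t\right\|^2+\frac{\rho}{2}\left\|\B(\y_{t+1}-\y_t)\right\|^2$; the first summand cancels the leftover term precisely, and the second equals $\frac{\rho}{2}\left\|\y_t-\y_{t+1}\right\|_\H^2$ because $\H=\B^T\B$. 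Taking $\EB$ of the resulting pointwise inequality delivers the stated bound. I expect no real obstacle—the argument is classical ADMM dual-variable bookkeeping—so the main thing to watch is keeping the signs and the factors of $\rho$ straight through the two substitutions, and noting that the Young's-inequality constant must be chosen as $\rho$ exactly so that the leftover square cancels rather than merely being absorbed.
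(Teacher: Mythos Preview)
Your proposal is correct: the substitution $\A\x_{t+1}+\B\y_{t+1}-\c=\tfrac{1}{\rho}(\bbeta_{t+1}-\bbeta_t)$, the relation $\balpha_{t+1}=\bbeta_{t+1}-\rho\B(\y_{t+1}-\y_t)$, the three-point identity, and the Young inequality with constant $\rho$ combine exactly as you describe, and taking expectations at the end is harmless since the inequality holds pointwise. The paper itself does not give a proof of this lemma---it simply states that the result ``can be directly derived from the results in~\cite{DBLP:conf/icml/ZhongK14}'' and omits the details---so your argument is precisely the standard ADMM dual-variable bookkeeping that underlies that citation.
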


The proof of Lemma~\ref{lemma:y} and Lemma~\ref{lemma:alpha} can be directly derived from the results in~\cite{DBLP:conf/icml/ZhongK14}, which is omitted here for space saving.

\section{Proof of Theorem~\ref{theorem:convergenceGeneralConvex}}
\begin{proof}
Let $\u = \left(\begin{array}{c}\x\\\y\\\balpha\\\end{array}\right)$,  $\u_t = \left(\begin{array}{c}\x_t\\\y_t\\\balpha_t\\\end{array}\right)$,  $\bar{\u}_T = \frac{1}{T}\sum_{t=1}^T \u_t$, and $F(\u) = \left(\begin{array}{c}\A^T\balpha\\\B^T \balpha\\-(\A\x+\B\y-\c)\\\end{array}\right)$.

Summing up the equations in~(\ref{eq:convergenceX}), (\ref{eq:convergenceY}) and~(\ref{eq:convergenceAlpha}), we have:
\begin{align}\label{eq:convergenceXYAlpha}
       &\EB \left[ P(\x_{t+1},\y_{t+1})-P(\x,\y) + F(\u_{t+1})^T(\u_{t+1} - \u) \right] \\
\leq &\frac{D^2}{2M_t\eta_t} + \eta_t (\nu_\LM^2D^2 + G_t^2) \nonumber \\
       &+ \frac{\rho}{2}\EB \left[ \left\| \y_t-\y \right\|_\H^2-\left\| \y_{t+1}-\y \right\|_\H^2-\left\| \y_t-\y_{t+1} \right\|_\H^2 \right]\nonumber \\
       &+ \frac{1}{2\rho}\EB \left[ \left\| \bbeta_t - \balpha \right\|^2 -\left\| \bbeta_{t+1} - \balpha \right\|^2 \right] + \frac{\rho}{2}\EB \left[ \left\| \y_t - \y_{t+1} \right\|_\H^2 \right]. \nonumber
\end{align}
It is easy to prove that $ P(\x_{t},\y_{t})-P(\x,\y) + F(\u_{t})^T(\u_{t} - \u)$ is convex in $(\x_t,\y_t)$. Moreover, we have $\bar{\x}_T = \frac{1}{T}\sum_{t=1}^T \x_t$ and $\bar{\y}_T = \frac{1}{T}\sum_{t=1}^T \y_t$. By using the Jensen's inequality, we have:
\begin{align}\label{eq:convergenceXYAlphaJensen}
&P(\bar{\x}_T,\bar{\y}_T)-P(\x,\y) + F(\bar{\u}_T)^T(\bar{\u}_T - \u) \\
\leq &\frac{1}{T} \sum_{t=1}^T \left[P(\x_t,\y_t)-P(\x,\y) + F(\u_t)^T(\u_t - \u) \right].\nonumber
\end{align}
Summing up~(\ref{eq:convergenceXYAlpha}) from $t=0$ to $T-1$, and using the result in~(\ref{eq:convergenceXYAlphaJensen}), we have:
\begin{align}\label{eq:convergenceAll}
&\EB \left[ P(\bar{\x}_T,\bar{\y}_T)-P(\x,\y) + F(\bar{\u}_T)^T(\bar{\u}_T - \u) \right] \nonumber \\
\leq &\frac{1}{T} \sum_{t=0}^{T-1} \EB \left[P(\x_{t+1},\y_{t+1})-P(\x,\y) + F(\u_{t+1})^T(\u_{t+1} - \u) \right] \nonumber \\
\leq &\frac{1}{T}\sum_{t=0}^{T-1} \left[\frac{D^2}{2M_t\eta_t} + \eta_t (\nu_\LM^2D^2 + G_t^2)\right] \nonumber \\
       &+ \frac{\rho}{2T} \left\| \y_0 - \y \right\|_\H^2 + \frac{1}{2\rho T} \left\| \bbeta_0 - \balpha \right\|^2.
\end{align}
The result in~(\ref{eq:convergenceAll}) is satisfied for any $(\x,\y,\balpha)$. In particular, if we take $\x=\x_*$, $\y=\y_*$ and $\balpha  = \gamma \frac{\A\bar{\x}_T+\B\bar{\y}_T-\c}{\left\| \A\bar{\x}_T+\B\bar{\y}_T-\c \right\|}$, we have:
\begin{align}
&\EB \left[ P(\bar{\x}_T,\bar{\y}_T)-P(\x_*,\y_*) + \gamma \left\| \A\bar{\x}_T+\B\bar{\y}_T-\c \right\| \right] \\
\leq &\frac{1}{T}\sum_{t=0}^{T-1} \left[\frac{D^2}{2M_t\eta_t} + \eta_t (\nu_\LM^2D^2 + G_t^2)\right] \nonumber \\
       &+ \frac{\rho}{2T} \left\| \y_0 - \y_* \right\|_H^2 + \frac{1}{\rho T}( \left\| \bbeta_0 \right\|^2 + \gamma^2).\nonumber
\end{align}
\end{proof}

\section{Lemmas for the Proof of Theorem~\ref{theorem:convergenceStronglyConvex}}

\begin{lemma} \label{lemma3.7}
The variance of $\p_{m,t}$ satisfies:
\begin{align}
\forall \x, \hspace{0.2cm} \EB (\left\| \p_{m,t} \right\|^2) &\leq  d_m + \left\| \nabla \LM(\w_m) \right\|^2  \nonumber \\
&\leq 2\nu_\LM^2 \left\| \w_m - \x \right\|^2 + 2\nu_\LM^2 \left\| \w_0 - \x \right\|^2 + \left\| \nabla \LM(\w_m) \right\|^2.
\end{align}
where  $d_m \triangleq \frac{1}{n} \sum_{i=1}^n \left\| \nabla \LM_i(\w_m) - \nabla \LM_i(\w_0) \right\|^2$.
\end{lemma}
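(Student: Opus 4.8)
The plan is to reuse the algebraic identity already extracted inside the proof of Lemma~\ref{lemma:variancePmt}, namely that the search direction is a (shifted) SVRG estimator,
\begin{align}
\p_{m,t} = \nabla \LM_{i_m}(\w_m) - \nabla \LM_{i_m}(\w_0) + \nabla \LM(\w_0), \nonumber
\end{align}
but this time, instead of the crude split $\|a+b\|^2\le 2\|a\|^2+2\|b\|^2$ used there, to exploit the \emph{unbiasedness} of the estimator to get a sharper, iterate-dependent bound with coefficient $1$ on $\|\nabla\LM(\w_m)\|^2$.

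First I would record that, taking the expectation over the uniform draw of $i_m$ conditioned on $\w_0,\ldots,\w_m$ (recall $\w_0=\x_t$ is fixed and $\w_m$ is a function of $i_0,\ldots,i_{m-1}$), one has $\EB[\p_{m,t}] = \nabla\LM(\w_m)$: since $\LM(\cdot)=\frac1n\sum_{i=1}^n\LM_i(\cdot)$, we get $\EB[\nabla\LM_{i_m}(\w_m)-\nabla\LM_{i_m}(\w_0)]=\nabla\LM(\w_m)-\nabla\LM(\w_0)$, and adding the deterministic term $\nabla\LM(\w_0)$ gives the claim. Then I would apply the bias--variance decomposition $\EB\|Z\|^2=\|\EB Z\|^2+\EB\|Z-\EB Z\|^2$ with $Z=\p_{m,t}$, which yields
\begin{align}
\EB(\|\p_{m,t}\|^2) = \|\nabla\LM(\w_m)\|^2 + \EB\bigl\|\nabla\LM_{i_m}(\w_m)-\nabla\LM_{i_m}(\w_0)-(\nabla\LM(\w_m)-\nabla\LM(\w_0))\bigr\|^2. \nonumber
\end{align}
Bounding the variance term above by its second moment $\EB\|\nabla\LM_{i_m}(\w_m)-\nabla\LM_{i_m}(\w_0)\|^2=d_m$ (drop the subtracted $-\|\EB[\cdot]\|^2\le 0$) gives the first inequality $\EB(\|\p_{m,t}\|^2)\le d_m+\|\nabla\LM(\w_m)\|^2$.

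For the second inequality I would bound $d_m$ term-by-term: for any fixed $\x$, insert $\pm\nabla\LM_i(\x)$ and use $\|a+b\|^2\le 2\|a\|^2+2\|b\|^2$ to get $\|\nabla\LM_i(\w_m)-\nabla\LM_i(\w_0)\|^2\le 2\|\nabla\LM_i(\w_m)-\nabla\LM_i(\x)\|^2+2\|\nabla\LM_i(\x)-\nabla\LM_i(\w_0)\|^2$. Each $\LM_i$ is $\nu_\LM$-smooth by exactly the argument of Lemma~\ref{lemma:LipschitzLMx} (replace $f$ by $f_i$, which is also $\nu_f$-smooth, and note that the extra quadratic/linear terms in $\LM_i$ are identical to those in $\LM$), so each right-hand norm is bounded by $\nu_\LM\|\w_m-\x\|$ and $\nu_\LM\|\w_0-\x\|$ respectively; averaging over $i$ gives $d_m\le 2\nu_\LM^2\|\w_m-\x\|^2+2\nu_\LM^2\|\w_0-\x\|^2$, and adding $\|\nabla\LM(\w_m)\|^2$ concludes.

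The only genuinely delicate point is the unbiasedness step: one must verify that the SVRG correction makes $\p_{m,t}$ unbiased for $\nabla\LM(\w_m)$ and not merely for $\nabla f(\w_m)$, which hinges on the observation that the extra ADMM term $\A^T\bbeta_t+\rho\A^T(\A\w_m+\B\y_t-\c)=\A^T\b_{m,t}$ in the update does not depend on the random index $i_m$ and therefore passes through the conditional expectation unchanged, combining cleanly with $\z_t=\nabla f(\w_0)$ to reconstruct $\nabla\LM(\w_m)$. Once this identity is in place, everything else is the standard variance-reduction bookkeeping together with the smoothness estimate already proved in Lemma~\ref{lemma:LipschitzLMx}.
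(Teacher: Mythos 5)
Your proof is correct and follows essentially the same route as the paper: the paper's second displayed equality is exactly your bias--variance decomposition $\EB\|\p_{m,t}\|^2=\|\nabla\LM(\w_m)\|^2-\|\nabla\LM(\w_m)-\nabla\LM(\w_0)\|^2+d_m$, after which the negative term is dropped and smoothness of each $\LM_i$ is applied. The only cosmetic difference is in the final step, where the paper first bounds $d_m\leq\nu_\LM^2\|\w_m-\w_0\|^2$ and then splits $\|\w_m-\w_0\|^2\leq 2\|\w_m-\x\|^2+2\|\w_0-\x\|^2$, whereas you split at the gradient level first; both yield the identical bound.
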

\begin{proof}
$\forall \x$
\begin{align}
\EB (\left\| \p_{m,t} \right\|^2) = &\frac{1}{n} \sum_{i=1}^n \left\| \nabla \LM_i(\w_m) - \nabla \LM_i(\w_0) + \nabla \LM(\w_0) \right\|^2 \nonumber \\
= &\left\| \nabla \LM(\w_m) \right\|^2 - \left\| \nabla \LM(\w_m)-  \nabla \LM(\w_0) \right\|^2  + \frac{1}{n} \sum_{i=1}^n \left\| \nabla \LM_i(\w_m) - \nabla \LM_i(\w_0) \right\|^2 \nonumber \\
\leq &\left\| \nabla \LM(\w_m) \right\|^2 + \frac{1}{n} \sum_{i=1}^n \left\| \nabla \LM_i(\w_m) - \nabla \LM_i(\w_0) \right\|^2 \nonumber \\
\leq &\nu_\LM^2 \left\| \w_m - \w_0 \right\|^2 + \left\| \nabla \LM(\w_m) \right\|^2 \nonumber \\
\leq &2\nu_\LM^2 \left\| \w_m - \x \right\|^2 + 2\nu_\LM^2 \left\| \w_0 - \x \right\|^2 + \left\| \nabla \LM(\w_m) \right\|^2 . \nonumber
\end{align}
\end{proof}

\begin{lemma}\label{lemma3.8}
If $\eta - \frac{\nu_\LM \eta^2}{2}>0$, we have the following result for the variance of $\nabla \LM(\w_m)$:
\begin{align}
\left\| \nabla \LM(\w_m) \right\|^2 \leq \frac{1}{\eta - \frac{\nu_\LM \eta^2}{2}} (\LM(\w_m) - \EB [\LM(\w_{m+1})]) + \frac{\nu_\LM \eta}{2 - \nu_\LM\eta} d_m.
\end{align}

\end{lemma}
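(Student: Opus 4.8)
The plan is to obtain this bound as a one-step ``descent inequality'' for the inner SVRG-type update, combining the smoothness of $\LM$ established in Lemma~\ref{lemma:LipschitzLMx} with the variance estimate of Lemma~\ref{lemma3.7}. Throughout, all expectations are conditional on everything generated before the $m$-th inner step, so that $\w_m$, $\w_0$, $\z_t$ and $d_m$ are treated as fixed quantities and the only randomness is the sampled index $i_m$.

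First I would write the inner update as $\w_{m+1} = \w_m - \eta\,\p_{m,t}$ and plug it into the $\nu_\LM$-smoothness inequality for $\LM$ at the pair $(\w_m,\w_{m+1})$, which gives
\begin{align}
\LM(\w_{m+1}) \le \LM(\w_m) - \eta\,\nabla\LM(\w_m)^T\p_{m,t} + \tfrac{\nu_\LM\eta^2}{2}\|\p_{m,t}\|^2 . \nonumber
\end{align}
Next I would take the conditional expectation over $i_m$. The key algebraic fact is unbiasedness, $\EB[\p_{m,t}] = \nabla\LM(\w_m)$, which follows from the representation $\p_{m,t} = \nabla\LM_{i_m}(\w_m) - \nabla\LM_{i_m}(\w_0) + \nabla\LM(\w_0)$ used in the proof of Lemma~\ref{lemma:variancePmt} together with $\tfrac1n\sum_{i}\nabla\LM_i(\cdot) = \nabla\LM(\cdot)$; it turns the linear term into $-\eta\|\nabla\LM(\w_m)\|^2$. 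Then I would bound the quadratic term by the first inequality of Lemma~\ref{lemma3.7}, $\EB\|\p_{m,t}\|^2 \le d_m + \|\nabla\LM(\w_m)\|^2$, and collect the $\|\nabla\LM(\w_m)\|^2$ contributions on the left-hand side:
\begin{align}
\bigl(\eta - \tfrac{\nu_\LM\eta^2}{2}\bigr)\|\nabla\LM(\w_m)\|^2 \le \LM(\w_m) - \EB[\LM(\w_{m+1})] + \tfrac{\nu_\LM\eta^2}{2}\,d_m . \nonumber
\end{align}
Finally, since the hypothesis $\eta - \tfrac{\nu_\LM\eta^2}{2} > 0$ makes the coefficient on the left strictly positive, I divide through by it and simplify $\bigl(\tfrac{\nu_\LM\eta^2}{2}\bigr)\big/\bigl(\eta - \tfrac{\nu_\LM\eta^2}{2}\bigr) = \tfrac{\nu_\LM\eta}{2 - \nu_\LM\eta}$ to recover the stated inequality.

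The calculation itself is short, and the only genuinely delicate point is the treatment of the projection $\pi_\XM$ in the actual update, which is also why the strongly convex analysis is phrased so as not to require a bounded $\XM$. With a projection present one uses non-expansiveness, $\|\w_{m+1}-\w_m\| = \|\pi_\XM(\w_m - \eta\p_{m,t}) - \pi_\XM(\w_m)\| \le \eta\|\p_{m,t}\|$ (legitimate since $\w_m \in \XM$), to keep control of the quadratic term, and the first-order optimality of the projected point tested against $\x = \w_m$ to keep the correct sign of the linear term; this bookkeeping does not change any of the constants. Hence I expect the main effort to go into phrasing the projection step cleanly rather than into any nontrivial estimate.
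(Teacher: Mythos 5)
Your argument is exactly the paper's proof: apply the $\nu_\LM$-smoothness inequality along the step $\w_{m+1}-\w_m=-\eta\,\p_{m,t}$, take the conditional expectation using $\EB[\p_{m,t}]=\nabla\LM(\w_m)$, bound the quadratic term via the first inequality of Lemma~\ref{lemma3.7}, then collect the $\|\nabla\LM(\w_m)\|^2$ terms and divide by $\eta-\tfrac{\nu_\LM\eta^2}{2}>0$. The only difference is your closing remark on the projection $\pi_\XM$, which the paper silently omits here (consistent with its statement that no constraint set is needed in the strongly convex analysis); your sketch of how to restore it is not fully worked out, but it is an addition to, not a gap in, the argument the paper actually gives.
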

\begin{proof}
Since $\LM(\w)$ is convex in $\w$, we can get
\begin{align}
\LM(\w_{m+1}) \leq \LM(\w_m) + \nabla \LM(\w_m)^T(\w_{m+1} - \w_m)+\frac{\nu_\LM}{2} \left\| \w_{m+1} - \w_m \right\|^2 .\nonumber
\end{align}
Taking expectation on both sides of the above equation, we get
\begin{align}
\EB [\LM(\w_{m+1})] \leq \LM(\w_m) - \eta \left\| \nabla \LM(\w_m) \right\|^2 +\frac{\nu_\LM \eta^2}{2} \EB (\left\| \p_{m,t} \right\|^2). \nonumber
\end{align}
According to Lemma~\ref{lemma3.7}, we have
\begin{align}
\EB [\LM(\w_{m+1})] \leq \LM(\w_m) - \eta \left\| \nabla \LM(\w_m) \right\|^2 +\frac{\nu_\LM \eta^2}{2} (d_m +  \left\| \nabla \LM(\w_m) \right\|^2). \nonumber
\end{align}
Then we have
\begin{align}
(\eta - \frac{\nu_\LM \eta^2}{2}) \left\| \nabla \LM(\w_m) \right\|^2 \leq \LM(\w_m) - \EB [\LM(\w_{m+1})] + \frac{\nu_\LM \eta^2}{2} d_m .
\end{align}
Choosing a small $\eta$ such that $\eta - \frac{\nu_\LM \eta^2}{2} > 0$, we have
\begin{align}
\left\| \nabla \LM(\w_m) \right\|^2 \leq \frac{1}{\eta - \frac{\nu_\LM \eta^2}{2}} (\LM(\w_m) - \EB [\LM(\w_{m+1})]) + \frac{\nu_\LM \eta}{2 - \nu_\LM \eta} d_m. \nonumber
\end{align}
\end{proof}

\begin{lemma}\label{lemma3.9} We have the following result:
\begin{align}
&\EB (\left\| \w_{m+1} - \x \right\|^2) + 2\eta \nabla \LM(\w_m)^T(\w_m - \x) + \frac{\eta}{1-\frac{\nu_\LM \eta}{2}} (\EB [\LM(\w_{m+1})] - \LM(\x)) \nonumber\\
\leq &\left\| \w_m - \x \right\|^2 + \frac{\eta}{1-\frac{\nu_\LM\eta}{2}} (\LM(\w_m) - \LM(\x)) + \frac{2\eta^2}{2-\nu_\LM\eta}d_m. \nonumber
\end{align}
\end{lemma}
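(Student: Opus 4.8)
The plan is to start from the projection-based inner update $\w_{m+1} = \pi_\XM(\w_m - \eta\,\p_{m,t})$ and exploit that $\x\in\XM$ together with the non-expansiveness of $\pi_\XM$. First I would reproduce, verbatim as in~(\ref{eq:wm_minus_x}) but with the constant step size $\eta$, the bound $\left\|\w_{m+1}-\x\right\|^2 \le \left\|\w_m-\x\right\|^2 - 2\eta\,\p_{m,t}^T(\w_m-\x) + \eta^2\left\|\p_{m,t}\right\|^2$. Taking the conditional expectation over the random index $i_m$ and using $\EB[\v_{m,t}]=\nabla f(\w_m)$ — so that $\EB[\p_{m,t}] = \nabla f(\w_m)+\A^T\b_{m,t} = \nabla\LM(\w_m)$, since $\b_{m,t}$ does not depend on $i_m$ — yields $\EB[\left\|\w_{m+1}-\x\right\|^2] \le \left\|\w_m-\x\right\|^2 - 2\eta\,\nabla\LM(\w_m)^T(\w_m-\x) + \eta^2\,\EB[\left\|\p_{m,t}\right\|^2]$.

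The second step is to control the variance term $\eta^2\,\EB[\left\|\p_{m,t}\right\|^2]$ using the two preceding lemmas. By Lemma~\ref{lemma3.7}, $\EB[\left\|\p_{m,t}\right\|^2] \le d_m + \left\|\nabla\LM(\w_m)\right\|^2$; then by Lemma~\ref{lemma3.8}, which applies because we assume $\eta - \frac{\nu_\LM\eta^2}{2} > 0$, $\left\|\nabla\LM(\w_m)\right\|^2 \le \frac{1}{\eta - \frac{\nu_\LM\eta^2}{2}}\bigl(\LM(\w_m) - \EB[\LM(\w_{m+1})]\bigr) + \frac{\nu_\LM\eta}{2-\nu_\LM\eta}d_m$. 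Substituting and simplifying the two fractional coefficients — using $\frac{\eta^2}{\eta - \frac{\nu_\LM\eta^2}{2}} = \frac{\eta}{1-\frac{\nu_\LM\eta}{2}}$ and $\eta^2 + \frac{\nu_\LM\eta^3}{2-\nu_\LM\eta} = \frac{2\eta^2}{2-\nu_\LM\eta}$ — I would obtain $\eta^2\,\EB[\left\|\p_{m,t}\right\|^2] \le \frac{\eta}{1-\frac{\nu_\LM\eta}{2}}\bigl(\LM(\w_m) - \EB[\LM(\w_{m+1})]\bigr) + \frac{2\eta^2}{2-\nu_\LM\eta}d_m$.

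Finally, I would plug this into the bound from the first step and rearrange: move $+2\eta\,\nabla\LM(\w_m)^T(\w_m-\x)$ and $+\frac{\eta}{1-\frac{\nu_\LM\eta}{2}}\EB[\LM(\w_{m+1})]$ to the left-hand side, and subtract $\frac{\eta}{1-\frac{\nu_\LM\eta}{2}}\LM(\x)$ from both sides, so that both $\LM$-differences become $\LM(\cdot)-\LM(\x)$. This gives exactly the claimed inequality. The only real care needed is the coefficient bookkeeping in the second step — checking that the $d_m$ contributions from Lemma~\ref{lemma3.7} and Lemma~\ref{lemma3.8} combine into precisely $\frac{2\eta^2}{2-\nu_\LM\eta}d_m$ and that the telescoping-friendly factor $\frac{\eta}{1-\frac{\nu_\LM\eta}{2}}$ emerges cleanly — but no genuine difficulty arises beyond this routine algebra.
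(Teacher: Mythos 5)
Your proposal is correct and follows essentially the same route as the paper's own proof: the projection non-expansiveness bound, taking expectation with $\EB[\p_{m,t}]=\nabla\LM(\w_m)$, invoking Lemma~\ref{lemma3.7} and Lemma~\ref{lemma3.8} to control $\eta^2\,\EB[\left\|\p_{m,t}\right\|^2]$, and the same coefficient bookkeeping yielding $\frac{\eta}{1-\frac{\nu_\LM\eta}{2}}$ and $\frac{2\eta^2}{2-\nu_\LM\eta}d_m$. No gaps.
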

\begin{proof}
From~(\ref{eq:w_m}), we can get $\forall \x$,
\begin{align}
\left\| \w_{m+1} - \x \right\|^2 \leq \left\| \w_m - \x \right\|^2 - 2\eta \p_{m,t}^T(\w_m - \x) + \eta^2 \left\| \p_{m,t} \right\|^2. \nonumber
\end{align}
Then, we have
\begin{align}
\EB ( \left\| \w_{m+1} - \x \right\|^2 ) \leq \left\| \w_m - \x \right\|^2 -  2\eta \nabla \LM(\w_m)^T(\w_m - \x) + \eta^2 \EB (\left\| \p_{m,t} \right\|^2). \nonumber
\end{align}
According to Lemma~\ref{lemma3.7} and Lemma~\ref{lemma3.8}, we have
\begin{align}
\EB (\left\| \w_{m+1} - \x \right\|^2 )&+ 2\eta \nabla \LM(\w_m)^T(\w_m - \x) \leq \left\| \w_m - \x \right\|^2 + \eta^2 d_m + \eta^2 \left\| \nabla \LM(\w_m) \right\|^2 \nonumber \\
&\leq \left\| \w_m - \x \right\|^2 + \eta^2 d_m + \frac{\eta}{1-\frac{\nu_\LM\eta}{2}} (\LM(\w_m) - \EB [\LM(\w_{m+1})]) + \frac{\nu_\LM\eta^3}{2-\nu_\LM\eta} d_m .\nonumber
\end{align}
Then, we can get
\begin{align}
&\EB (\left\| \w_{m+1} - \x \right\|^2) + 2\eta \nabla \LM(\w_m)^T(\w_m - \x) + \frac{\eta}{1-\frac{\nu_\LM \eta}{2}} (\EB [\LM(\w_{m+1})] - \LM(\x)) \nonumber\\
\leq &\left\| \w_m - \x \right\|^2 + \frac{\eta}{1-\frac{\nu_\LM\eta}{2}} (\LM(\w_m) - \LM(\x)) + \frac{2\eta^2}{2-\nu_\LM\eta}d_m . \nonumber
\end{align}
\end{proof}

\begin{lemma}\label{lemma3.10} Let  $\lambda_1$ denote the maximum eigenvalue of $\A^T\A$,
$\forall \w,\x,\y,\bbeta$,
\begin{align*}
\LM(\w)-\LM(\x) \geq f(\w)-f(\x) + (\A^T\b)^T(\w-\x) - \frac{\rho \lambda_1}{2}\left\| \w-\x \right\|^2,
\end{align*}
where $\b = \bbeta + \rho(\A\w+\B\y-\c)$.
\end{lemma}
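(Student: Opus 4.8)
The plan is to prove the statement by a direct algebraic expansion of the augmented Lagrangian; as will become apparent, the inequality is in fact an identity up to one step that discards a quadratic term. First I would write out $\LM$ as in~(\ref{eq:Lx}) with the iterate $(\y_t,\bbeta_t)$ replaced by the generic pair $(\y,\bbeta)$ appearing in the statement, i.e. $\LM(\x) = f(\x) + g(\y) + \bbeta^T(\A\x+\B\y-\c) + \frac{\rho}{2}\|\A\x+\B\y-\c\|^2$, and form the difference $\LM(\w)-\LM(\x)$. The term $g(\y)$ cancels, and what remains is $f(\w)-f(\x)$, the linear difference $\bbeta^T\A(\w-\x)$, and $\frac{\rho}{2}\left(\|\A\w+\B\y-\c\|^2 - \|\A\x+\B\y-\c\|^2\right)$.

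Next I would expand the quadratic part by writing $\A\w+\B\y-\c = (\A\x+\B\y-\c) + \A(\w-\x)$, which gives
\[
\tfrac{\rho}{2}\left(\|\A\w+\B\y-\c\|^2 - \|\A\x+\B\y-\c\|^2\right) = \rho\,(\A\x+\B\y-\c)^T\A(\w-\x) + \tfrac{\rho}{2}\|\A(\w-\x)\|^2 .
\]
Combining this with the linear term, and collecting the factors of $\w-\x$, yields
\[
\LM(\w)-\LM(\x) = f(\w)-f(\x) + \big(\A^T\bbeta + \rho\A^T(\A\x+\B\y-\c)\big)^T(\w-\x) + \tfrac{\rho}{2}\|\A(\w-\x)\|^2 .
\]

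The key step is to recognize that $\A^T\b = \A^T\bbeta + \rho\A^T(\A\w+\B\y-\c) = \A^T\bbeta + \rho\A^T(\A\x+\B\y-\c) + \rho\A^T\A(\w-\x)$, so that the middle term above equals $(\A^T\b)^T(\w-\x) - \rho\|\A(\w-\x)\|^2$. Substituting this produces the exact identity $\LM(\w)-\LM(\x) = f(\w)-f(\x) + (\A^T\b)^T(\w-\x) - \frac{\rho}{2}\|\A(\w-\x)\|^2$. Finally I would bound $\|\A(\w-\x)\|^2 = (\w-\x)^T\A^T\A(\w-\x) \le \lambda_1\|\w-\x\|^2$, using that $\lambda_1$ is the largest eigenvalue of $\A^T\A$, which gives the claimed lower bound. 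There is no genuine obstacle here beyond careful bookkeeping; the only point worth emphasizing is that the first-order term of $\LM$ at $\x$ together with the cross term from the penalty reassemble precisely into $(\A^T\b)^T(\w-\x)$, at the cost of the single quadratic correction $-\frac{\rho}{2}\|\A(\w-\x)\|^2$.
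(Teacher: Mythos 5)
Your proposal is correct and follows essentially the same route as the paper: both arguments expand the quadratic penalty, reassemble the linear terms into $(\A^T\b)^T(\w-\x)$ at the cost of the exact correction $-\frac{\rho}{2}\left\|\A(\w-\x)\right\|^2$, and finish with the eigenvalue bound $\left\|\A(\w-\x)\right\|^2 \le \lambda_1\left\|\w-\x\right\|^2$. The only difference is cosmetic (you expand around $\x$ and then re-center the linear term at $\w$, while the paper collects terms at $\w$ directly), so there is nothing to add.
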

\begin{proof}
According to the definition $\LM(\w) = f(\w) + g(\y) + \beta^T(\A\w + \B\y - \c) + \frac{\rho}{2}\left\| \A\w + \B\y -\c \right\|^2$, we have
\begin{align}
&\LM(\w)-\LM(\x) \nonumber \\
= &f(\w)-f(\x) + \bbeta^T\A(\w-\x) + \frac{\rho}{2}\left\| \A\w + \B\y -\c \right\|^2 - \frac{\rho}{2}\left\| \A\x + \B\y -\c \right\|^2 \nonumber \\
= &f(\w)-f(\x) + \bbeta^T\A(\w-\x) + \frac{\rho}{2}\left[\left\| \A\w \right\|^2 - \left\| \A\x \right\|^2 + 2(\B\y - \c)^T\A(\w-\x)\right] \nonumber \\
= &f(\w)-f(\x) + (\A^T(\bbeta + \rho(\B\y-\c))^T(\w-\x) + \frac{\rho}{2}(\left\| \A\w \right\|^2 - \left\| \A\x \right\|^2) \nonumber \\
= &f(\w)-f(\x) + (\A^T(\bbeta + \rho(\A\w+\B\y-\c))^T(\w-\x) - \frac{\rho}{2}\left\| \A\w-\A\x \right\|^2 \nonumber \\
= &f(\w)-f(\x) + (\A^T\b)^T(\w-\x) - \frac{\rho}{2}\left\| \A\w-\A\x \right\|^2 \nonumber \\
\geq &f(\w)-f(\x) + (\A^T\b)^T(\w-\x) - \frac{\rho \lambda_1}{2}\left\| \w-\x \right\|^2. \nonumber
\end{align}
\end{proof}

\begin{lemma}\label{lemma16}
If $f(\w)$ is strongly convex and $\mu_f > \rho \lambda_1$, we have the following result
\begin{align}\label{eq:convergencXstrongConvex}
\EB \left[ f(\x_{t+1}) - f(\x) + (\A^T \balpha_{t+1})^T(\x_{t+1} - \x) \right] \leq \frac{\mu_f}{4} (\left\| \x_t - \x \right\|^2 - \EB \left\| \x_{t+1} - \x \right\|^2),
\end{align}
where $\balpha_{t+1}$ is the same as that in Lemma~\ref{lemma:convergenceX}.
\end{lemma}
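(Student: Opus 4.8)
The plan is to observe that, since $\z_t = \nabla f(\x_t) = \nabla f(\w_0)$, the inner loop of Algorithm~\ref{alg:SCAS-ADMM2} is an SVRG epoch run on the function $\LM(\cdot)=L(\cdot,\y_t,\bbeta_t)$ of~(\ref{eq:Lx}) --- indeed $\p_{m,t}=\nabla\LM_{i_m}(\w_m)-\nabla\LM_{i_m}(\w_0)+\nabla\LM(\w_0)$ --- where $\LM$ is $\nu_\LM$-smooth (Lemma~\ref{lemma:LipschitzLMx}) and, being $f$ plus a convex quadratic, is $\mu_\LM$-strongly convex with $\mu_\LM\ge\mu_f>0$. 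The weighted iterate $\widetilde{\w}_{m+1}=\frac{1}{2\eta}(r\w_m+s\w_{m+1})$ is designed so that the epoch error telescopes into a contraction of a squared distance, which is precisely the form of the claimed bound. First, apply Lemma~\ref{lemma3.10} with $\w=\x_{t+1}$, $\y=\y_t$, $\bbeta=\bbeta_t$ (so that $\b=\balpha_{t+1}$) to get $f(\x_{t+1})-f(\x)+(\A^T\balpha_{t+1})^T(\x_{t+1}-\x)\le\LM(\x_{t+1})-\LM(\x)+\frac{\rho\lambda_1}{2}\|\x_{t+1}-\x\|^2$. Hence it suffices to show $\EB[\Psi(\x_{t+1})]\le\frac{\mu_f}{4}\|\x_t-\x\|^2$ for the \emph{convex} auxiliary function $\Psi(\w):=\LM(\w)-\LM(\x)+\left(\frac{\rho\lambda_1}{2}+\frac{\mu_f}{4}\right)\|\w-\x\|^2$; the extra $\frac{\mu_f}{4}\|\cdot-\x\|^2$ is what later produces the $-\frac{\mu_f}{4}\EB\|\x_{t+1}-\x\|^2$ on the right-hand side, and note $s\left(\frac{\rho\lambda_1}{2}+\frac{\mu_f}{4}\right)=1-\alpha$.

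The conditions on $\eta$ force $\nu_\LM\eta$ to be small (from $(4\nu_\LM^2+\frac{\mu_f\nu_\LM}{2})\eta+\rho\lambda_1\le\mu_\LM\le\nu_\LM$ one gets $\nu_\LM\eta\le\tfrac14$), so $r=2\eta-s\ge0$, $s\ge0$, $r+s=2\eta$; therefore $\widetilde{\w}_{m+1}=\frac{r}{2\eta}\w_m+\frac{s}{2\eta}\w_{m+1}$ is a convex combination and $\x_{t+1}=\frac{1}{M}\sum_{m=0}^{M-1}\widetilde{\w}_{m+1}$, so two applications of Jensen's inequality give $\Psi(\x_{t+1})\le\frac{1}{2\eta M}\sum_{m=0}^{M-1}(r\Psi(\w_m)+s\Psi(\w_{m+1}))$. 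For the ``$\LM$-part'' I would start from Lemma~\ref{lemma3.9} (which already bundles the descent and the SVRG variance bound, noting $\frac{2\eta^2}{2-\nu_\LM\eta}=\eta s$ and $\frac{\eta}{1-\nu_\LM\eta/2}=s$): writing $2\eta=r+s$ in its inner-product term and lower-bounding $\nabla\LM(\w_m)^T(\w_m-\x)$ by $\mu_\LM$-strong convexity cancels the $s(\LM(\w_m)-\LM(\x))$ contributions and yields, in total expectation,
\begin{align*}
&\EB\left[r(\LM(\w_m)-\LM(\x))+s(\LM(\w_{m+1})-\LM(\x))\right]+\EB\|\w_{m+1}-\x\|^2 \\
&\qquad\le(1-\eta\mu_\LM)\,\EB\|\w_m-\x\|^2+\eta s\,\EB[d_m],
\end{align*}
and I would then bound $d_m\le2\nu_\LM^2(\|\w_m-\x\|^2+\|\w_0-\x\|^2)$ by Lemma~\ref{lemma3.7}, recalling $\w_0=\x_t$ (so $d_0=0$).

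Summing this recursion over $m=0,\dots,M-1$ and feeding it into the Jensen estimate, the $\EB\|\w_{m+1}-\x\|^2$ telescope and leave a decrement $\eta\mu_\LM\|\w_m-\x\|^2$ at each interior index $m$; this decrement must dominate the residual $\|\w_m-\x\|^2$ terms coming from $\eta s\,d_m$ and from the $\left(\frac{\rho\lambda_1}{2}+\frac{\mu_f}{4}\right)(r+s)\|\w_m-\x\|^2$ produced by the quadratic part of $\Psi$, which is exactly what the condition $(4\nu_\LM^2+\frac{\mu_f\nu_\LM}{2})\eta+\rho\lambda_1\le\mu_\LM$ guarantees (using $s\le2\eta$). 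The terms that survive are an $\alpha\|\x_t-\x\|^2$ contribution from the $m=0$ boundary --- this is where the constant $\alpha=1-\frac{\rho\lambda_1 s}{2}-\frac{\mu_f s}{4}$ gets pinned down --- together with the accumulated $2\nu_\LM^2\eta s M\|\x_t-\x\|^2$ coming from $d_m$. Dividing by $2\eta M$ yields $\EB[\Psi(\x_{t+1})]\le\left(\frac{\alpha}{2M\eta}+\frac{2\nu_\LM^2\eta}{2-\nu_\LM\eta}\right)\|\x_t-\x\|^2$, and the third condition $\frac{\alpha}{2M\eta}+\frac{2\nu_\LM^2\eta}{2-\nu_\LM\eta}\le\frac{\mu_f}{4}$ closes the argument.

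I expect the main obstacle to be precisely this last step: one must keep scrupulous track of every residual quadratic term --- those from the SVRG variance estimate $d_m$, from the relaxation $\|\A\cdot\|^2\le\lambda_1\|\cdot\|^2$ used in Lemma~\ref{lemma3.10}, and from the $\frac{\mu_f}{4}\|\cdot-\x\|^2$ folded into $\Psi$ --- and show each is either annihilated by the strong-convexity decrement or collapses onto $\|\x_t-\x\|^2$. This balancing act is exactly what forces the three stated conditions on $\eta$ and $M$ (together with the particular weights $r,s$ and the constant $\alpha$), and it is also where the hypotheses that $f$ is strongly convex and that $\mu_f>\rho\lambda_1$ are used.
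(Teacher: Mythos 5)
Your overall architecture is the paper's: Lemma~\ref{lemma3.9} for the inner-loop recursion, strong convexity to create a contraction, the weighted iterate $\widetilde{\w}_{m+1}=\frac{1}{2\eta}(r\w_m+s\w_{m+1})$ with two applications of Jensen, the same constant $\alpha$, and the same three parameter conditions. Your reduction to the auxiliary function $\Psi$ is a legitimate repackaging: the paper instead carries $D_m+\frac{\mu_f}{4}\left\|\w_m-\x\right\|^2$ with $D_m=f(\w_m)-f(\x)+(\A^T\b_{m,t})^T(\w_m-\x)$ through the recursion and invokes Lemma~\ref{lemma3.10} at each inner index $m$ rather than once at $\x_{t+1}$. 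The two bookkeepings would be interchangeable if the constants matched --- but they do not.

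The concrete failure is in your telescoping step. You apply $\mu_\LM$-strong convexity to the whole term $2\eta\nabla\LM(\w_m)^T(\w_m-\x)=(r+s)\nabla\LM(\w_m)^T(\w_m-\x)$, so your per-step decrement is $\eta\mu_\LM\left\|\w_m-\x\right\|^2$ while the quadratic part of $\Psi$ costs $\left(\frac{\rho\lambda_1}{2}+\frac{\mu_f}{4}\right)(r+s)=\left(\rho\lambda_1+\frac{\mu_f}{2}\right)\eta$ per step. Your telescoping therefore requires $\rho\lambda_1+\frac{\mu_f}{2}+2\nu_\LM^2 s\le\mu_\LM$, which is \emph{not} implied by the stated condition $(4\nu_\LM^2+\frac{\mu_f\nu_\LM}{2})\eta+\rho\lambda_1\le\mu_\LM$: that condition only provides $\frac{\mu_f\nu_\LM\eta}{2}\le\frac{\mu_f}{8}$ of slack for the $\mu_f$-term, not $\frac{\mu_f}{2}$. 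In the worst case $\mu_\LM=\mu_f$ (singular $\A^T\A$), your route forces $\mu_f>2\rho\lambda_1$, strictly stronger than the lemma's hypothesis $\mu_f>\rho\lambda_1$. The paper avoids this by treating the two halves of $2\eta=r+s$ asymmetrically: the $s$-weighted function-value term passes through Lemma~\ref{lemma3.10} (costing $\frac{\rho\lambda_1 s}{2}$), but the $r$-weighted gradient term is handled via $r\nabla\LM(\w_m)^T(\w_m-\x)=r(\nabla f(\w_m)+\A^T\b_{m,t})^T(\w_m-\x)\ge r\left(D_m+\frac{\mu_f}{2}\left\|\w_m-\x\right\|^2\right)$, i.e.\ via strong convexity of $f$ itself, so this piece \emph{donates} $\frac{r\mu_f}{2}\left\|\w_m-\x\right\|^2$ rather than costing $r\left(\frac{\rho\lambda_1}{2}+\frac{\mu_f}{4}\right)\left\|\w_m-\x\right\|^2$. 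The net $\mu_f$-cost then collapses to $\frac{\mu_f}{4}(s-r)=\frac{\mu_f\nu_\LM\eta s}{4}$, which is exactly what the stated condition covers. To repair your argument, keep the decomposition $\nabla\LM=\nabla f+\A^T\b_{m,t}$ on the $r$-portion instead of folding everything into $\Psi$.
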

\begin{proof}
Note that
\begin{align*}
r &\triangleq 2\eta - \frac{\eta}{1-\frac{\nu_\LM\eta}{2}} ,\\
s &\triangleq  \frac{\eta}{1-\frac{\nu_\LM\eta}{2}}.
\end{align*}

Since $f(\w)$ is strongly convex in $\w$, we can prove that $\LM(\w)$ is also strongly convex in $\w$. Then we have
\begin{align}\label{eq:LMstrongConvex}
\nabla \LM(\w_m)^T(\w_m - \x) \geq \LM(\w_m) - \LM(\x) + \frac{\mu_\LM}{2}\left\| \w_m-\x \right\|^2.
\end{align}

By combining the results in Lemma~\ref{lemma3.9} and (\ref{eq:LMstrongConvex}), we have
\begin{align}
&\EB \left\| \w_{m+1} - \x \right\|^2 + \frac{\mu_\LM s}{2} \left\| \w_m - \x \right\|^2 + r \nabla \LM(\w_m)^T(\w_m - \x) + s (\EB [\LM(\w_{m+1})] - \LM(\x)) \nonumber\\
\leq &\left\| \w_m - \x \right\|^2 + \frac{2\eta^2}{2-v_L\eta} d_m. \nonumber
\end{align}

For convenience, we use
\begin{align*}
D_m &\triangleq f(\w_m) - f(\x) + (\A^T \b_{m,t})^T(\w_m - \x) ,
\end{align*}
where the definition of $\b_{m,t}$ is in~(\ref{eq:b_mt}).

And we also have
\begin{align}
&\nabla \LM(\w_m) = \nabla f(\w_m) + \A^T \b_{m,t}, \nonumber \\
&\nabla f(\w_m)^T(\w_m - \x) \geq f(\w_m) - f(\x) + \frac{\mu_f}{2}\left\| \w_m-\x \right\|^2. \nonumber
\end{align}

Then according to Lemma~\ref{lemma3.7} and Lemma~\ref{lemma3.10}, we can get
\begin{align}
&(1 - \frac{\rho s \lambda_1}{2} - \frac{\mu_f s}{4})\EB (\left\| \w_{m+1} - \x \right\|^2 )+ \frac{\mu_\LM s}{2} \left\| \w_m - \x \right\|^2\nonumber \\
&+ r (D_m + \frac{\mu_f}{2} \left\| \w_m - \x \right\|^2) + s \EB (D_{m+1} + \frac{\mu_f}{4} \left\| \w_{m+1} - \x \right\|^2) \nonumber \\
\leq &(1+ \frac{4\nu_\LM^2\eta^2}{2-\nu_\LM \eta})\left\| \w_m - \x \right\|^2 + \frac{4\nu_\LM^2\eta^2}{2-\nu_\LM \eta} \left\| \w_0 - \x \right\|^2, \nonumber
\end{align}
i.e.,
\begin{align}
&(1 - \frac{\rho s \lambda_1}{2} - \frac{\mu_f s}{4})\EB (\left\| \w_{m+1} - \x \right\|^2) \nonumber \\
&+ r (D_m + \frac{\mu_f}{4} \left\| \w_m - \x \right\|^2) + s \EB (D_{m+1} + \frac{\mu_f}{4} \left\| \w_{m+1} - \x \right\|^2) \nonumber \\
\leq &(1+ \frac{4\nu_\LM^2\eta^2}{2-\nu_\LM\eta} - \frac{\mu_f r}{4} -  \frac{\mu_L s}{2}) \left\| \w_m - \x \right\|^2 +  \frac{4\nu_\LM^2\eta^2}{2-\nu_\LM\eta} \left\| \w_0 - \x \right\|^2. \nonumber
\end{align}
Here, $\eta$ need to satisfy the following condition:
\begin{align}
1+ \frac{4\nu_\LM^2\eta^2}{2-\nu_\LM\eta} - \frac{\mu_f r}{4} -  \frac{\mu_\LM s}{2} \leq 1 - \frac{\rho s \lambda_1}{2} - \frac{\mu_f \s}{4}, \nonumber
\end{align}
i.e.,
\begin{align}
(4\nu_\LM^2 + \frac{\mu_f \nu_{\LM}}{2}) \eta + \rho \lambda_1 \leq \mu_{\LM}. \nonumber
\end{align}

Let $\alpha = 1 - \frac{\rho \lambda_1 s}{2} - \frac{\mu_f s}{4}$. We have
\begin{align}
&\alpha \EB \left\| \w_{m+1} - \x \right\|^2 + r \EB (D_m + \frac{\mu_f}{4} \left\| \w_m - \x \right\|^2) + s \EB (D_{m+1} + \frac{\mu_f}{4} \left\| \w_{m+1} - \x \right\|^2) \nonumber \\
\leq &\alpha \EB \left\| \w_m - \x \right\|^2 +  \frac{4\nu_\LM^2\eta^2}{2-\nu_\LM\eta} \left\| \w_0 - \x \right\|^2. \nonumber
\end{align}

Note that $r + s = 2\eta$, and $D_m$ is convex in $\w_m$. We take
$$\widetilde{\w}_{m+1} = \frac{1}{2\eta}(r \w_m + s \w_{m+1}),$$
which is a convex combination of $\w_{m}$ and $\w_{m+1}$.
Then we have
\begin{align}\label{eq:convexCombination}
&\alpha \EB \left\| \w_{m+1} - \x \right\|^2 + 2\eta \EB (\widetilde{D}_{m+1} + \frac{\mu_f}{4} \left\| \widetilde{\w}_{m+1} - \x \right\|^2)
\leq  \alpha \EB \left\| \w_m - \x \right\|^2 + \frac{4\nu_\LM^2\eta^2}{2-\nu_\LM\eta} \left\| \w_0 - \x \right\|^2.
\end{align}
where $\widetilde{D}_{m+1} = f(\widetilde{\w}_{m+1}) - f(\x) + (\A^T \widetilde{\b}_{m+1,t})^T(\widetilde{\w}_{m+1} - \x)$, and $\widetilde{\b}_{m+1,t} = \bbeta + \rho(\A\widetilde{\w}_{m+1}+\B\y-\c)$.

Summing up~(\ref{eq:convexCombination}) from $m=0$ to $M-1$, and taking $\x_{t+1} = \frac{1}{M} \sum_{m=0}^{M-1} \widetilde{\w}_{m+1}$, we have
\begin{align}
&2M\eta \EB (f(\x_{t+1}) - f(\x) + (\A^T \balpha_{t+1})^T(\x_{t+1} - \x) + \frac{\mu_f}{4} \left\| \x_{t+1} - \x \right\|^2)\nonumber \\
\leq & (\alpha + \frac{4M\nu_\LM^2\eta^2}{2-\nu_\LM\eta}) \left\| \w_0 - \x \right\|^2, \nonumber
\end{align}
where $ \balpha_{t+1} = \bbeta + \rho(\A\x_{t+1}+\B\y-\c)$.

Then, we have
\begin{align}
 &\EB \left[ f(\x_{t+1}) - f(\x) + (\A^T \balpha_{t+1})^T(\x_{t+1} - \x) \right] \nonumber \\
 \leq &(\frac{\alpha}{2M\eta}+\frac{2\nu_\LM^2\eta}{2-\nu_\LM\eta})\left\| \x_t - \x \right\|^2 - \frac{\mu_f}{4} \EB (\left\| \x_{t+1} - \x \right\|^2) \nonumber \\
 \leq & \frac{\mu_f}{4} (\left\| \x_t - \x \right\|^2 - \EB (\left\| \x_{t+1} - \x \right\|^2)), \nonumber
\end{align}
where we assume that $\frac{\alpha}{2M\eta}+\frac{2\nu_\LM^2\eta}{2-v_L\eta} \leq \frac{\mu_f}{4}$.
\end{proof}

\section{Proof of Theorem~\ref{theorem:convergenceStronglyConvex}}
\begin{proof}
Let $\u = \left(\begin{array}{c}\x\\\y\\\balpha\\\end{array}\right)$,  $\u_t = \left(\begin{array}{c}\x_t\\\y_t\\\balpha_t\\\end{array}\right)$,  $\bar{\u}_T = \frac{1}{T}\sum_{t=1}^T \u_t$, and $F(\u) = \left(\begin{array}{c}\A^T\balpha\\\B^T \balpha\\-(\A\x+\B\y-\c)\\\end{array}\right)$.

Summing up the equations in~(\ref{eq:convergencXstrongConvex}), (\ref{eq:convergenceY}) and~(\ref{eq:convergenceAlpha}), we have:
\begin{align}
       &\EB \left[ P(\x_{t+1},\y_{t+1})-P(\x,\y) + F(\u_{t+1})^T(\u_{t+1} - \u) \right] \nonumber \\
\leq & \frac{\mu_f}{4} (\left\| \x_t - \x \right\|^2 - \EB \left\| \x_{t+1} - \x \right\|^2)\nonumber \\
       &+ \frac{\rho}{2}\EB \left[ \left\| \y_t-\y \right\|_\H^2-\left\| \y_{t+1}-\y \right\|_\H^2-\left\| \y_t-\y_{t+1} \right\|_\H^2 \right]\nonumber \\
       &+ \frac{1}{2\rho}\EB \left[ \left\| \bbeta_t - \balpha \right\|^2 -\left\| \bbeta_{t+1} - \balpha \right\|^2 \right] + \frac{\rho}{2}\EB \left[ \left\| \y_t - \y_{t+1} \right\|_\H^2 \right]. \nonumber
\end{align}
It is easy to prove that $ P(\x_{t},\y_{t})-P(\x,\y) + F(\u_{t})^T(\u_{t} - \u)$ is convex in $(\x_t,\y_t)$. Moreover, we have $\bar{\x}_T = \frac{1}{T}\sum_{t=1}^T \x_t$ and $\bar{\y}_T = \frac{1}{T}\sum_{t=1}^T \y_t$. By using the Jensen's inequality, we have:
\begin{align}
&P(\bar{\x}_T,\bar{\y}_T)-P(\x,\y) + F(\bar{\u}_T)^T(\bar{\u}_T - \u) \nonumber \\
\leq &\frac{1}{T} \sum_{t=1}^T \left[P(\x_t,\y_t)-P(\x,\y) + F(\u_t)^T(\u_t - \u) \right].\nonumber
\end{align}
Then, we have:
\begin{align}\label{eq:convergenceAllStrongConvex}
&\EB \left[ P(\bar{\x}_T,\bar{\y}_T)-P(\x,\y) + F(\bar{\u}_T)^T(\bar{\u}_T - \u) \right] \nonumber \\
\leq &\frac{1}{T} \sum_{t=0}^{T-1} \EB \left[P(\x_{t+1},\y_{t+1})-P(\x,\y) + F(\u_{t+1})^T(\u_{t+1} - \u) \right] \nonumber \\
\leq &\frac{\mu_f}{4T} \left\| \x_0 - \x \right\|^2 + \frac{\rho}{2T} \left\| \y_0 - \y \right\|_\H^2 + \frac{1}{2\rho T} \left\| \bbeta_0 - \balpha \right\|^2.
\end{align}
The result in~(\ref{eq:convergenceAllStrongConvex}) is satisfied for any $(\x,\y,\balpha)$. In particular, if we take $\x=\x_*$, $\y=\y_*$ and $\balpha  = \gamma \frac{\A\bar{\x}_T+\B\bar{\y}_T-\c}{\left\| \A\bar{\x}_T+\B\bar{\y}_T-\c \right\|}$, we have:
\begin{align}
&\EB \left[ P(\bar{\x}_T,\bar{\y}_T)-P(\x_*,\y_*) + \gamma \left\| \A\bar{\x}_T+\B\bar{\y}_T-\c \right\| \right] \nonumber \\
\leq &\frac{\mu_f}{4} \left\| \x_0 - \x_* \right\|^2 + \frac{\rho}{2T} \left\| \y_0 - \y_* \right\|_\H^2 + \frac{1}{\rho T}( \left\| \bbeta_0 \right\|^2 + \gamma^2).\nonumber
\end{align}
\end{proof}

\end{document}